\documentclass[11pt]{article}
\usepackage{epsf,latexsym,amsfonts,amsbsy}
\usepackage{abstract,amsmath, amssymb, amsthm,amscd,amsxtra}
\usepackage{graphicx}
\usepackage{xcolor}
\usepackage{yhmath}
\usepackage{algorithm}
\usepackage{algorithmicx}
\usepackage{algpseudocode}
\usepackage{tikz}
\usetikzlibrary{calc}
\usetikzlibrary{intersections,decorations.markings}
\usepackage{booktabs}
\usepackage{threeparttable}
\usepackage{pgfplots}
\pgfplotsset{compat=1.17}
\usepackage{caption}
\usepackage{hyperref} 
\usepackage{booktabs}
\usepackage[percent]{overpic}
\usepackage{multirow} 
\usepackage{enumitem}
\usepackage{bm}
\usepackage{indentfirst}
\usetikzlibrary{arrows.meta,shapes.geometric}
\hypersetup{
	colorlinks=true,
	linkcolor=blue,
	anchorcolor=blue,
	citecolor=blue}

\usepackage[a4paper,left=1.6cm,right=1.6cm,top=2.0cm,bottom=2.0cm]{geometry}

\makeatletter 

\@addtoreset{equation}{section}

\@addtoreset{figure}{section}

\@addtoreset{table}{section}
\makeatother 

\newtheorem{theorem}{Theorem}[section]
\newtheorem{assumption}[theorem]{Assumption}
\newtheorem{definition}[theorem]{Definition}

\newtheorem{corollary}[theorem]{Corollary}

\newtheorem{lemma}[theorem]{Lemma}

\newtheorem{remark}[theorem]{Remark}

\newlength{\bibitemsep}
\newlength{\bibparskip}
\let\oldthebibliography\thebibliography
\renewcommand\thebibliography[1]{%
	\oldthebibliography{#1}%
	\setlength{\parskip}{\bibitemsep}%
	\setlength{\itemsep}{\bibparskip}%
}
\usepackage{graphicx}
\usepackage{subcaption}

\title{\textbf{SRRM: Improving Recursive Transport Surrogates in the Small-Discrepancy Regime} }

\author{Yufei Zhang, Tao Wang, Jingyi Zhang\thanks{Y.~Zhang is affiliated with the School of Mathematical Sciences, Beijing University of Posts and Telecommunications, Beijing, China.
		J.~Zhang is affiliated with the School of Mathematical Sciences, Beijing University of Posts and Telecommunications, and with the Key Laboratory of Mathematics and Information Networks (Beijing University of Posts and Telecommunications), Ministry of Education, Beijing, China.
		T.~Wang is affiliated with the Institute of Statistics and Big Data, Renmin University of China, Beijing, China. Correspondence: J.~Zhang .Email: jyzhang2024@bupt.edu.cn}}
\date{}

\begin{document}
	\maketitle
	
	\begin{abstract}
	Recursive partitioning methods provide computationally efficient surrogates for the Wasserstein distance, yet their statistical behavior and their resolution in the small-discrepancy regime remain insufficiently understood. We study Recursive Rank Matching (RRM) as a representative instance of this class under a population-anchored reference. In this setting, we establish consistency and an explicit convergence rate for the anchored empirical RRM under the \(L_2\) cost. We then identify a dominant mismatch mechanism responsible for the loss of resolution in the small-discrepancy regime. Based on this analysis, we introduce Selective Recursive Rank Matching (SRRM), which suppresses the resulting dominant mismatches and yields a higher-fidelity practical surrogate for the Wasserstein distance at moderate additional computational cost.
			
	\end{abstract}
	
	{\bf Keywords} 
	Distribution comparison, Optimal transport, recursive partitioning

	\section{Introduction}

	   \indent Optimal Transport provides a unified theoretical framework for comparing and aligning probability distributions\cite{monge1781,kantorovich2006problemofmonge}, and has been widely adopted in tasks such as point cloud registration\cite{kolouri2019gsw,wang2025gaussian}, shape interpolation\cite{bonneel2011,mccann1997}, and color transfer\cite{refRabin2014,refMeng2019}. However, classical OT typically requires solving large-scale optimization problems, and its computational cost grows rapidly with the sample size\cite{peyre2019ot,zhang2021review,li2023gw,zhang2023projection}. To improve scalability, existing approaches introduce approximations such as entropic regularization\cite{cuturi2013sinkhorn,li2023importance} and sliced methods\cite{bonneel2015sliced,li2023screening}. Nevertheless, these methods often involve a pronounced trade-off among speed, accuracy, and stability, and they remain challenging in applications that require high-precision maps.
	   
	   In recent years, a class of fast OT approximation approaches based on
	   recursive partitioning has attracted increasing attention. The core idea is to recursively bisect the space, for example via an axis-recursive mass-median partition. At the population level, this recursive construction defines an encoding through the left/right branch paths on the resulting partition tree, thereby inducing a structured correspondence between two measures. In empirical implementations, this correspondence is realized by matching samples one-to-one within cells at the same depth, yielding the No-collision Transportation Map (NCTM) proposed in \cite{ref10}. Building on this construction, \cite{ref11} further introduced the corresponding No-collision Transportation Map Distance (NCTMD).
	   
	   Space-filling-curve methods also draw on the idea of recursive partitioning. For instance, the Hilbert Curve Projection (HCP) distance\cite{li2024hcp,li2025hyperbolic,wang2026himap} maps high dimensional samples to a one dimensional sequence via a Hilbert space-filling curve, and then constructs an approximate transport plan along this sequence to estimate the transport cost. In practice and in subsequent related work, its recursive partition structure can be further exploited to develop faster curve construction and traversal algorithms\cite{ref18,ref68,ref69}, thereby improving computational efficiency and enhancing empirical robustness while maintaining approximation quality.

	   BSP-OT\cite{ref12} further advances this paradigm by merging transport maps from multiple partitioning schemes, yielding a more accurate map and reducing the bias of any single partition. Yet two key issues remain open. First, the empirical use of recursive partitioning methods still calls for a basic statistical justification. Second, the mechanism by which these methods lose accuracy in the small-discrepancy regime remains unclear.
	   
	   To address the first question in a clean and analytically tractable setting, we study \emph{Recursive Rank Matching} (RRM), a representative recursive partitioning surrogate. Whereas more general recursive partitioning methods may allow flexible choices of splitting directions and quantile thresholds, RRM adopts axis-recursive partitioning with equal-mass bisection, yielding a particularly transparent and computationally attractive construction. Under a population-anchored reference, we prove consistency and derive an explicit convergence rate for the anchored empirical RRM under the \(L_2\) cost.
	   
	   To address the second question, we identify the dominant mismatch mechanism that limits the performance of recursive partitioning methods in the small-discrepancy regime. In this regime, small perturbations near early partition boundaries can destabilize the recursive addresses and amplify local mismatches in the resulting alignment. We refer to this phenomenon as the \emph{last-mile problem}: the difficulty of producing a reliable transport-type alignment when two distributions are very close.
	   
	   As illustrated in Figure~\ref{fig:five-in-a-row1}, when two sampled distributions are very close, the ideal transport is dominated by short-range local correspondences. Recursive partitioning methods, however, may fail to preserve these nearby correspondences because the induced alignment is constrained by recursive addresses. Consequently, some points are matched over relatively long distances even when much closer counterparts are available. Among the methods discussed above, this phenomenon is particularly pronounced in HCP, which relies on a single partition induced by the curve projection, whereas BSP can partially alleviate it by aggregating maps obtained from multiple partitions.
	   
	   \begin{figure}[t]
	   	\centering
	   	\begin{subfigure}{0.3\textwidth}
	   		\centering
	   		\includegraphics[width=\linewidth]{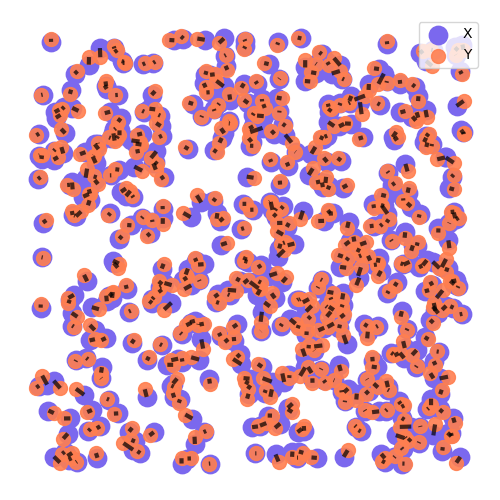}
	   		\caption{TRUE map}
	   	\end{subfigure}
	   	\begin{subfigure}{0.3\textwidth}
	   		\centering
	   		\includegraphics[width=\linewidth]{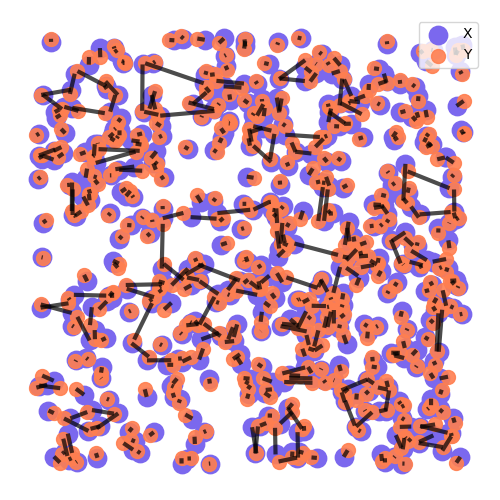}
	   		\caption{HCP}
	   	\end{subfigure}
	   	\begin{subfigure}{0.3\textwidth}
	   		\centering
	   		\includegraphics[width=\linewidth]{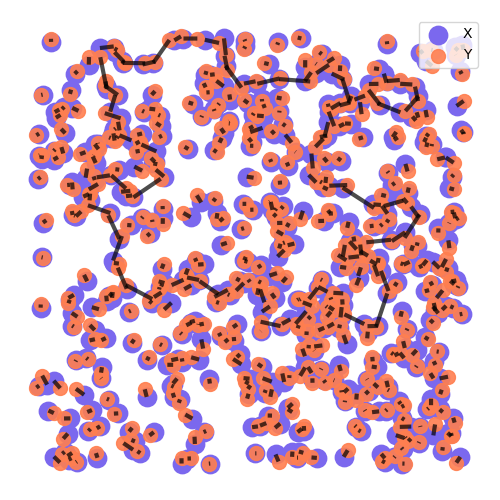}
	   		\caption{BSP}
	   	\end{subfigure}
	   	
	   	\caption{The \emph{last-mile} phenomenon of recursive partitioning methods.}
	   	\label{fig:five-in-a-row1}
	   \end{figure}
	   
	   Guided by this analysis, we propose SRRM, a selection-based refinement framework that suppresses dominant last-mile mismatches and yields a higher-quality transport coupling. In the present paper, we instantiate SRRM using repeated RRM maps, although the framework is applicable more broadly to recursive-partition methods that produce transport-type maps.Across multiple benchmark datasets and experimental settings, SRRM consistently outperforms competing baselines, demonstrating improved matching accuracy, stability, and robustness.

	   The remainder of this paper is organized as follows.
	   Section~\ref{sec:Related Works} briefly reviews related work and empirically examines potential shortcomings.
	   Section~\ref{sec:Proposed Method} introduces the proposed RRM distance and the associated theoretical analysis.
	   Section~\ref{sec:The last-mile phenomenon and Numerical Implementation of SRRM} analyzes the last-mile phenomenon and presents the numerical implementation of SRRM.
	   Section~\ref{sec:Simulation} presents numerical simulations that illustrate and validate the theoretical results and complexity analysis developed earlier.
	   Section~\ref{sec:EXPERIMENTS} offers a detailed experimental evaluation of SRRM, demonstrating its performance and effectiveness.
	   Section~\ref{sec:CONCLUSION} discusses the limitations of our approach and outlines directions for future research.

	   \paragraph{Notation}
	   $P_\infty(\mathbb{R}^d)$ denotes the set of Borel probability measures on $\mathbb{R}^d$ with bounded support.
	   For a measure $\mu$, $\operatorname{supp}(\mu)$ denotes its support.
	   The symbol $\delta_x$ stands for the Dirac measure at $x$.
	   $\mathrm{Unif}[0,1]$ denotes the uniform distribution on $[0,1]$, $\lambda$ the Lebesgue measure on $[0,1]$.

\section{Related work}
\label{sec:Related Works}

Optimal transport has a rich theoretical foundation and diverse methodological tools. 
We briefly review fast surrogates for the Wasserstein distance and compare our method with two prominent families: slicing-based methods and spatial recursive-partitioning methods, including HCP and BSP.

\subsection{Sliced Wasserstein (SW) distance}

Sliced Wasserstein (SW)\cite{bonneel2015sliced} distance can be understood through the simplicity of discrete optimal transport in one dimension. For $d$-dimensional point sets, the sliced transportation distance is defined as the average of the one-dimensional transport costs obtained by projecting the points onto random 1D lines.
To enhance discriminative power, projection-selection variants such as max-sliced Wasserstein (Max-SW)\cite{deshpande2019maxsliced} optimize over directions.
Moreover, some methods employ nonlinear projections to capture the complex structure of data distributions, such as generalized sliced Wasserstein (GSW)\cite{kolouri2019gsw}.

However, in some cases, SW-based methods may fail to faithfully reflect the true Wasserstein distance. 
Using the Gaussian mixture distributions in Fig.~\ref{fig:two_big11}(a) as an example, we fix the source distribution (shown in blue) and displace one Gaussian component of the target distribution (shown in orange) from $(0.5,0.5)$ along the direction indicated in the figure; the offset parameter $\alpha \in [0,0.4]$ controls the displacement magnitude. 
Fig.~\ref{fig:two_big11}(b) illustrates how various distance metrics vary with respect to $\alpha$. 
Empirically, as $\alpha$ increases, the trends of SW and its variants can, in some regimes, even be opposite to that of the true Wasserstein distance.

\begin{figure}[htbp]
	\centering
	\captionsetup{justification=raggedright,singlelinecheck=false}
	\begin{overpic}[width=\linewidth]{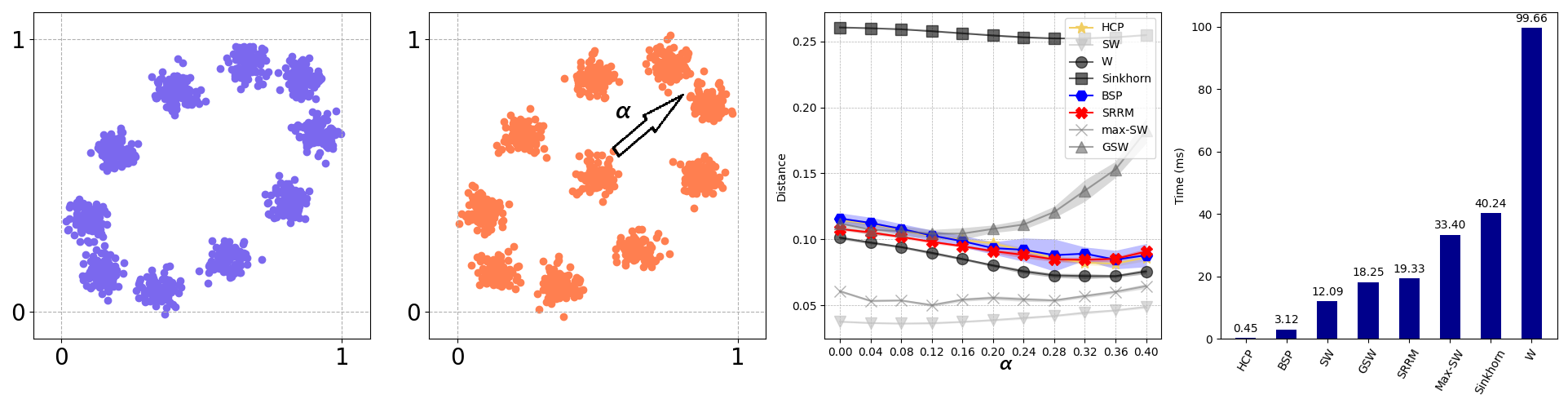}
		\put(25,0){\small (a)}
		\put(65,0){\small (b)}
		\put(85,0){\small (c)}
		
	\end{overpic}
	
	\caption{(a) Samples of the source and target distributions. 
		(b) Illustrations of how various distance metrics change as $\alpha$ increases. 
		(c) Comparisons of different distance metrics in terms of runtime.}
	\label{fig:two_big11}
\end{figure}

\subsection{Hilbert curve projection (HCP) distance}

Hilbert Curve Projection (HCP) distance \cite{li2024hcp} further applies Hilbert space-filling curves to map samples in high dimensions to a sequence in one dimension on which an approximate transport plan can be constructed to estimate the transport cost. Its effectiveness is often attributed to the locality-preserving property\cite{he_owen2016_extensible_grids,zumbusch2012_parallel_multilevel} of the Hilbert curve. However, this same property also requires high-dimensional points to be ordered according to their Hilbert-based locality when mapped into a one-dimensional sequence. 
	
	Taking the two-dimensional case as an example, the Hilbert space-filling curve is illustrated in Fig.~\ref{fig:example2}.
	Here, lighter colors indicate locations that are farther from the starting point of the curve. One can observe that, in the two-dimensional plane, two points in the lower-left and lower-right corners may be close in the Euclidean sense, yet be farthest apart along the traversal order of the curve. In other words, ``closeness'' measured by the Euclidean distance does not necessarily imply ``closeness'' in the one-dimensional ordering induced by the Hilbert curve. This mismatch is often even more pronounced in higher-dimensional settings.

	\begin{figure}[h]  
			\centering  
			\includegraphics[width=0.9\textwidth]{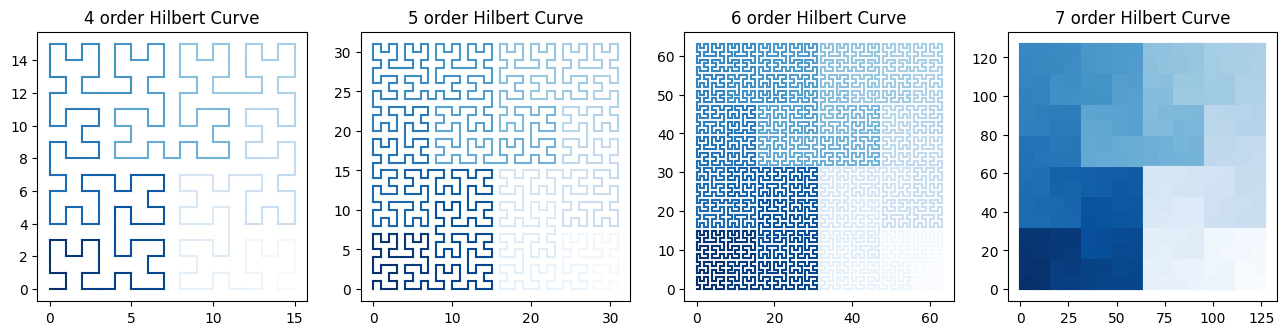}  
			\caption{Hilbert space-filling curve Visualization.}  
			\label{fig:example2}  
		\end{figure}
	
	To make this mismatch concrete, we construct paired synthetic point clouds as shown in Fig.~\ref{fig:example3}. Specifically, the Gaussian cluster center of the source distribution (purple) is fixed at $\mu_X = (0.2,0.5)$, whereas the Gaussian cluster center of the target distribution (orange) moves vertically along the line $x=0.8$, from the starting point $(0.8,0.2)$ to the ending point $(0.8,0.8)$. Empirically, when the HCP distance is used to measure discrepancies between distributions, the target Gaussian center exhibits an apparent tendency to ``approach'' the source center along the Hilbert space-filling curve induced ordering, which introduces a bias inconsistent with the true geometric displacement and can lead to distorted distance estimates.

	\begin{figure}[h]  
			\centering  
			\includegraphics[width=0.9\textwidth,height=3.5cm]{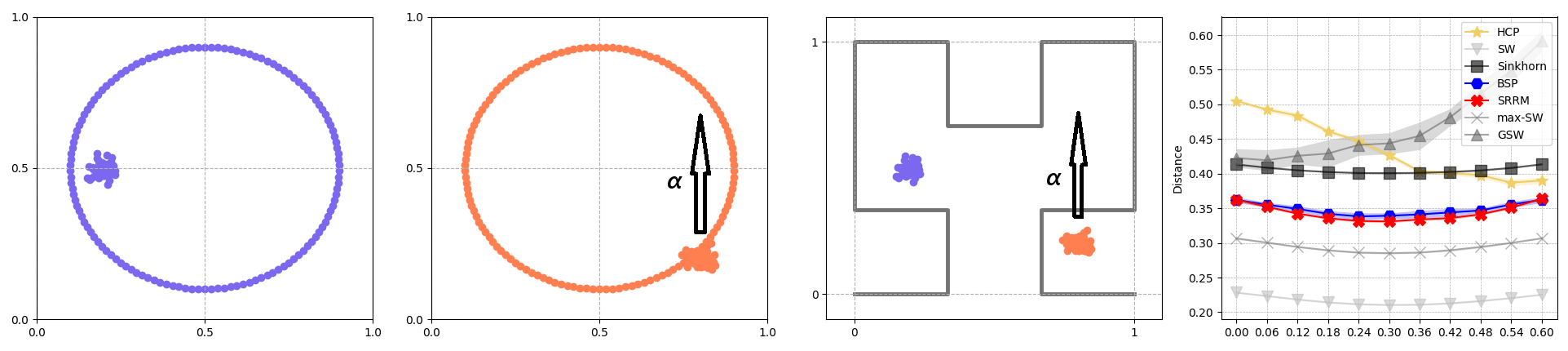}  
			\caption{Samples of the distributions and illustrations of how various distance metrics change.}  
			\label{fig:example3}  
		\end{figure}

    In this work, we mainly consider a recursive partitioning implementation of this idea, as in \cite{ref18,ref68} and in implementations based on the C++ library CGAL \cite{ref69}. Instead of explicitly evaluating Hilbert indices, this method recursively bisects the sample along coordinate axes, with approximately equal mass on each side, and constructs a Hilbert-like matching path for the samples. In our experiments, this recursive implementation yields better empirical performance than the traditional fixed-curve implementation.

\subsection{Binary space partitioning (BSP) distance}
	
The Binary Space Partitioning (BSP) distance \cite{ref10,ref11}  constructs a collision-free transport map by simultaneously partitioning two measures into two BSP trees (typically implemented as kd-trees in practice) and matching their leaf nodes.
Although each individual sparse coupling is not optimal, \cite{ref12} proposes an efficient merging scheme that exploits sparsity to substantially reduce the transportation cost and, in practice, yields approximations close to the optimal transport solution.
Despite its strong empirical performance across a variety of experiments, the method still exhibits the last-mile phenomenon.

\begin{figure}[htbp]  
	\centering  
	\includegraphics[width=\textwidth]{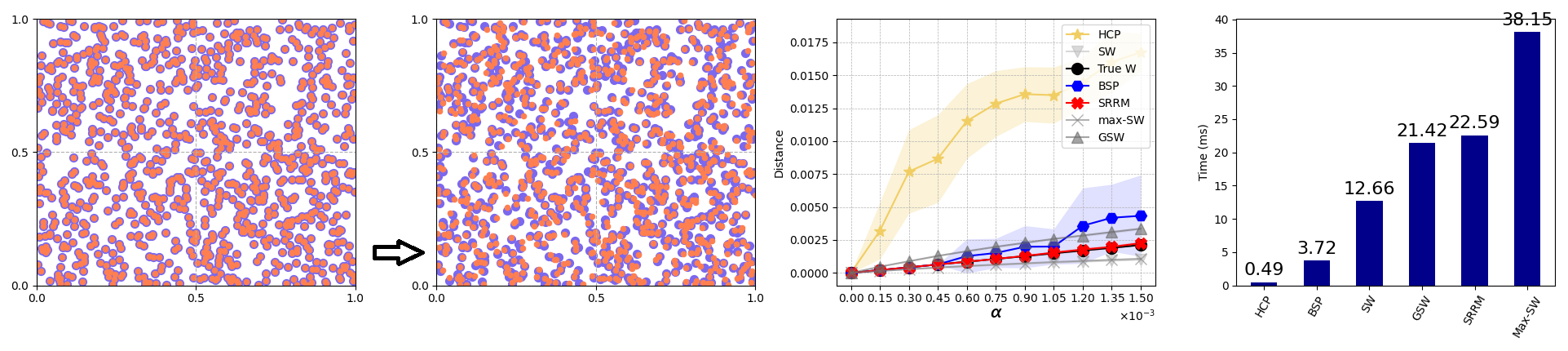}  
	\caption{Samples of the last-mile problem.}  
	\label{fig:example4}  
\end{figure}

We design a controlled experiment (as shown in Fig.~\ref{fig:example4}) to verify this phenomenon by fixing the point set $\{x_i\}$ and constructing its perturbed counterpart $\{x_i+\alpha \epsilon_i\}$, where $\epsilon_i\sim\mathcal{N}(0,I_d)$.
We vary the perturbation scale $\alpha$ in the range $[0,\,0.0015]$, so that the target point set deviates progressively from $\{x_i\}$ as $\alpha$ increases.
Under this setup, BSP still exhibits a last-mile problem under sufficiently small perturbations.
Moreover, since HCP also relies on a recursive construction but is effectively only applied once, this issue becomes even more pronounced for HCP.

\section{Proposed method}
\label{sec:Proposed Method}

This section introduces the RRM framework as a simple and analyzable representative of recursive partitioning transport surrogates. Specifically, we consider a class of probability measures on $\mathbb{R}^d$ with bounded support. 
Section~\ref{sec:Recursive Rank Matching Distance} introduces the recursive partition construction and defines the RRM distance based on this.
Section~\ref{sec:Statistical} studies the statistical behavior of the empirical estimator and analyzes its convergence properties.

\subsection{Recursive rank matching distance}
\label{sec:Recursive Rank Matching Distance}

To construct a mass-aligned parametrization for multidimensional distributions on $\mathbb{R}^d$,
we introduce an axis-recursive equal-mass partition based on coordinate-wise conditional median splits.

We follow the equal-mass splitting construction in~\cite{ref10}, but make two modifications:
(i) at depth $h$ we deterministically split along the cycling coordinate
$j(h)=1+(h \bmod d)$ (instead of using random directions); and
(ii) we encode left/right choices using the natural dyadic expansion associated with the underlying binary tree,
which simplifies subsequent derivations and notation.
A full recursion is given in Appendix D~\ref{definition1}.

\begin{remark}
	Unlike \cite{ref10}, which uses a ternary parametrization to resolve endpoint ambiguities,
	we adopt a dyadic parametrization consistent with the binary partition tree.
	Because dyadic endpoints form a countable set of Lebesgue measure zero,
	this choice does not affect the construction and simplifies the analysis.
\end{remark}

This induces a binary recursive partition $\{Q_{h,k}\}_{h\ge0,\,1\le k\le2^h}$ of a bounding box
$Q_{0,1}\subset \mathbb R^d$ containing the support of $\mu$, i.e., $\mathrm{supp}(\mu)\subset Q_{0,1}$.
Each cell $Q_{h,k}$ is identified by its unique 0/1 path code $(s_1,\ldots,s_h)\in\{0,1\}^h$.
Associate to this code the dyadic interval
$
I(s_1,\ldots,s_h):=
\Big[
\sum_{i=1}^h s_i2^{-i},\,
\sum_{i=1}^h s_i2^{-i}+2^{-h}
\Big),
$
and for each $h\ge0$ define the cell-valued map
$T_\mu^{(h)}:[0,1]\to\{Q_{h,k}\}_{k=1}^{2^h}$ by
$
T_\mu^{(h)}(t):=Q_{h,k}
\quad\text{whenever}\quad
t\in I\bigl(s_1(Q_{h,k}),\ldots,s_h(Q_{h,k})\bigr).
$
Consequently,
\[
(T_\mu^{(h)})^{-1}(Q_{h,k})
=
I\bigl(s_1(Q_{h,k}),\ldots,s_h(Q_{h,k})\bigr),
\lambda\!\left((T_\mu^{(h)})^{-1}(Q_{h,k})\right)
=
\mu(Q_{h,k})
=
2^{-h}.
\]

\begin{definition}[Mass-median axis-recursive RRM tree curve and induced transport map]\label{lem:conditional-holder1}
	Denote
	$
	Q_h(t):=T_\mu^{(h)}(t).
	$
	Then for any fixed $t$, the family $\{Q_h(t)\}_{h\ge0}$ is nested.
	Under the axis-cycling recursive construction, \cite{ref10} shows that
	$
	\mathrm{diam}(Q_h(t))\to 0
	$
	for all $t$ outside the set of dyadic endpoints; hence
	$
	\bigcap_{h\ge0}Q_h(t)
	$
	is a singleton for all such $t$.
	We define the point-valued map
	\[
	T_\mu:[0,1]\to\mathbb{R}^d,\qquad
	T_\mu(t):=\text{the unique point in }\bigcap_{h\ge0}Q_h(t),
	\]
	and assign an arbitrary value to $T_\mu(t)$ on dyadic endpoints.
	
	
\end{definition}

Our construction is conceptually close to \cite{ref10,ref11}, but we make explicit the common one-dimensional parametrization underlying the comparison. Namely, each measure is mapped to a shared parameter domain $t\in[0,1]$, and the RRM distance is defined by comparing the induced $T_\mu$ at matched values of $t$. For completeness, we restate the construction and distance definition in our own notation.

Let $\mu\in\mathcal P_\infty(\mathbb R^d)$. Consider the axis-recursive mass-median partition of $\mathbb R^d$ together with the induced dyadic parametrization on $[0,1]$.

\begin{lemma}[Pushforward property of the induced map]
	\label{lem:pushforward_treecurve}
	Let $T_\mu:[0,1]\to\mathbb{R}^d$ in Definition~\ref{lem:conditional-holder1}.
	Then $T_\mu$ is Borel measurable and satisfies
	$
	(T_\mu)_\# \mathrm{Unif}[0,1] = \mu.
	$
\end{lemma}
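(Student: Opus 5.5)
The proof approximates $T_\mu$ by simple maps defined on the dyadic intervals and then checks the pushforward identity on the generating cells $Q_{h,k}$.

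\textbf{Measurability.} For each $h$, pick a point $x_{h,k}\in Q_{h,k}$ (say, a point of $Q_{h,k}\cap\operatorname{supp}(\mu)$ when this set is nonempty), and set
\[
T^{h}(t):=x_{h,k}\qquad\text{for } t\in I\bigl(s_1(Q_{h,k}),\ldots,s_h(Q_{h,k})\bigr).
\]
Then $T^h$ is a simple function constant on finitely many intervals, so it is Borel. Let $t$ lie outside the countable set $D$ of dyadic endpoints. Both $T^h(t)$ and $T_\mu(t)$ belong to $Q_h(t)$, and $\operatorname{diam}(Q_h(t))\to0$ by Definition~\ref{lem:conditional-holder1}. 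Hence $T^h(t)\to T_\mu(t)$. This gives $T_\mu=\lim_h T^h$ on $[0,1]\setminus D$. On $D$, which is countable and hence Borel and $\lambda$-null, $T_\mu$ takes arbitrary values, and redefining a function on a countable set preserves Borel measurability. So $T_\mu$ is Borel as a pointwise limit of Borel maps off $D$.

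\textbf{Agreement on cells.} Let $\nu:=(T_\mu)_\#\lambda$. For $t\notin D$, $T_\mu(t)\in Q_{h,k}$ exactly when $Q_h(t)=Q_{h,k}$, because the cells at depth $h$ are disjoint and $T_\mu(t)\in\bigcap_{h}Q_h(t)$. Therefore
\[
\nu(Q_{h,k})=\lambda\bigl((T_\mu^{(h)})^{-1}(Q_{h,k})\bigr)=2^{-h}=\mu(Q_{h,k}),
\]
using the identity displayed just before Definition~\ref{lem:conditional-holder1}. Since $\lambda(D)=0$, the same equality holds for every finite union of cells. Also $\nu(Q_{0,1})=1$, so $\nu$ is concentrated on the bounding box, just as $\mu$ is.

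\textbf{Extension to all Borel sets.} This is the main obstacle. The cells $\{Q_{h,k}\}$, together with the empty set, form a $\pi$-system: two cells are either nested or disjoint, since the partition is nested. By Dynkin's $\pi$--$\lambda$ theorem, $\mu=\nu$ on the $\sigma$-algebra $\mathcal{G}$ generated by the cells. It remains to show that $\mathcal{G}$ captures every Borel set up to sets that are null for both measures. Diameters shrink only along $\lambda$-a.e.\ path, not uniformly; cells with $\mu$-mass $2^{-h}$ can stay fat where $\mu$ has no mass.

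I would argue through open sets. Take an open $U\subset\mathbb{R}^d$ and let
\[
V_h:=\bigcup\{Q_{h,k}:Q_{h,k}\subset U\}.
\]
Each $V_h\in\mathcal{G}$, and $V_h$ increases in $h$. For $t\notin D$ with $T_\mu(t)\in U$, shrinking diameters put $Q_h(t)$ inside $U$ for large $h$. Hence $T_\mu^{-1}(U)\subset\bigcup_h T_\mu^{-1}(V_h)$ up to $\lambda$-null sets, which gives $\nu(U)=\lim_h\nu(V_h)$. For the matching identity $\mu(U)=\lim_h\mu(V_h)$, I would need $\mu$-a.e.\ $x$ to lie in cells of vanishing diameter. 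I would obtain this from the structure of the construction in \cite{ref10}: up to a $\mu$-null set, every $x$ is $T_\mu(t)$ for the $t$ determined by the branch choices of its own cell path. Equivalently, the set of $x$ whose cells do not shrink is $\mu$-null, because such paths correspond to a $\lambda$-null set of $t$ and each depth-$h$ cell carries equal mass under $\mu$ and $\lambda$. This is the delicate step.

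Granting it, $\mu(U)=\lim_h\mu(V_h)=\lim_h\nu(V_h)=\nu(U)$ for every open $U$. Open sets form a $\pi$-system generating the Borel $\sigma$-algebra, so $\mu=\nu$ on all Borel sets, which proves $(T_\mu)_\#\mathrm{Unif}[0,1]=\mu$.
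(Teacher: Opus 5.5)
Your proof is correct, and it has the same two-stage skeleton as the paper's proof. First you check $(T_\mu)_\#\lambda(Q_{h,k})=\mu(Q_{h,k})=2^{-h}$ on cells, using that $T_\mu^{-1}(Q_{h,k})$ and $I_{h,k}$ differ only by dyadic endpoints. Then you extend to all Borel sets.

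The two routes differ at the extension step. The paper asserts that the finite algebras $\mathcal A_h$ generate the Borel $\sigma$-algebra and invokes a monotone class argument. It also asserts, without argument, that $T_\mu$ is Borel.

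The generation claim holds when cells shrink uniformly. That is the case under Assumption~\ref{ass:mass_median_setting}, where the argument in the proof of Lemma~\ref{lem:conditional-holder} gives $\operatorname{diam}(Q_{h,k})\le C\rho^{h/d}$ for every cell. Then each relatively open subset of $Q_{0,1}$ is a countable union of cells.

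For a general $\mu$, however, the claim fails in exactly the way you anticipated. Take $\mu$ uniform on $[0,1]^2$ and $Q_{0,1}=[0,10]^2$. The always-right cells decrease to $[1,10]^2$, and no cell separates two points of that square.

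Your route is Dynkin's theorem on the $\pi$-system of cells, followed by inner approximation of open sets by the $V_h$. It only needs the cells to shrink almost everywhere, so it proves the lemma at the generality in which it is stated. Your simple-function approximation also supplies the measurability argument that the paper omits.

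The step you leave as ``granted'' closes in a few lines using what you have already proved.

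1. Write $C_h(x)$ for the depth-$h$ cell containing $x\in Q_{0,1}$, and set $B:=\{x\in Q_{0,1}:\lim_h\operatorname{diam}(C_h(x))>0\}$.
2. Because the cells are nested, $B=\bigcup_{m\ge1}\bigcap_{h\ge0}\{x:\operatorname{diam}(C_h(x))>1/m\}$. Each inner set is a finite union of depth-$h$ cells, so $B\in\mathcal G$, and therefore $\mu(B)=\nu(B)=\lambda(T_\mu^{-1}(B))$.
3. For $t\notin D$ we have $C_h(T_\mu(t))=Q_h(t)$, whose diameter tends to $0$. Hence $T_\mu^{-1}(B)\subset D$ and $\mu(B)=0$.
4. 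Every $x\in(U\cap Q_{0,1})\setminus B$ then lies in $V_h$ for all large $h$. This gives $\mu(U)=\lim_h\mu(V_h)$, and the rest of your argument goes through unchanged.
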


It ensures that $T_\mu$ is a valid transport-type parameterization of $\mu$,
so that different measures can be compared on the common parameter domain $t\in[0,1]$.

\begin{definition}\label{lem:conditional-holder3}
	Let $\mu,\nu\in\mathcal P_\infty(\mathbb{R}^d)$, and let $T_\mu,T_\nu:[0,1]\to\mathbb R^d$ denote the induced RRM maps from Definition~\ref{lem:conditional-holder1}. We define the RRM distance by
	\begin{equation}\label{eq:conditional}
		\mathrm{RRM}(\mu,\nu)
		:=
		\left(
		\int_0^1
		\big\|
		T_\mu(t)-T_\nu(t)
		\big\|_2^{\,2}\,dt
		\right)^{1/2}.
	\end{equation}
\end{definition}

With this common parametrization in hand, we measure discrepancy by the $L_2$ distance between the induced maps, yielding the RRM distance.

\begin{theorem}\label{lem:conditional-holder4}
	Assume that $\mathrm{RRM}(\mu,\nu)$ is constructed as in Definition~\ref{lem:conditional-holder3}.Then
	$
	W_2(\mu,\nu)\le \mathrm{RRM}(\mu,\nu),
	$
	and $\mathrm{RRM}$ is a metric on $\mathcal P_\infty(\mathbb{R}^d)$.
\end{theorem}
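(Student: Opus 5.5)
The plan is to realize $\mathrm{RRM}$ as the $L_2$ cost of one explicit coupling, and then read off the metric axioms from the $L_2([0,1];\mathbb R^d)$ structure of the maps $t\mapsto T_\mu(t)$.

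\emph{Upper bound.} Let $U\sim\mathrm{Unif}[0,1]$ and define $\pi:=(T_\mu,T_\nu)_\#\mathrm{Unif}[0,1]$, a Borel probability measure on $\mathbb R^d\times\mathbb R^d$. By Lemma~\ref{lem:pushforward_treecurve}, its first marginal is $(T_\mu)_\#\mathrm{Unif}[0,1]=\mu$ and its second is $\nu$, so $\pi\in\Pi(\mu,\nu)$. Hence
\[
W_2^2(\mu,\nu)\le\int\|x-y\|_2^2\,d\pi(x,y)=\int_0^1\|T_\mu(t)-T_\nu(t)\|_2^2\,dt=\mathrm{RRM}(\mu,\nu)^2 .
\]
Finiteness is immediate. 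The construction places every cell inside the bounding box, so $T_\mu(t)$ lies in a bounded box for a.e.\ $t$. Since changing $T_\mu$ on the null set of dyadic endpoints does not alter the integral, I may take $T_\mu$ bounded. Therefore $T_\mu\in L_2([0,1];\mathbb R^d)$ for every $\mu\in\mathcal P_\infty(\mathbb R^d)$.

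\emph{Metric axioms.} We have $\mathrm{RRM}(\mu,\nu)=\|T_\mu-T_\nu\|_{L_2}$. Symmetry and nonnegativity are immediate. The triangle inequality follows from Minkowski's inequality in $L_2$: $\|T_\mu-T_\rho\|\le\|T_\mu-T_\nu\|+\|T_\nu-T_\rho\|$. If $\mu=\nu$, the deterministic construction gives the same partition and the same map, up to the arbitrary endpoint values, which form a null set. Hence $\mathrm{RRM}(\mu,\mu)=0$.

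\emph{Identity of indiscernibles.} This is the step that needs care, since $\mu\mapsto T_\mu$ must be shown to be injective modulo null sets. The upper bound handles it directly: if $\mathrm{RRM}(\mu,\nu)=0$, then $W_2(\mu,\nu)=0$, and since $W_2$ is a metric on measures with finite second moment (which includes $\mathcal P_\infty$), $\mu=\nu$. Alternatively, $T_\mu=T_\nu$ a.e.\ gives $\mu=(T_\mu)_\#\lambda=(T_\nu)_\#\lambda=\nu$ directly from Lemma~\ref{lem:pushforward_treecurve}.

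The main obstacle I anticipate lies in the well-definedness details rather than the inequalities. The first is measurability of $(T_\mu,T_\nu)$ as a joint map, which follows because each component is Borel by Lemma~\ref{lem:pushforward_treecurve}. The second is making sure the arbitrary endpoint values and possible ties or atoms in the median splits do not affect anything. Both issues are resolved because the relevant sets are Lebesgue-null.
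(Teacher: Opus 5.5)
Your proof is correct and follows the paper's argument essentially verbatim: the coupling $(T_\mu,T_\nu)_\#\lambda$ gives $W_2\le\mathrm{RRM}$, Minkowski's inequality in $L_2$ gives the triangle inequality, and identity of indiscernibles follows from $T_\mu=T_\nu$ a.e.\ together with Lemma~\ref{lem:pushforward_treecurve}. That last step is your second option; your first option, via $W_2$ being a metric, is an equally valid shortcut. Your remarks on finiteness (taking $T_\mu$ bounded) and on $\mathrm{RRM}(\mu,\mu)=0$ fill in small details that the paper leaves implicit.
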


RRM distance defines a proxy geometry: it upper-bounds \(W_2\) and provides a proper metric structure for comparing multivariate distributions.

\subsection{Statistical convergence of the anchored empirical RRM}
\label{sec:Statistical}

In this section, we study consistency and convergence rates for the population-anchored empirical RRM.
Let \(\{x_i\}_{i=1}^n \sim \mu\), and denote the associated empirical measure by
$
\mu_n=\frac{1}{n}\sum_{i=1}^n \delta_{x_i}.
$

Directly analyzing the statistical convergence of the fully empirical RRM is challenging, because the underlying mass-median partition is itself data-dependent: although the axis schedule and the tree-consistent traversal order are fixed, the recursive splitting thresholds are determined by the measure. Consequently, the partition tree induced by the empirical measure \(\mu_n\) may differ from that induced by the population measure \(\mu\), which leads to sample-dependent cells and a random tree-based map \(T_{\mu_n}\). This coupling between the empirical partition and the induced transport map makes the convergence analysis substantially more delicate.

To decouple these effects, we adopt a population-anchored formulation. We measure the empirical fluctuation of $\mu_n$ through the mass accumulated on the image of the parameter prefix $[0,t]$ under $T_\mu$. Accordingly, define the right-continuous empirical prefix-mass function and its generalized inverse by
\begin{equation}\label{eq:prefix_mass_empirical}
	g_{\mu_n}(t)
	=
	\inf_{s\in\mathcal{K},\, s\ge t}\,
	\mu_n\!\bigl(T_\mu([0,s])\bigr),
	\qquad
	g_{\mu_n}^{-1}(t)
	=
	\inf_{s\in[0,1],\, g_{\mu_n}(s)>t}\; s,
\end{equation}
where \(\mathcal{K}\) denotes the set of dyadic points.
Accordingly, we define
\begin{equation}\label{eq:tcp_modified_empirical}
	\mathrm{RRM}(\mu,\mu_n)
	=
	\left(
	\int_0^1
	\bigl\|
	T_\mu(t)-T_\mu\!\bigl(g_{\mu_n}^{-1}(t)\bigr)
	\bigr\|_2^2\,dt
	\right)^{1/2}.
\end{equation}

The key point is that the tree-based ordering and prefix structure in \eqref{eq:tcp_modified_empirical} are fixed by the deterministic map $T_\mu$, so the only stochastic fluctuation comes from the one-dimensional empirical prefix-mass process $g_{\mu_n}$.
This allows us to isolate the sampling fluctuation from the additional randomness induced by the sample-dependent recursive partition.

\begin{assumption}[Bounded-density setting]
	\label{ass:mass_median_setting}
	Assume that
	$
	\operatorname{supp}(\mu)\subset Q_{0,1}\subset \mathbb{R}^d,
	$
	and that \(\mu\) admits a density \(f\) on \(Q_{0,1}\) satisfying
	$
	0<m\le f(x)\le M<\infty, \forall x\in Q_{0,1},
	$
	for some constants \(m,M>0\).
\end{assumption}

\begin{lemma}[Global H\"older control]
	\label{lem:conditional-holder}
	Let $d\ge 1$ and $T_\mu:[0,1]\to\mathbb{R}^d$ in Definition~\ref{lem:conditional-holder1}.
	Assume that the underlying probability measure $\mu$ admits a density $f$ satisfying Assumption~\ref{ass:mass_median_setting}. 
	Then there exist constants $C_d>0$ and $\alpha>0$, depending only on $d,m,M$,
	such that for every $t,t'\in[0,1)$,
	\begin{equation}\label{eq:Globa}
		\|T_\mu(t)-T_\mu(t')\|_2
		\;\le\;
		C_d\,|t-t'|^{\alpha}.
	\end{equation}
	where
	$
	\rho:=1-\frac{m}{2M}\in(0,1),
	\alpha:=\frac1d\log_2\!\Big(\frac1\rho\Big)>0.
	$
\end{lemma}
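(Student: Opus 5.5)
The plan is to reduce the Hölder estimate to a geometric statement about the partition cells: their diameters shrink geometrically in depth, at a rate controlled by $\rho$. The key fact is that any two parameters $t,t'$ whose dyadic codes agree up to depth $h$ are mapped into the same cell $Q_{h,k}$.

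\textbf{Step 1: one-step side shrinkage.} Fix a cell $Q$, which is an axis-aligned box, and let $j$ be the splitting coordinate. Write $\ell_j$ for the side length of $Q$ along axis $j$. The median split divides $Q$ into two children, each of mass $\mu(Q)/2$. Suppose one child had $j$-side length $a\,\ell_j$. Its mass is at least $m\,a\,\ell_j\prod_{i\ne j}\ell_i$, while $\mu(Q)\le M\prod_i\ell_i$. Hence
\[
a \le \frac{M}{2m}.
\]
This bound is useless when $M/2m\ge 1$, so I will instead bound the \emph{other} child from below. Each child has mass $\tfrac12\mu(Q)\ge \tfrac12 m\prod_i\ell_i$ and also at most $M\,a\,\ell_j\prod_{i\ne j}\ell_i$. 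Therefore $a\ge m/(2M)$, and so the complementary child has $j$-side at most
\[
\Bigl(1-\frac{m}{2M}\Bigr)\ell_j=\rho\,\ell_j .
\]
This applies to both children, so every child's $j$-side is at most $\rho\,\ell_j$, and its other sides are unchanged. Because the axes cycle with period $d$, after $h$ levels each side has been contracted at least $\lfloor h/d\rfloor$ times. This gives
\[
\operatorname{diam}(Q_{h,k})\le \operatorname{diam}(Q_{0,1})\,\rho^{\lfloor h/d\rfloor}\le C\,\rho^{h/d}.
\]
One caveat: if $Q_{0,1}$ is only a bounding box, the density bound on $Q_{0,1}$ from Assumption~\ref{ass:mass_median_setting} is exactly what is needed here.

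\textbf{Step 2: from parameters to depth.} Take $t\ne t'$ in $[0,1)$ and choose $h\ge0$ with $2^{-h-1}<|t-t'|\le 2^{-h}$. The two points need not share a depth-$h$ dyadic interval; they may straddle a dyadic endpoint. They do, however, lie in the same or in adjacent depth-$h$ intervals. This adjacency is the main obstacle, because adjacent dyadic intervals can have their common ancestor near the root, so their cells are not close in general. Two resolutions suggest themselves:
\begin{enumerate}
\item use the tree-consistent (Hilbert-like) traversal, under which consecutive cells share a face, so the two cells are touching boxes;
\item with the plain dyadic ordering, observe that consecutive cells $I(s)$ and $I(s')$ still correspond to boxes that meet. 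The last common ancestor splits along a hyperplane; $I(s)$ is the rightmost descendant of its left child and $I(s')$ is the leftmost descendant of its right child, and both adjoin that hyperplane. The obstacle is that in the other coordinates they need not be near each other.
\end{enumerate}
If adjacency fails, I will fall back on the fact that the paper's traversal is tree-consistent, meaning consecutive leaves are face-adjacent. Under that property,
\[
\|T_\mu(t)-T_\mu(t')\|\le \operatorname{diam}(Q_h(t))+\operatorname{diam}(Q_h(t'))\le 2C\rho^{h/d}.
\]
This uses the fact that $T_\mu(t)\in Q_h(t)$ off dyadic endpoints (Definition~\ref{lem:conditional-holder1}), with the endpoints handled by the arbitrary choice or by closure.

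\textbf{Step 3: convert to a Hölder bound.} Since $2^{-h}<2|t-t'|$,
\[
\rho^{h/d}=\bigl(2^{-h}\bigr)^{\log_2(1/\rho)/d}\le \bigl(2|t-t'|\bigr)^{\alpha},
\]
with $\alpha=\tfrac1d\log_2(1/\rho)$. This yields the claimed bound with $C_d=2^{1+\alpha}\,C$. Everything here depends only on $d,m,M$ and the box, matching the statement.
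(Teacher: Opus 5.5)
Your Steps~1 and~3 coincide with the paper's argument: the split side contracts by $\rho=1-\frac{m}{2M}$, axis cycling gives $\operatorname{diam}(Q_h)\le C\rho^{h/d}$, and $\rho^{h/d}=(2^{-h})^{\alpha}$. The obstacle you flag in Step~2 is real. The gap is that your fallback is false, and no other argument can replace it, because the lemma itself fails for $d\ge 2$.

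In Definition~\ref{definition1}(3) the digit is $0$ for the lower child and $1$ for the upper child at every node, and the dyadic intervals $I(s_1,\ldots,s_h)$ order cells lexicographically by code. ``Tree-consistent'' only means that each subtree occupies a contiguous dyadic interval. There are no Hilbert-type reflections, so consecutive leaves need not be adjacent.

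Here is a counterexample. Take $d=2$ and $\mu$ uniform on $Q_{0,1}=[0,1]^2$, so $m=M=1$. Then $T_\mu$ de-interleaves binary digits, $0.s_1s_2s_3\ldots\mapsto(0.s_1s_3s_5\ldots,\,0.s_2s_4s_6\ldots)$. Choose a non-dyadic $t$ whose code begins $0\,1^{2N}$ and a non-dyadic $t'$ whose code begins $1\,0^{2N}$.
\begin{itemize}
\item Then $t\in[\tfrac12-2^{-2N-1},\tfrac12)$ and $t'\in[\tfrac12,\tfrac12+2^{-2N-1})$, so $|t-t'|<2^{-2N}$.
\item The second coordinate of $T_\mu(t)$ is at least $1-2^{-N}$, and that of $T_\mu(t')$ is at most $2^{-N}$.
\item Hence $\|T_\mu(t)-T_\mu(t')\|_2\ge 1-2^{1-N}$, which is incompatible with any bound $C_d|t-t'|^{\alpha}$ as $N\to\infty$.
\end{itemize}
For $N\ge 2$ the two consecutive depth-$(2N+1)$ cells involved are even disjoint as closures, contrary to the first half of your item~2. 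So $T_\mu$ jumps at $t=\tfrac12$. Since $t,t'$ were chosen non-dyadic, the convention for values at dyadic endpoints cannot rescue the statement.

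The paper's proof hides the same problem in a false inequality. From ``$t,t'$ share a dyadic prefix of length $r$ but not $r+1$'' it concludes $2^{-(r+1)}\le t'-t$. Only $t'-t<2^{-r}$ actually follows, and the pair above has $r(t,t')=0$ with $t'-t$ arbitrarily small. So you located exactly the step the paper gets wrong.

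What your Steps~1 and~3 do prove is a H\"older bound in the dyadic ultrametric,
\[
\|T_\mu(t)-T_\mu(t')\|_2\le C\rho^{r(t,t')/d}=C\,(2^{-r(t,t')})^{\alpha}.
\]
This yields the stated inequality only for pairs with $2^{-r(t,t')}\le c\,|t-t'|$. Your resolution~1 would replace the paper's traversal by a Hilbert-type one. That is a different construction, and its adjacency property under data-dependent median splits would itself need proof.

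For the rate in Theorem~\ref{thm:rrm_pure_good}, an averaged estimate could stand in for the lemma. Suppose $\sup_t|y_t-t|\le\eta$. The set of $t$ for which a dyadic point of depth at most $g$ lies between $t$ and $y_t$ has measure $O(2^{g}\eta)$. Off that set, the ultrametric bound gives $\|T_\mu(t)-T_\mu(y_t)\|_2\le C\rho^{g/d}$. Optimizing over $g$ gives a polynomial rate, though a slower one than claimed.
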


The map $T_\mu$ enjoys a global H\"older regularity under the mass-median partition,
which provides the key geometric control needed to convert one-dimensional empirical fluctuations on $[0,1]$
into an $L_2$ error bound for $\mathrm{RRM}(\mu,\mu_n)$

\begin{theorem}[Consistency and rate for the anchored empirical RRM]
	\label{thm:rrm_pure_good}
	Under Assumption~\ref{ass:mass_median_setting}, and using the Hölder control from Lemma~\ref{lem:conditional-holder}.
	Then:
	$\mathrm{RRM}(\mu,\mu_n)\to 0\,\text{almost surely as } n\to\infty.$
	Moreover,
	\begin{equation}\label{eq:rate}
		\mathbb{E}\,\mathrm{RRM}(\mu,\mu_n)
		=
		O\!\left(n^{-\min(\alpha/2,\,1/4)}\right).
	\end{equation}
\end{theorem}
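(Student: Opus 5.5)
The plan is to reduce $\mathrm{RRM}(\mu,\mu_n)$ to a one-dimensional uniform empirical process on the parameter domain $[0,1]$. I then transfer the bound to $\mathbb R^d$ through the H\"older estimate of Lemma~\ref{lem:conditional-holder}.

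\textbf{Step 1 (lifting the sample to $[0,1]$).} Under Assumption~\ref{ass:mass_median_setting}, every splitting hyperplane of the mass-median partition has $\mu$-measure zero. Hence, almost surely, no $x_i$ lies on the boundary of any cell $Q_{h,k}$. Each $x_i$ then has a well-defined infinite path code, so there is a unique non-dyadic $u_i\in[0,1]$ with $T_\mu(u_i)=x_i$ and $x_i\in T_\mu^{(h)}(u_i)$ for all $h$. By Lemma~\ref{lem:pushforward_treecurve} and the identity $\lambda((T_\mu^{(h)})^{-1}(Q_{h,k}))=\mu(Q_{h,k})$, the $u_i$ are i.i.d.\ $\mathrm{Unif}[0,1]$. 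To see this, check that $\mathbb P(u_i\in I)=|I|$ on dyadic intervals, which form a $\pi$-system generating the Borel sets.

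For a dyadic $s=k2^{-h}$, the set $T_\mu([0,s])$ coincides, up to cell boundaries, with the union of the depth-$h$ cells whose intervals lie in $[0,s)$. It follows that $\mu_n(T_\mu([0,s]))=F_n(s)$ almost surely, where $F_n$ is the empirical CDF of the $u_i$. Since $F_n$ is right-continuous and dyadics are dense, the infimum over dyadic $s\ge t$ in \eqref{eq:prefix_mass_empirical} returns $F_n(t)$. Therefore $g_{\mu_n}=F_n$ and $g_{\mu_n}^{-1}=F_n^{-1}$, the empirical quantile function. I expect this identification to be the main obstacle. The delicate points are the dyadic endpoints, the arbitrary values of $T_\mu$ there, and the fact that $T_\mu([0,s])$ may pick up boundary points of neighbouring cells. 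All of these are handled by the null-boundary argument, but they must be written carefully.

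\textbf{Step 2 (H\"older transfer).} By Lemma~\ref{lem:conditional-holder},
\[
\mathrm{RRM}(\mu,\mu_n)^2\le C_d^2\int_0^1 |t-F_n^{-1}(t)|^{2\alpha}\,dt .
\]
Without loss of generality $\alpha\le 1$. A H\"older exponent above $1$ on an interval would force $T_\mu$ to be constant, contradicting the pushforward property, so the lemma remains valid with $\min(\alpha,1)$ in place of $\alpha$. Set $W:=\int_0^1|F_n^{-1}(t)-t|\,dt$, the $W_1$ distance between the empirical and true uniform laws, and note that $|F_n^{-1}(t)-t|\le 1$. I split into two cases.

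If $2\alpha\ge1$, then $|x|^{2\alpha}\le|x|$ on $[0,1]$, which gives $\mathrm{RRM}\le C_d W^{1/2}$.

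If $2\alpha<1$, Jensen's inequality (concavity of $x\mapsto x^{2\alpha}$) gives $\int|x|^{2\alpha}\le W^{2\alpha}$, hence $\mathrm{RRM}\le C_d W^{\alpha}$.

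In both cases $\mathrm{RRM}(\mu,\mu_n)\le C_d\,W^{\min(\alpha,1/2)}$.

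\textbf{Step 3 (probabilistic bounds).} In one dimension, $W=\int_0^1|F_n(t)-t|\,dt\le\|F_n-\mathrm{id}\|_\infty$. By Glivenko--Cantelli, $W\to0$ almost surely, which yields almost-sure consistency. For the rate, combine Jensen with $\mathbb E W\le\int_0^1\sqrt{\mathrm{Var}\,F_n(t)}\,dt\le C n^{-1/2}$, or alternatively with the DKW inequality. This gives
\[
\mathbb E\,\mathrm{RRM}(\mu,\mu_n)\le C_d\,(\mathbb E W)^{\min(\alpha,1/2)}=O\!\left(n^{-\min(\alpha/2,1/4)}\right),
\]
which is \eqref{eq:rate}.
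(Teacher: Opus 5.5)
Your proposal is correct and follows essentially the same route as the paper: H\"older transfer via Lemma~\ref{lem:conditional-holder}, reduction to the one-dimensional $W_1$ distance between the lifted empirical law and $\mathrm{Unif}[0,1]$, the case split at $2\alpha=1$ with Jensen, and the $O(n^{-1/2})$ bound on the expected one-dimensional $W_1$. Your Step~1 (lifting the $x_i$ to i.i.d.\ uniforms $u_i$ and identifying $g_{\mu_n}$ with their empirical CDF) makes explicit a reduction the paper uses only implicitly when it invokes the quantile representation and writes $W_1(\mu_n,\mu)$. That step is what justifies applying Glivenko--Cantelli and the one-dimensional rate to the correct object.
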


The anchored one-sample result above provides the basis for the following two-sample stability statement.

\begin{corollary}[Two-sample stability of $\mathrm{RRM}$]
	\label{cor:rrm_two_sample}
	Let $\{x_i\}_{i=1}^n$ and $\{y_i\}_{i=1}^n$ be two independent i.i.d.\ samples generated from
	probability measures $\mu$ and $\nu$, and define the empirical measures
	$
	\mu_n:=\frac1n\sum_{i=1}^n\delta_{x_i},
	\nu_n:=\frac1n\sum_{i=1}^n\delta_{y_i}.
	$
	Let $\mathrm{RRM}(\mu,\nu)$ denote the population RRM defined in Definition~\ref{lem:conditional-holder3}, and define $\mathrm{RRM}(\mu_n,\nu_n)$ by the corresponding anchored empirical construction for $\mu$ and $\nu$.
	Assume the conditions of Theorem~\ref{thm:rrm_pure_good} hold for both $\mu$ and $\nu$.
	Let
	$
	\alpha_*:=\min\{\alpha_\mu,\alpha_\nu\}.
	$
	Then
	$
	\mathrm{RRM}(\mu_n,\nu_n)\to \mathrm{RRM}(\mu,\nu) \text{almost surely as }n\to\infty.
	$
	Moreover,
	\begin{equation}\label{eq:rate1}
		\mathbb{E}\Big|\mathrm{RRM}(\mu_n,\nu_n)-\mathrm{RRM}(\mu,\nu)\Big|
		=
		O\!\left(n^{-\min(\alpha_*/2,\,1/4)}\right).
	\end{equation}
\end{corollary}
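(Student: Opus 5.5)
We have to interpret the anchored two-sample construction. Natural reading: $\mathrm{RRM}(\mu_n,\nu_n)$ is the $L_2([0,1])$ distance between the anchored empirical maps
\[
\widehat T_{\mu,n}(t):=T_\mu\bigl(g_{\mu_n}^{-1}(t)\bigr),\qquad \widehat T_{\nu,n}(t):=T_\nu\bigl(g_{\nu_n}^{-1}(t)\bigr),
\]
where $g_{\nu_n}$ is built from $T_\nu$ exactly as in \eqref{eq:prefix_mass_empirical}. In this reading $\mathrm{RRM}(\mu,\mu_n)=\|T_\mu-\widehat T_{\mu,n}\|_{L_2}$ is precisely \eqref{eq:tcp_modified_empirical}, and likewise for $\nu$. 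Also $\mathrm{RRM}(\mu,\nu)=\|T_\mu-T_\nu\|_{L_2}$ by Definition~\ref{lem:conditional-holder3}.

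The plan is a triangle-inequality reduction to Theorem~\ref{thm:rrm_pure_good}, applied once to $\mu$ and once to $\nu$. All maps take values in bounded sets, so they belong to $L_2([0,1];\mathbb R^d)$. The reverse triangle inequality in this normed space then gives, deterministically for every realization of the samples,
\[
\Bigl|\mathrm{RRM}(\mu_n,\nu_n)-\mathrm{RRM}(\mu,\nu)\Bigr|
=\Bigl|\,\|\widehat T_{\mu,n}-\widehat T_{\nu,n}\|_{L_2}-\|T_\mu-T_\nu\|_{L_2}\Bigr|
\le \|\widehat T_{\mu,n}-T_\mu\|_{L_2}+\|\widehat T_{\nu,n}-T_\nu\|_{L_2}
=\mathrm{RRM}(\mu,\mu_n)+\mathrm{RRM}(\nu,\nu_n).
\]
Measurability of the left side follows because each term is a measurable functional of the sample. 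This is already implicit in Theorem~\ref{thm:rrm_pure_good}, since that theorem takes expectations of the one-sample quantities.

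\textbf{Almost sure convergence.} Theorem~\ref{thm:rrm_pure_good} applies to $\mu$ under Assumption~\ref{ass:mass_median_setting}, giving $\mathrm{RRM}(\mu,\mu_n)\to0$ on an event $\Omega_\mu$ of probability one. The same argument applied to $\nu$, with its own box, density bounds $m_\nu,M_\nu$ and exponent $\alpha_\nu$ from Lemma~\ref{lem:conditional-holder}, gives $\mathrm{RRM}(\nu,\nu_n)\to0$ on an event $\Omega_\nu$ of probability one. On $\Omega_\mu\cap\Omega_\nu$, which still has probability one, the displayed bound forces $\mathrm{RRM}(\mu_n,\nu_n)\to\mathrm{RRM}(\mu,\nu)$. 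Independence of the two samples is not needed for this step; it only makes the joint probability space explicit.

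\textbf{Rate.} Take expectations in the displayed bound and use \eqref{eq:rate} for each measure:
\[
\mathbb E\bigl|\mathrm{RRM}(\mu_n,\nu_n)-\mathrm{RRM}(\mu,\nu)\bigr|\le C_\mu n^{-\min(\alpha_\mu/2,1/4)}+C_\nu n^{-\min(\alpha_\nu/2,1/4)}.
\]
Since $n\ge1$ and $\alpha\mapsto n^{-\min(\alpha/2,1/4)}$ is nonincreasing in $\alpha$, both terms are at most $n^{-\min(\alpha_*/2,1/4)}$. This yields \eqref{eq:rate1} with constant $C_\mu+C_\nu$.

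\textbf{Main obstacle.} There is no real analytic difficulty. The only delicate point is the definitional one settled at the start: one must make sure that the ``corresponding anchored empirical construction'' decomposes into two one-sample anchored maps, each using its own population map. If it instead used a single anchor, say $T_\mu$ for both samples, the $\nu$ term would not reduce to Theorem~\ref{thm:rrm_pure_good}, and the argument would need rethinking. I would fix the definition explicitly as above before the proof.
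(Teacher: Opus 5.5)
Your proposal is correct and follows the paper's proof essentially verbatim. The reverse triangle inequality in $L_2([0,1];\mathbb{R}^d)$ bounds the deviation by $\mathrm{RRM}(\mu,\mu_n)+\mathrm{RRM}(\nu,\nu_n)$, and Theorem~\ref{thm:rrm_pure_good} applied to each measure then gives both the almost sure convergence and the rate. Your explicit identification of the anchored maps $T_\mu\circ g_{\mu_n}^{-1}$ and $T_\nu\circ g_{\nu_n}^{-1}$, and your monotonicity-in-$\alpha$ step for merging the two rates, make precise two points the paper leaves implicit.
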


This result statistically justifies the direct computation of RRM from the empirical measures \(\mu_n\) and \(\nu_n\). Although $T_\mu$ is not directly observable in practice, the empirical map $T_{\mu_n}$ serves as a finite-depth approximation to $T_\mu$.

\begin{theorem}[Finite-depth consistency of the empirical mass-median tree]\label{thm:finite-depth-tree-consistency}
	Under Assumption~\ref{ass:mass_median_setting},
	fix an integer \(H\ge 1\). For \(0\le h\le H-1\) and \(1\le k\le 2^h\), let \(m_{h,k}\) and \(m_{h,k}^{(n)}\) denote the population and empirical split thresholds, respectively, as defined by Definition~\ref{definition1} and its empirical analogue.
	Define the finite-depth threshold vectors
	$
	\mathbf m^{(H)}
	:=
	\bigl(m_{h,k}: 0\le h\le H-1,\ 1\le k\le 2^h\bigr)$ and $
	\mathbf m_n^{(H)}
	:=
	\bigl(m_{h,k}^{(n)}: 0\le h\le H-1,\ 1\le k\le 2^h\bigr).
	$
	
	Then
	$
	\mathbf m_n^{(H)}\xrightarrow[]{P}\mathbf m^{(H)}
	\text{as }n\to\infty.
	$
	Equivalently,
	$
	\max_{\substack{0\le h\le H-1\\1\le k\le 2^h}}
	|m_{h,k}^{(n)}-m_{h,k}|
	\xrightarrow[]{P}0.
	$
	
	Moreover, if
	$
	\mathcal U_n(H)
	:=
	\Bigl\{
	x\in Q_{0,1}:
	\bigl(s_1^{(n)}(x),\dots,s_H^{(n)}(x)\bigr)
	\neq
	\bigl(s_1(x),\dots,s_H(x)\bigr)
	\Bigr\},
	$
	then
	$
	\mu\bigl(\mathcal U_n(H)\bigr)\xrightarrow[]{P}0.
	$
	
	In particular, for every fixed finite resolution \(H\), the empirical tree \(T_{\mu_n}\) converges to the population tree \(T_\mu\) in the sense that both the split-threshold vector and the induced address prefixes of length \(H\) are asymptotically consistent.
\end{theorem}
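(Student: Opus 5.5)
The proof will be an induction on depth \(h=0,1,\dots,H-1\). The statement to carry through the induction is
\[
\max_{k\le 2^h}\bigl|m^{(n)}_{h,k}-m_{h,k}\bigr|\xrightarrow{P}0
\qquad\text{together with}\qquad
\max_{k\le 2^{h+1}}\mu\bigl(Q^{(n)}_{h+1,k}\,\triangle\, Q_{h+1,k}\bigr)\xrightarrow{P}0 .
\]
Here \(Q^{(n)}_{h,k}\) are the empirical cells, built with the same axis schedule \(j(h)\) and the same tree ordering. Both the population and the empirical cells are boxes, since they are intersections of half-spaces orthogonal to coordinate axes. The key uniform tool is the Glivenko--Cantelli theorem over the VC class of axis-parallel boxes:
\[
\sup_{B}\bigl|\mu_n(B)-\mu(B)\bigr|\to 0 \quad\text{a.s.}
\]
I will also use that each \(Q_{h,k}\) has \(\mu(Q_{h,k})=2^{-h}\) and that, under Assumption~\ref{ass:mass_median_setting}, its Lebesgue volume is bounded below by \(2^{-h}/M\).

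For the step at depth \(h\), fix a cell \(Q=Q_{h,k}\) and its split coordinate \(j=j(h)\). Define
\[
F(s)=\mu\bigl(Q\cap\{x_j\le s\}\bigr),\qquad F_n(s)=\mu_n\bigl(Q^{(n)}\cap\{x_j\le s\}\bigr).
\]
The population threshold is the unique solution of \(F(m)=\mu(Q)/2\). The empirical threshold \(m^{(n)}\) is the corresponding empirical median; the precise convention from the empirical analogue of Definition~\ref{definition1} only affects the result by \(O(1/n)\). We can bound the difference as
\[
|F_n(s)-F(s)|\le \sup_B|\mu_n(B)-\mu(B)|+\mu\bigl(Q^{(n)}\triangle Q\bigr).
\]
The first term vanishes by Glivenko--Cantelli and the second by the induction hypothesis, both uniformly in \(s\). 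Next I need a strict increase of \(F\) near \(m\). Since \(f\ge m_0>0\) on \(Q_{0,1}\) (writing \(m_0\) for the density lower bound \(m\) of Assumption~\ref{ass:mass_median_setting}, to avoid a clash with the thresholds), and the cross-section of the box \(Q\) orthogonal to \(e_j\) has \((d-1)\)-volume \(a_Q>0\), we get
\[
F(m+\eta)-F(m)\ge m_0\,a_Q\,\eta
\]
for every \(\eta\) not exceeding the distance from \(m\) to the boundary of \(Q\) in direction \(j\). The same holds on the left side. This gives the standard quantile-consistency argument: for each \(\eta>0\), with probability tending to one we have \(F_n(m-\eta)<\mu_n(Q^{(n)})/2<F_n(m+\eta)\), and hence \(|m^{(n)}-m|\le\eta\). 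Here I also use \(\mu_n(Q^{(n)})\to\mu(Q)\). Because there are only finitely many cells, \(2^{H}-1\) in total, the maximum over \(k\) and \(h\) is handled by a union bound.

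Propagation to the next depth works as follows. The child cell is \(Q\cap\{x_j\le m\}\), and its empirical version is \(Q^{(n)}\cap\{x_j\le m^{(n)}\}\). So
\[
\mu(\text{child}^{(n)}\,\triangle\,\text{child})\le \mu\bigl(Q^{(n)}\triangle Q\bigr)+\mu\bigl(\{|x_j-m|\le|m^{(n)}-m|\}\cap Q_{0,1}\bigr),
\]
and the last term is at most \(M\cdot\mathrm{diam}(Q_{0,1})^{d-1}\cdot 2|m^{(n)}-m|\to0\) in probability. The address claim then follows from
\[
\mathcal U_n(H)\subset\bigcup_{k\le 2^H}\bigl(Q^{(n)}_{H,k}\triangle Q_{H,k}\bigr),
\]
which holds because a point whose depth-\(H\) population cell equals its empirical cell (same code) has identical prefixes. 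This set has \(\mu\)-measure tending to zero in probability.

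The main obstacle is the coupling between random cells and thresholds. The empirical median is computed inside a random cell \(Q^{(n)}\), not inside \(Q\), so one needs uniformity over all boxes rather than a fixed-cell law of large numbers. This is exactly why I take the VC/Glivenko--Cantelli route rather than a pointwise LLN. A secondary subtlety is ensuring that the identifiability margin \(m_0 a_Q\) is strictly positive; this follows from the density lower bound together with the nondegeneracy of the population boxes.
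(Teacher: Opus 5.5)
Your proposal is correct and follows essentially the same route as the paper. Both arguments use Glivenko--Cantelli over axis-aligned boxes, an induction on depth that carries threshold convergence together with $\mu$-symmetric-difference convergence of the cells, strict increase of the conditional CDF from the density lower bound, and a slab bound to propagate to the children. The differences are minor: you get threshold consistency from the sandwich $F_n(m-\eta)<\mu_n(Q^{(n)})/2<F_n(m+\eta)$ rather than the paper's jump-size plus linear-invertibility estimate, and you obtain $\mu(\mathcal U_n(H))\to 0$ from the depth-$H$ cell symmetric differences already controlled by your induction, instead of the paper's union of slabs $\{|x_{j(h)}-m_{h,k}|\le\delta_n\}$.
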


Consequently, the anchored construction based on the fixed map \(T_\mu\) is not merely a formal analytical device, but a statistically meaningful proxy. At the same time, this finite-depth consistency justifies the practical use of \(T_{\mu_n}\) as an asymptotic surrogate for \(T_\mu\).

  \section{The last-mile phenomenon and SRRM}
  \label{sec:The last-mile phenomenon and Numerical Implementation of SRRM}
  
  In this section, we study the last-mile phenomenon and introduce the improved framework SRRM. Section~\ref{sec:The last-mile phenomenon} introduces the basic RRM algorithm and further discusses two main factors that affect the plateau level. Section~\ref{sec:RRM and SRRM Algorithms and Complexity} presents the improved SRRM algorithm derived from the preceding analysis.
  
  \subsection{Empirical RRM and the last-mile phenomenon}
  \label{sec:The last-mile phenomenon}
  
  Let $N\in\mathbb{N}$ and let $X=\{x_i\}_{i=1}^N\subset\mathbb{R}^d$ and
  $Y=\{y_j\}_{j=1}^N\subset\mathbb{R}^d$ be two sample sets equipped with
  uniform empirical weights $1/N$. We compute the empirical $\mathrm{RRM}$ distance
  between $X$ and $Y$ as follows.
  
  For each sample set, we build an axis-cycling recursive mass-median partition and assign to each point
  its associated binary path code.
  We then order the points by the \emph{lexicographic order} of their path codes (the RRM tree-curve order).
  This yields permutations
  $\pi_X \leftarrow \mathrm{TreeCurveOrder}(X)$ and
  $\pi_Y \leftarrow \mathrm{TreeCurveOrder}(Y)$.
  
  The induced one-dimensional matching in the common code space is therefore obtained by monotone pairing along the resulting order. Therefore, we evaluate
  the empirical $\mathrm{RRM}$ distance by
  \begin{equation}
  	\mathrm{RRM}(X,Y)
  	=
  	\left(
  	\frac{1}{N}\sum_{k=1}^N
  	\bigl\|x_{\pi_X(k)}-y_{\pi_Y(k)}\bigr\|_2^2
  	\right)^{1/2}.
  \end{equation}
  
  This process is illustrated in Fig.~\ref{fig:five_row}, and a pseudo-code implementation
  can be found in Alg.~\ref{alg:emp_rrm_uniform}.

  \begin{figure}[htbp]
  	\centering
  	\begin{subfigure}[t]{0.25\linewidth}
  		\begin{overpic}[width=\linewidth]{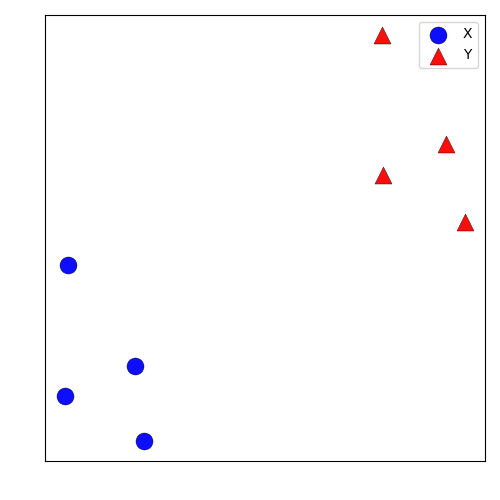}
  			\put(50,0){\small (a)}
  		\end{overpic}
  	\end{subfigure}\hspace{0.5em}
  	\begin{subfigure}[t]{0.25\linewidth}
  		\begin{overpic}[width=\linewidth]{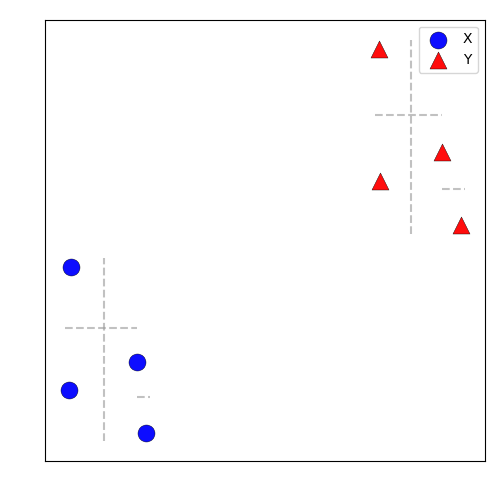}
  			\put(50,0){\small (b)}
  		\end{overpic}
  	\end{subfigure}\hspace{0.5em}
  	\begin{subfigure}[t]{0.25\linewidth}
  		\begin{overpic}[width=\linewidth]{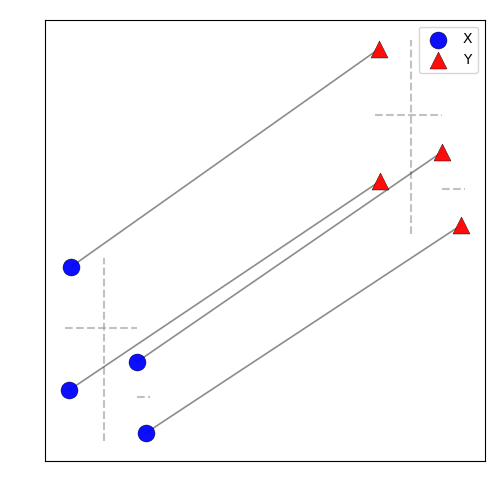}
  			\put(50,0){\small (c)}
  		\end{overpic}
  	\end{subfigure}
  	
  	\caption{An illustration of Alg.~\ref{alg:emp_rrm_uniform} when $d=2$.
  		(a) Source data points (blue) and target data points (red).
  		(b) axis-aligned recursive partitioning.
  		(c) connect corresponding points.}
  	\label{fig:five_row}
  \end{figure}

  \begin{algorithm}[htbp]
  	\caption{Empirical $\mathrm{RRM}$ (uniform weights)}
  	\label{alg:emp_rrm_uniform}
  	\begin{algorithmic}[1]
  		\Require samples $X=\{x_i\}_{i=1}^{N}$, $Y=\{y_j\}_{j=1}^{N}$
  		\Ensure distance $\mathrm{RRM}(X,Y)$
  		\State $\pi_X\gets \Call{TreeCurveOrder}{X}$;\ \ $\pi_Y\gets \Call{TreeCurveOrder}{Y}$
  		\State $\mathrm{RRM}(X,Y)\gets \left(\frac{1}{N}\sum_{k=1}^{N}\left\|x_{\pi_X(k)}-y_{\pi_Y(k)}\right\|_2^{2}\right)^{1/2}$
  		\State \Return $\mathrm{RRM}(X,Y)$
  	\end{algorithmic}
  \end{algorithm}
  
  Although empirical RRM is simple and efficient, experiments reveal a persistent bias floor in the near-zero discrepancy regime. As illustrated in Fig.~\ref{fig:ten-two-rows1}, there exist nearby cross-distribution pairs that would be matched under an ideal coupling, 
  but the empirical recursive partitions place them into different depth-$H$ leaves. 
  Under the address-restricted RRM rule, cross-leaf matching is forbidden, so these near-neighbor pairs can never be aligned. 
  As a result, many points are forced to match with a farther counterpart within its own leaf, yielding a non-vanishing mismatch penalty and hence a positive bias floor. 
  Consequently, even when the true Wasserstein distance is very small (or zero), the recursive-partition distance may remain bounded away from zero, yielding the last-mile effect.

  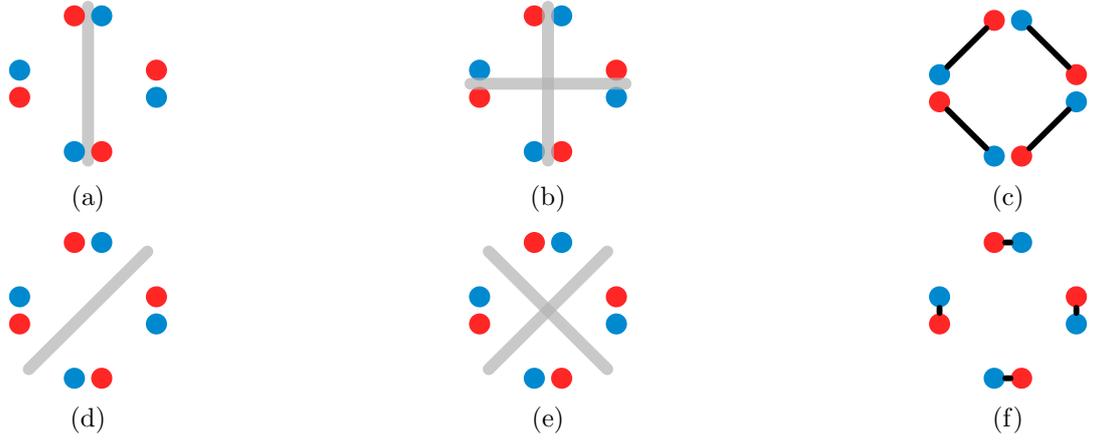
\begin{figure}[t]
  	\centering
  	
  	\begin{subfigure}{0.32\linewidth}
  		\centering
  		\begin{tikzpicture}[scale=0.6,
  			redpt/.style ={circle, fill=red!85,  draw=none, inner sep=2.8pt},
  			bluept/.style={circle, fill=cyan!70!blue, draw=none, inner sep=2.8pt},
  			cutline/.style={line width=4.5pt, draw=gray!60, opacity=0.7, line cap=round}
  			]
  			\draw[cutline] (0, 1.70) -- (0,-1.70);
  			
  			\node[redpt]  at (-0.30, 1.50) {};
  			\node[bluept] at (0.30, 1.50) {};
  			
  			\node[bluept] at (-0.30, -1.50) {};
  			\node[redpt]  at (0.30,-1.50) {};
  			
  			\node[redpt]  at (1.50, 0.30) {};
  			\node[bluept] at (1.50, -0.30) {};
  			
  			\node[redpt]  at (-1.50, -0.30) {};
  			\node[bluept] at (-1.50, 0.30) {};
  		\end{tikzpicture}
  		\caption{}
  	\end{subfigure}\hfill
  	\begin{subfigure}{0.32\linewidth}
  		\centering
  		\begin{tikzpicture}[scale=0.6,
  			redpt/.style ={circle, fill=red!85,  draw=none, inner sep=2.8pt},
  			bluept/.style={circle, fill=cyan!70!blue, draw=none, inner sep=2.8pt},
  			cutline/.style={line width=4.5pt, draw=gray!60, opacity=0.7, line cap=round},
  			match/.style={line width=2.2pt, draw=black, line cap=round}
  			]
  			\node[redpt]  at (-0.30, 1.50) {};
  			\node[bluept] at (0.30, 1.50) {};
  			
  			\node[bluept] at (-0.30, -1.50) {};
  			\node[redpt]  at (0.30,-1.50) {};
  			
  			\node[redpt]  at (1.50, 0.30) {};
  			\node[bluept] at (1.50, -0.30) {};
  			
  			\node[redpt]  at (-1.50, -0.30) {};
  			\node[bluept] at (-1.50, 0.30) {};
  			\draw[cutline]  (0, 1.70) -- (0,-1.70);
  			\draw[cutline]  (-1.70, 0) -- (1.70,0);
  		\end{tikzpicture}
  		\caption{}
  	\end{subfigure}\hfill
  	\begin{subfigure}{0.32\linewidth}
  		\centering
  		\begin{tikzpicture}[scale=0.6,
  			redpt/.style ={circle, fill=red!85,  draw=none, inner sep=2.8pt},
  			bluept/.style={circle, fill=cyan!70!blue, draw=none, inner sep=2.8pt},
  			match/.style={line width=2.2pt, draw=black, line cap=round}
  			]
  			\node[redpt]  (rt) at (-0.30,  1.50) {}; 
  			\node[bluept] (bt) at ( 0.30,  1.50) {}; 
  			
  			\node[bluept] (bb) at (-0.30, -1.50) {}; 
  			\node[redpt]  (rb) at ( 0.30, -1.50) {}; 
  			
  			\node[redpt]  (rr) at ( 1.50,  0.30) {}; 
  			\node[bluept] (br) at ( 1.50, -0.30) {}; 
  			
  			\node[redpt]  (rl) at (-1.50, -0.30) {}; 
  			\node[bluept] (bl) at (-1.50,  0.30) {}; 
  			
  			\draw[match] (rt) -- (bl);
  			\draw[match] (rr) -- (bt);
  			\draw[match] (rb) -- (br);
  			\draw[match] (rl) -- (bb);
  			
  		\end{tikzpicture}
  		\caption{}
  	\end{subfigure}
  	
  	\vspace{1.2ex}
  	
  	\begin{subfigure}{0.32\linewidth}
  		\centering
  		\begin{tikzpicture}[scale=0.6,
  			redpt/.style ={circle, fill=red!85,  draw=none, inner sep=2.8pt},
  			bluept/.style={circle, fill=cyan!70!blue, draw=none, inner sep=2.8pt},
  			cutline/.style={line width=4.5pt, draw=gray!60, opacity=0.7, line cap=round},
  			match/.style={line width=2.2pt, draw=black, line cap=round}
  			]
  			\node[redpt]  at (-0.30, 1.50) {};
  			\node[bluept] at (0.30, 1.50) {};
  			
  			\node[bluept] at (-0.30, -1.50) {};
  			\node[redpt]  at (0.30,-1.50) {};
  			
  			\node[redpt]  at (1.50, 0.30) {};
  			\node[bluept] at (1.50, -0.30) {};
  			
  			\node[redpt]  at (-1.50, -0.30) {};
  			\node[bluept] at (-1.50, 0.30) {};
  			
  			\draw[cutline] (1.30,1.30) -- (-1.30,-1.30);
  		\end{tikzpicture}
  		\caption{}
  	\end{subfigure}\hfill
  	\begin{subfigure}{0.32\linewidth}
  		\centering
  		\begin{tikzpicture}[scale=0.6,
  			redpt/.style ={circle, fill=red!85,  draw=none, inner sep=2.8pt},
  			bluept/.style={circle, fill=cyan!70!blue, draw=none, inner sep=2.8pt},
  			cutline/.style={line width=4.5pt, draw=gray!60, opacity=0.7, line cap=round},
  			match/.style={line width=2.2pt, draw=black, line cap=round}
  			]
  			\node[redpt]  at (-0.30, 1.50) {};
  			\node[bluept] at (0.30, 1.50) {};
  			
  			\node[bluept] at (-0.30, -1.50) {};
  			\node[redpt]  at (0.30,-1.50) {};
  			
  			\node[redpt]  at (1.50, 0.30) {};
  			\node[bluept] at (1.50, -0.30) {};
  			
  			\node[redpt]  at (-1.50, -0.30) {};
  			\node[bluept] at (-1.50, 0.30) {};
  			
  			\draw[cutline] (1.30,1.30) -- (-1.30,-1.30);
  			\draw[cutline] (-1.30,1.30) -- (1.30,-1.30);
  		\end{tikzpicture}
  		\caption{}
  	\end{subfigure}\hfill
  	\begin{subfigure}{0.32\linewidth}
  		\centering
  		\begin{tikzpicture}[scale=0.6,
  			redpt/.style ={circle, fill=red!85,  draw=none, inner sep=2.8pt},
  			bluept/.style={circle, fill=cyan!70!blue, draw=none, inner sep=2.8pt},
  			match/.style={line width=2.2pt, draw=black, line cap=round}
  			]
  			\node[redpt]  (rt) at (-0.30,  1.50) {}; 
  			\node[bluept] (bt) at ( 0.30,  1.50) {}; 
  			
  			\node[bluept] (bb) at (-0.30, -1.50) {}; 
  			\node[redpt]  (rb) at ( 0.30, -1.50) {}; 
  			
  			\node[redpt]  (rr) at ( 1.50,  0.30) {}; 
  			\node[bluept] (br) at ( 1.50, -0.30) {}; 
  			
  			\node[redpt]  (rl) at (-1.50, -0.30) {}; 
  			\node[bluept] (bl) at (-1.50,  0.30) {}; 
  			
  			\draw[match] (rt) -- (bt);
  			\draw[match] (rr) -- (br);
  			\draw[match] (rb) -- (bb);
  			\draw[match] (rl) -- (bl);
  			
  		\end{tikzpicture}
  		\caption{}
  	\end{subfigure}
  	
  	\caption{Top row: Figures (a)–(c) show incorrect partitioning. Bottom row: Figures (d)–(f) show correct partitioning.  }
  	\label{fig:ten-two-rows1}
  \end{figure}
  
  Since a global translation does not affect the quadratic transport geometry, we first center the samples so that their empirical barycenters coincide.
  Fix a truncation depth \(H\) and construct the depth-\(H\) recursive partition trees \(T_\mu\) and \(T_\nu\) as in Definition~\ref{lem:conditional-holder1}. For \(x,y\in\mathbb{R}^d\), let
  $
  s_H(x,y)
  :=
  \max\Bigl\{h\in\{0,\dots,H\}:
  \bigl(s_1(x),\dots,s_h(x)\bigr)=\bigl(s_1(y),\dots,s_h(y)\bigr)
  \Bigr\}.
  $
  
  Given samples $X=\{x_i\}_{i=1}^n$ and $Y=\{y_j\}_{j=1}^n$, for each $x_i$ choose a 
  nearest-neighbor (NN) partner
  $
  j_{\mathrm{nn}}(i)\in\arg\min_{j\in[n]}\ \|x_i-y_j\|_2,
  \delta_i:=\|x_i-y_{j_{\mathrm{nn}}(i)}\|_2 .
  $
  To calibrate geometry with tree depth, define the geometry-calibrated depth.
  \begin{equation}\label{eq: geometry}
  	\ell_{H,\rho}(x,y)
  	:=
  	\min\Big\{
  	H,\,
  	\Big\lceil d\log_{1/\rho}\frac{C}{\|x-y\|_2}\Big\rceil
  	\Big\},
  	\qquad
  	\ell_{H,\rho}(x,x):=H.
  \end{equation}
  
  We say that $x_i$ is prematurely separated if its NN partner
  $y_{j_{\mathrm{nn}}(i)}$ is split too early by the tree, namely if
  $
  s_H\!\big(x_i,y_{j_{\mathrm{nn}}(i)}\big)
  \;<\;
  \ell_{H,\rho}\!\big(x_i,y_{j_{\mathrm{nn}}(i)}\big).
  $
  
  \begin{definition}[Premature splitting under depth-$H$ address restriction]\label{def:premature_split_nn}
  	Define the corresponding bad index set
  	\begin{equation}\label{eq: bad index}
  		I_+
  		:=\Big\{\, i\in[n]:\ s_H\!\big(x_i,y_{j_{\mathrm{nn}}(i)}\big)
  		< \ell_{H,\rho}\!\big(x_i,y_{j_{\mathrm{nn}}(i)}\big)\,\Big\}.
  	\end{equation}
  \end{definition}
  
  \begin{theorem}[NN-referenced proportion--severity structure]\label{thm:nn_plateau}
  	Adopt the notation in Definition~\ref{def:premature_split_nn}.
  	Let $\sigma$ be any depth-$H$ address-restricted permutation.
  	For each $i$, define the NN-excess
  	$
  	\Gamma_i^{\mathrm{NN}}
  	:=\Big(\|x_i-y_{\sigma(i)}\|_2^2-\delta_i^2\Big)_+.
  	$
  	Let $I_+$ be the prematurely separated index set and define
  	$
  	\alpha_H := \frac{|I_+|}{n} , 
  	\bar{\Gamma}_H^{\mathrm{NN}}
  	:=\frac{1}{|I_+|}\sum_{i\in I_+}\Gamma_i^{\mathrm{NN}},
  	$
  	with $\bar{\Gamma}_H^{\mathrm{NN}}:=0$ if $I_+=\emptyset$.
  	Then
  	\begin{equation}\label{eq:NN}
  		\mathrm{RRM}_n^2
  		=
  		\frac1n\sum_{i=1}^n \|x_i-y_{\sigma(i)}\|_2^2
  		\ \ge\
  		\frac1n\sum_{i=1}^n \delta_i^2
  		\ +\ \alpha_H\,\bar{\Gamma}_H^{\mathrm{NN}}.
  	\end{equation}
  \end{theorem}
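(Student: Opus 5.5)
The inequality is elementary: it combines a pointwise nearest-neighbour lower bound with the identity $a=b+(a-b)$ for $a\ge b$. The strategy is to split each summand $\|x_i-y_{\sigma(i)}\|_2^2$ into its nearest-neighbour part and an excess, and then to discard the excess on the indices outside $I_+$.

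The first step is to record the pointwise bound. By definition of $j_{\mathrm{nn}}(i)$ as a minimizer over all $j\in[n]$, we have
\[
\|x_i-y_{\sigma(i)}\|_2 \ \ge\ \min_{j\in[n]}\|x_i-y_j\|_2=\delta_i
\]
for every index $i$ and every permutation $\sigma$. This holds in particular for any depth-$H$ address-restricted one, since such a $\sigma$ still maps into $[n]$. Consequently $\|x_i-y_{\sigma(i)}\|_2^2-\delta_i^2\ge 0$, so the positive part in $\Gamma_i^{\mathrm{NN}}$ is inactive, and we obtain the exact decomposition $\|x_i-y_{\sigma(i)}\|_2^2=\delta_i^2+\Gamma_i^{\mathrm{NN}}$.

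The second step averages this decomposition over $i$ and splits the excess sum according to $I_+$ and its complement:
\[
\frac1n\sum_{i=1}^n\|x_i-y_{\sigma(i)}\|_2^2
=\frac1n\sum_{i=1}^n\delta_i^2+\frac1n\sum_{i\in I_+}\Gamma_i^{\mathrm{NN}}+\frac1n\sum_{i\notin I_+}\Gamma_i^{\mathrm{NN}} .
\]
The last sum is nonnegative and is dropped. The middle term equals $\frac{|I_+|}{n}\cdot\frac{1}{|I_+|}\sum_{i\in I_+}\Gamma_i^{\mathrm{NN}}=\alpha_H\bar\Gamma_H^{\mathrm{NN}}$ when $I_+\neq\emptyset$. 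When $I_+=\emptyset$ it is zero, which matches the stated convention $\bar\Gamma_H^{\mathrm{NN}}=0$. Together these give \eqref{eq:NN}.

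The first equality in \eqref{eq:NN} should also be justified. It holds because the empirical RRM with uniform weights, as computed in Algorithm~\ref{alg:emp_rrm_uniform}, is exactly the average cost of the monotone pairing along tree-curve order. Setting $\sigma=\pi_Y\circ\pi_X^{-1}$ gives $\frac1n\sum_k\|x_{\pi_X(k)}-y_{\pi_Y(k)}\|^2=\frac1n\sum_i\|x_i-y_{\sigma(i)}\|^2$. For a general address-restricted $\sigma$, the left-hand side should be read as $\frac1n\sum_i\|x_i-y_{\sigma(i)}\|_2^2$, so the inequality holds for every such $\sigma$.

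There is no real obstacle; the only point needing care is this bookkeeping. Neither the address restriction nor the definition of $I_+$ through $\ell_{H,\rho}$ enters the inequality itself. They matter only for interpreting $\alpha_H\bar\Gamma_H^{\mathrm{NN}}$ as the plateau term produced by prematurely separated pairs.
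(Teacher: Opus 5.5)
Your proposal is correct and matches the paper's proof: both use the pointwise bound $\|x_i-y_{\sigma(i)}\|_2\ge\delta_i$ to make the positive part in $\Gamma_i^{\mathrm{NN}}$ inactive, write the exact decomposition $\frac1n\sum_i\|x_i-y_{\sigma(i)}\|_2^2=\frac1n\sum_i\delta_i^2+\frac1n\sum_i\Gamma_i^{\mathrm{NN}}$, and drop the nonnegative terms with $i\notin I_+$. Your extra bookkeeping for the first equality, via $\sigma=\pi_Y\circ\pi_X^{-1}$, is a correct clarification of what the paper treats as a definition of $\mathrm{RRM}_n^2$.
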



  Here $\delta_i:=\min_{j\in[n]}\|x_i-y_j\|_2$ is the NN matching baseline, while
  $\alpha_H:=|I_+|/n$ is the fraction of prematurely separated indices whose NN partners are
  rendered infeasible by the depth-$H$ address restriction. The term
  $\bar{\Gamma}_H^{\mathrm{NN}}$ aggregates the severity of these bad indices through the
  NN-excess penalty. 
  
  Consequently, in the near-zero discrepancy regime, the NN baseline
  $\frac1n\sum_i\delta_i^2$ is typically very small. Moreover, in many practically relevant settings, the NN baseline coincides with, or is at least very close to, the OT baseline. In this regime, the RRM plateau is therefore
  dominated by the premature-splitting term $\alpha_H\,\bar{\Gamma}_H^{\mathrm{NN}}$:
  the plateau height is governed jointly by the \emph{fraction} of bad indices $\alpha_H$ and
  their average \emph{severity} $\bar{\Gamma}_H^{\mathrm{NN}}$.
  In short, even when the underlying discrepancy is very small, a non-vanishing bias floor may
  persist as long as a nontrivial mass of prematurely separated NN pairs remains.
  
  \subsection{SRRM algorithms and complexity}
  \label{sec:RRM and SRRM Algorithms and Complexity}

  To mitigate the last-mile issue, one would ideally design each recursive split to avoid separating nearby paired points. In practice, however, controlling the splitting direction at every recursion step is cumbersome, and preventing all such separations is generally impossible. This leads us to focus on the points affected by a fixed splitting rule, and on this basis propose a point-selection heuristic algorithm, summarized as follows:

  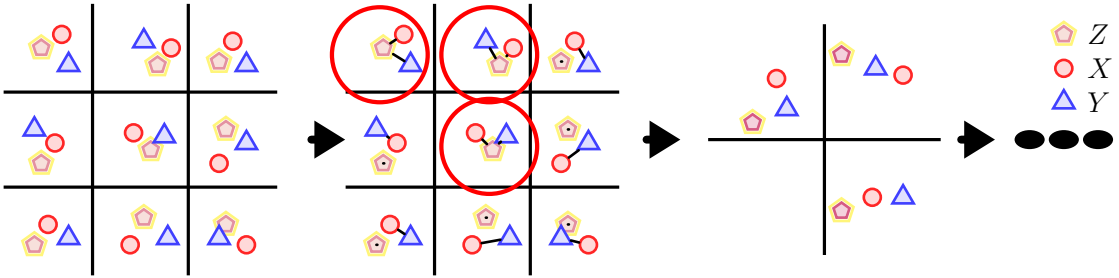
\begin{figure}[htbp]
  	\centering
  	\begin{tikzpicture}[
  		scale=0.9, transform shape,
  		>={Triangle[length=3pt,width=2pt]},
  		bigarrow/.style={-Triangle, line width=3pt, draw=black, line cap=round},
  		gridline/.style={line width=1.2pt, draw=black},
  		badcircle/.style={draw=red, line width=1.6pt},
  		zpt/.style={regular polygon, regular polygon sides=5, draw=purple!65, fill=purple!18, line width=1pt,inner sep=2pt, minimum size=3pt},
  		zptt/.style={regular polygon, regular polygon sides=5, draw=yellow!65, fill=yellow!18,fill opacity=0.35, line width=1pt,inner sep=3pt, minimum size=5pt},
  		xpt/.style={circle, draw=red!85, fill=red!15, line width=1pt,inner sep=2.5pt, minimum size=3pt},
  		ypt/.style={regular polygon, regular polygon sides=3, draw=blue!75, fill=blue!12, line width=1pt,inner sep=1.8pt, minimum size=3pt},
  		match/.style={line width=1pt, draw=black, line cap=round},
  		tinylabel/.style={font=\large}
  		]
  		
  		\newcommand{\gridthree}{%
  			\draw[gridline] (-2,  0.7) -- ( 2,  0.7);
  			\draw[gridline] (-2, -0.7) -- ( 2, -0.7);
  			\draw[gridline] (-0.7,-2)  -- (-0.7, 2);
  			\draw[gridline] ( 0.7,-2)  -- ( 0.7, 2);
  		}
  		
  		\begin{scope}[shift={(0,0)}]
  			\gridthree
  			
  			\node[zpt] at (-1.45, 1.35) {};
  			\node[zptt] at (-1.45, 1.35) {};
  			\node[xpt] at (-1.15, 1.55) {};
  			\node[ypt] at (-1.05, 1.10) {};
  			
  			\node[ypt] at ( 0.05, 1.45) {};
  			\node[xpt] at ( 0.45, 1.35) {};
  			\node[zpt] at ( 0.25, 1.10) {};
  			\node[zptt] at ( 0.25, 1.10) {};
  			
  			\node[xpt] at ( 1.35, 1.45) {};
  			\node[ypt] at ( 1.55, 1.10) {};
  			\node[zpt] at ( 1.15, 1.15) {};
  			\node[zptt] at ( 1.15, 1.15) {};
  			
  			\node[ypt] at (-1.55, 0.15) {};
  			\node[xpt] at (-1.25,-0.05) {};
  			\node[zpt] at (-1.45,-0.35) {};
  			\node[zptt] at (-1.45,-0.35) {};
  			
  			\node[xpt] at (-0.10, 0.10) {};
  			\node[zpt] at ( 0.15,-0.15) {};
  			\node[zptt] at ( 0.15,-0.15) {};
  			\node[ypt] at ( 0.35, 0.05) {};
  			
  			\node[zpt] at ( 1.25, 0.15) {};
  			\node[zptt] at ( 1.25, 0.15) {};
  			\node[ypt] at ( 1.55,-0.05) {};
  			\node[xpt] at ( 1.15,-0.35) {};
  			
  			\node[xpt] at (-1.35,-1.25) {};
  			\node[zpt] at (-1.55,-1.55) {};
  			\node[zptt] at (-1.55,-1.55) {};
  			\node[ypt] at (-1.05,-1.45) {};
  			
  			\node[zpt] at ( 0.05,-1.15) {};
  			\node[zptt] at ( 0.05,-1.15) {};
  			\node[ypt] at ( 0.40,-1.45) {};
  			\node[xpt] at (-0.15,-1.55) {};
  			
  			\node[zpt] at ( 1.25,-1.25) {};
  			\node[zptt] at ( 1.25,-1.25) {};
  			\node[xpt] at ( 1.55,-1.55) {};
  			\node[ypt] at ( 1.15,-1.45) {};
  		\end{scope}
  		
  		\draw[bigarrow] (2.5,0) -- (3,0);
  		
  		\begin{scope}[shift={(5.0,0)}]
  			\gridthree
  			
  			\node[zpt] (zq) at (-1.45, 1.35) {};
  			\node[zptt] (vq) at (-1.45, 1.35) {};
  			\node[xpt] (xq) at (-1.15, 1.55) {};
  			\node[ypt] (yq) at (-1.05, 1.10) {};
  			
  			\draw[match] (zq) -- (xq);
  			\draw[match] (vq) -- (yq);
  			
  			\node[ypt] (zw) at ( 0.05, 1.45) {};
  			\node[xpt] (vw) at ( 0.45, 1.35) {};
  			\node[zpt] (xw) at ( 0.25, 1.10) {};
  			\node[zptt] (yw) at ( 0.25, 1.10) {};
  			\draw[match] (zw) -- (xw);
  			\draw[match] (vw) -- (yw);
  			
  			\node[xpt] (ze) at ( 1.35, 1.45) {};
  			\node[ypt] (ve) at ( 1.55, 1.10) {};
  			\node[zpt] (xe) at ( 1.15, 1.15) {};
  			\node[zptt] (ye) at ( 1.15, 1.15) {};
  			\draw[match] (ze) -- (ve);
  			\draw[match] (xe) -- (ye);
  			
  			\node[ypt] (zr) at (-1.55, 0.15) {};
  			\node[xpt] (vr) at (-1.25,-0.05) {};
  			\node[zpt] (xr) at (-1.45,-0.35) {};
  			\node[zptt] (yr) at (-1.45,-0.35) {};
  			\draw[match] (zr) -- (vr);
  			\draw[match] (xr) -- (yr);
  			
  			\node[xpt] (zt) at (-0.10, 0.10) {};
  			\node[zpt] (vt) at ( 0.15,-0.15) {};
  			\node[zptt] (xt) at ( 0.15,-0.15) {};
  			\node[ypt] (yt) at ( 0.35, 0.05) {};
  			\draw[match] (zt) -- (vt);
  			\draw[match] (xt) -- (yt);
  			
  			\node[zpt] (zy) at ( 1.25, 0.15) {};
  			\node[zptt] (vy)at ( 1.25, 0.15) {};
  			\node[ypt] (xy)at ( 1.55,-0.05) {};
  			\node[xpt] (yy) at ( 1.15,-0.35) {};
  			\draw[match] (zy) -- (vy);
  			\draw[match] (xy) -- (yy);
  			
  			\node[xpt] (xu) at (-1.35,-1.25) {};
  			\node[zpt] (zu) at (-1.55,-1.55) {};
  			\node[zptt] (vu) at (-1.55,-1.55) {};
  			\node[ypt] (yu) at (-1.05,-1.45) {};
  			\draw[match] (zu) -- (vu);
  			\draw[match] (xu) -- (yu);
  			
  			\node[zpt] (zi) at ( 0.05,-1.15) {};
  			\node[zptt] (vi) at ( 0.05,-1.15) {};
  			\node[ypt] (yi) at ( 0.40,-1.45) {};
  			\node[xpt] (xi) at (-0.15,-1.55) {};
  			\draw[match] (zi) -- (vi);
  			\draw[match] (xi) -- (yi);
  			
  			\node[zpt] (zo) at ( 1.25,-1.25) {};
  			\node[zptt] (vo) at ( 1.25,-1.25) {};
  			\node[xpt] (xo) at ( 1.55,-1.55) {};
  			\node[ypt] (yo) at ( 1.15,-1.45) {};
  			\draw[match] (zo) -- (vo);
  			\draw[match] (xo) -- (yo);
  			
  			\draw[badcircle] (-1.5, 1.25) circle (0.70);
  			\draw[badcircle] ( 0.10, 1.25) circle (0.70);
  			\draw[badcircle] ( 0.10,-0.10) circle (0.70);
  		\end{scope}
  		
  		\draw[bigarrow] (7.4,0) -- (7.9,0);
  		
  		
  		\begin{scope}[shift={(10,0)}]
  			
  			\draw[gridline] (-1.7,0) -- (1.7,0);
  			\draw[gridline] (0,-1.7) -- (0,1.7);
  			
  			\node[zptt] at (-1.05,  0.25) {};
  			\node[zpt] at (-1.05,  0.25) {};
  			\node[xpt]  at (-0.70,  0.90) {};
  			\node[ypt]  at (-0.55,  0.45) {};
  			
  			\node[zptt] at ( 0.25,  1.25) {};
  			\node[zpt] at ( 0.25,  1.25) {};
  			\node[xpt]  at ( 1.15,  0.95) {};
  			\node[ypt]  at ( 0.75,  1.05) {};

  			\node[zptt] at ( 0.25, -1.05) {};
  			\node[zpt] at ( 0.25, -1.05) {};
  			\node[xpt]  at ( 0.70, -0.85) {};
  			\node[ypt]  at ( 1.15, -0.85) {};
  			
  		\end{scope}
  		
  		\draw[bigarrow] (12,0) -- (12.5,0);
  		
  		\begin{scope}[shift={(13,0)}]
  			\node[zpt] at (0.5, 1.55) {};
  			\node[zptt] at (0.5, 1.55) {};
  			\node[tinylabel, right=6pt] at (0.5, 1.55) {$Z$};
  			
  			\node[xpt] at (0.5, 1.05) {};
  			\node[tinylabel, right=6pt] at (0.5, 1.05) {$X$};
  			
  			\node[ypt] at (0.5,0.55) {};
  			\node[tinylabel, right=6pt] at (0.5,0.55) {$Y$};
  			
  			\fill (0,0) ellipse (0.22 and 0.14);
  			\fill (0.5,0) ellipse (0.22 and 0.14);
  			\fill (1,0) ellipse (0.22 and 0.14);
  		\end{scope}
  		
  	\end{tikzpicture}
  	\caption{Schematic of the SRRM iterative procedure.}
  	\label{fig:example1}
  \end{figure}

  This point-selection method constructs a strictly bijective matching between two point sets $X$ and $Y$ of equal size. It maintains a global plan $\pi$, together with two index sets $\mathcal{C}_X$ and $\mathcal{C}_Y$ that represent the currently unresolved points on each side. Specifically, $\mathcal{C}_X$ is the set of indices of points in $X$ whose matches have not yet been reliably determined, while $\mathcal{C}_Y$ contains the indices of points in $Y$ that are not yet firmly assigned (i.e., still available). Initially, $\pi(i)=-1$ for all $i$ and $\mathcal{C}_X=\mathcal{C}_Y=\{1,\dots,n\}$.
  
  The key idea is to introduce an anchor set $Z$ to absorb ambiguous matches, so that reliable pairs can be fixed early and the unresolved sets shrink across rounds. At each round, the method operates only on the current subsets $X[\mathcal{C}_X]$ and $Y[\mathcal{C}_Y]$. If a point $x\in X[\mathcal{C}_X]$ is matched to a point $y\in Y[\mathcal{C}_Y]$ (a real-to-real correspondence), this pair is regarded as reliable, written into the global plan $\pi$, and removed from further consideration. Otherwise, if $x$ is matched to an anchor in $Z$, it is treated as a hard point and its index is retained in the updated $\mathcal{C}_X$ for the next round. Symmetrically, any $y\in Y[\mathcal{C}_Y]$ that is ``captured'' by anchors in the augmented matching is retained in the updated $\mathcal{C}_Y$. As the iterations proceed, more reliable pairs are fixed in $\pi$, while $\mathcal{C}_X$ and $\mathcal{C}_Y$ typically shrink to a small set of difficult points.
  
  After the iterative screening, the plan $\pi$ may still contain a few unassigned entries or conflicts. To enforce a strict bijection, the remaining unmatched points in $X$ are paired with the unused points in $Y$ by solving a minimum-cost assignment problem (e.g., with cost $\|x-y\|_2^2$) using the Hungarian algorithm\cite{kuhn1955hungarian}, which completes $\pi$ into a valid permutation.
  
  The quality of the matching $T$ obtained at each iteration directly determines how many anchors will be introduced in the subsequent round. Therefore, to improve the overall performance of the SRRM, it is crucial to enhance the quality of $T$. To this end, we adopt the plan-merging strategy proposed in \cite{ref12} to refine $T$ before performing point selection. Fig.~\ref{fig:ten-two-rows} illustrates how the quality of $T$ influences the behavior and effectiveness of the Point Selection Method.
  
  The SRRM procedure is summarized in Algorithm.~\ref{alg:psm}, with a schematic illustration shown in Fig.~\ref{fig:example1}.
  
  \begin{figure*}[htbp] 
  	\centering
  	
  	\begin{subfigure}[t]{0.16\textwidth}
  		\centering
  		\includegraphics[width=\linewidth]{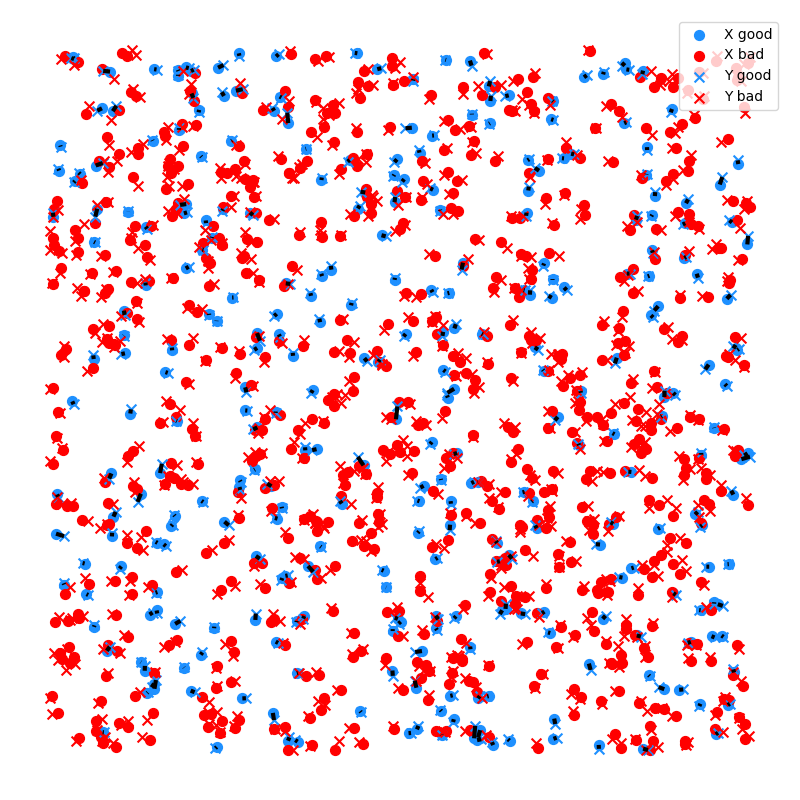}
  		
  	\end{subfigure}\hfill
  	\begin{subfigure}[t]{0.16\textwidth}
  		\centering
  		\includegraphics[width=\linewidth]{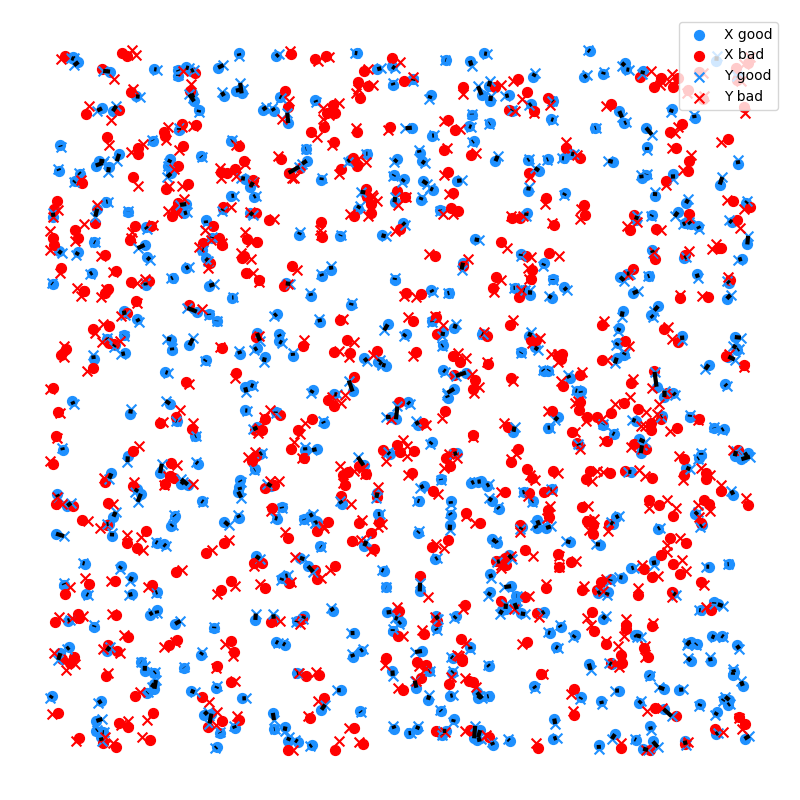}
  		
  	\end{subfigure}\hfill
  	\begin{subfigure}[t]{0.16\textwidth}
  		\centering
  		\includegraphics[width=\linewidth]{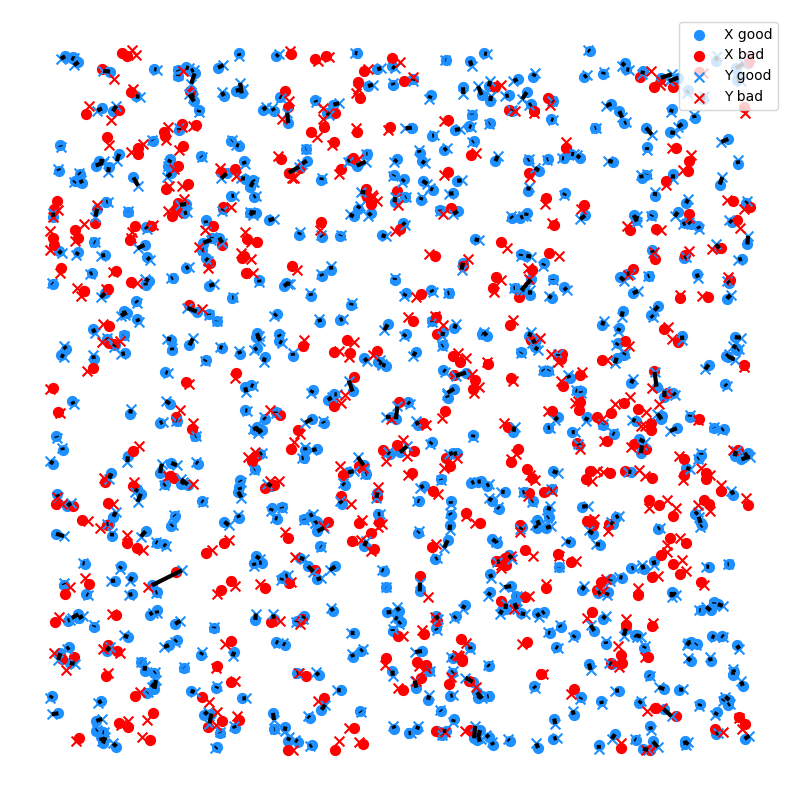}
  		
  	\end{subfigure}\hfill
  	\begin{subfigure}[t]{0.16\textwidth}
  		\centering
  		\includegraphics[width=\linewidth]{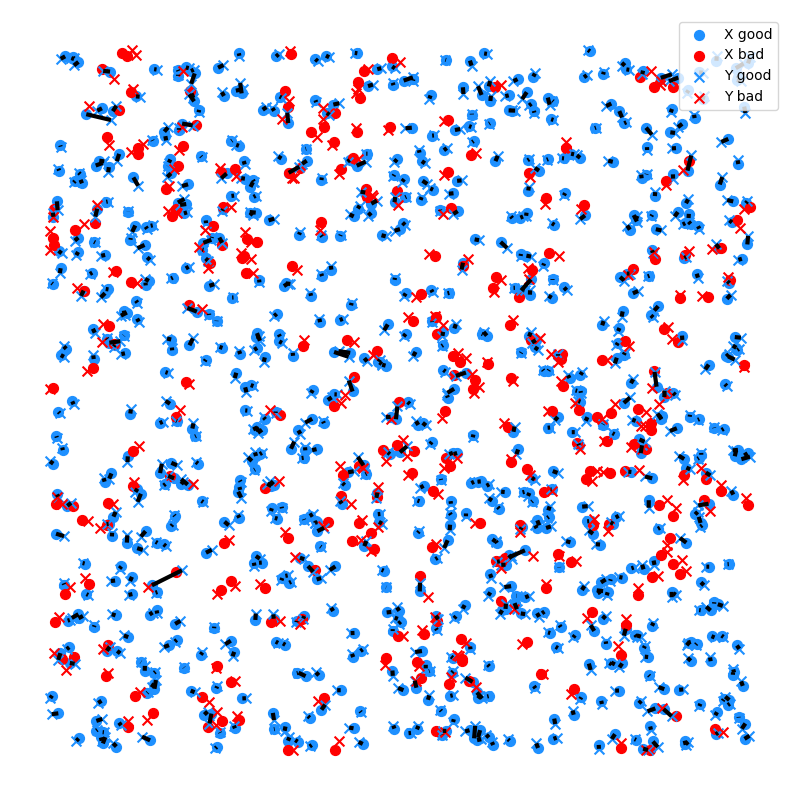}
  		
  	\end{subfigure}\hfill
  	\begin{subfigure}[t]{0.16\textwidth}
  		\centering
  		\includegraphics[width=\linewidth]{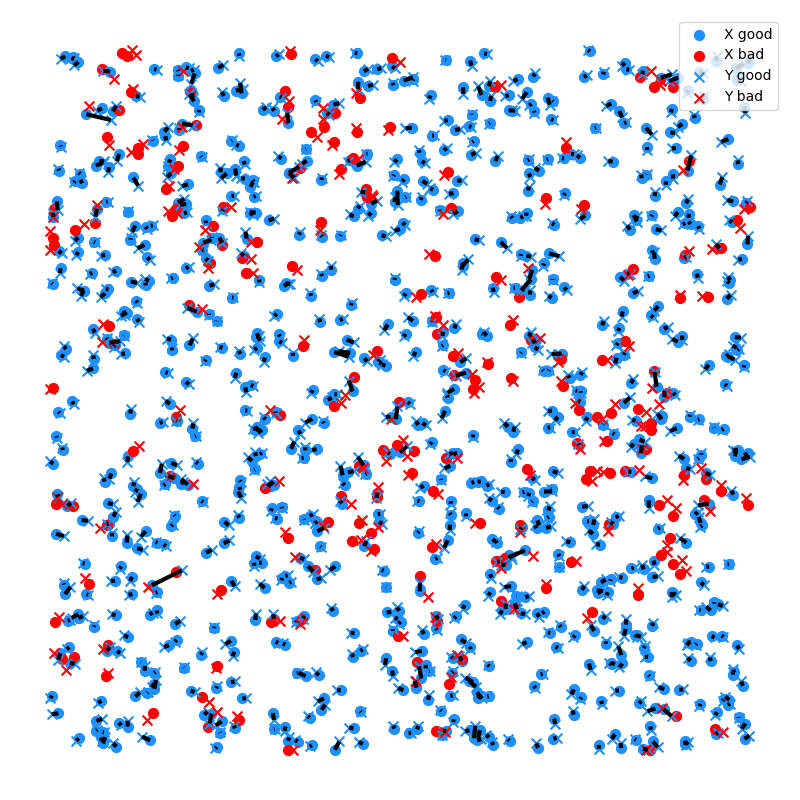}
  		
  	\end{subfigure}
  	\begin{subfigure}[t]{0.16\textwidth}
  		\centering
  		\includegraphics[width=\linewidth]{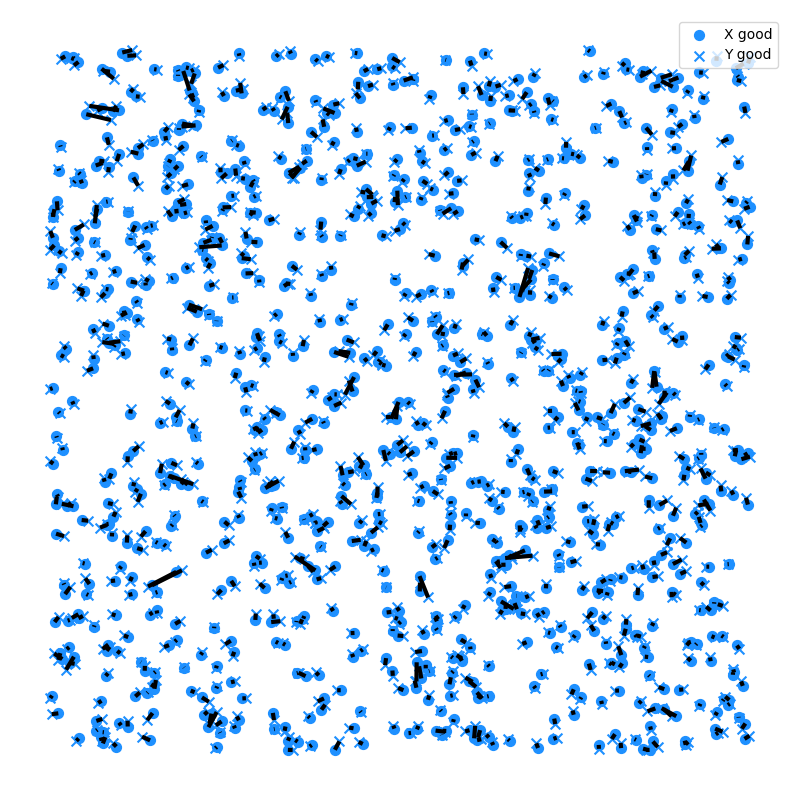}
  		
  	\end{subfigure}
  	
  	\par\smallskip 
  	
  	\begin{subfigure}[t]{0.16\textwidth}
  		\centering
  		\includegraphics[width=\linewidth]{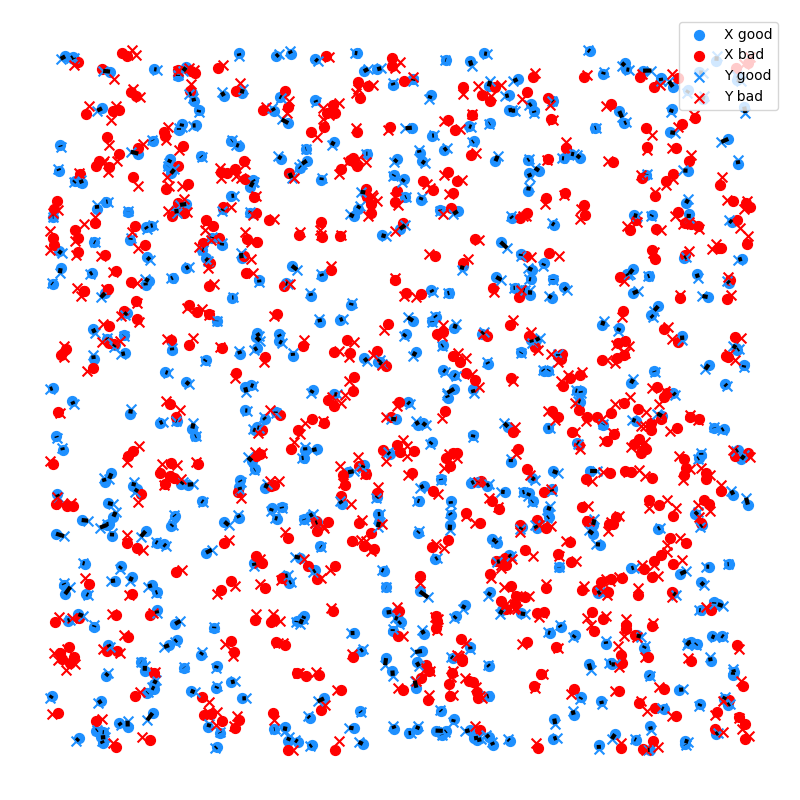}
  		
  	\end{subfigure}\hfill
  	\begin{subfigure}[t]{0.16\textwidth}
  		\centering
  		\includegraphics[width=\linewidth]{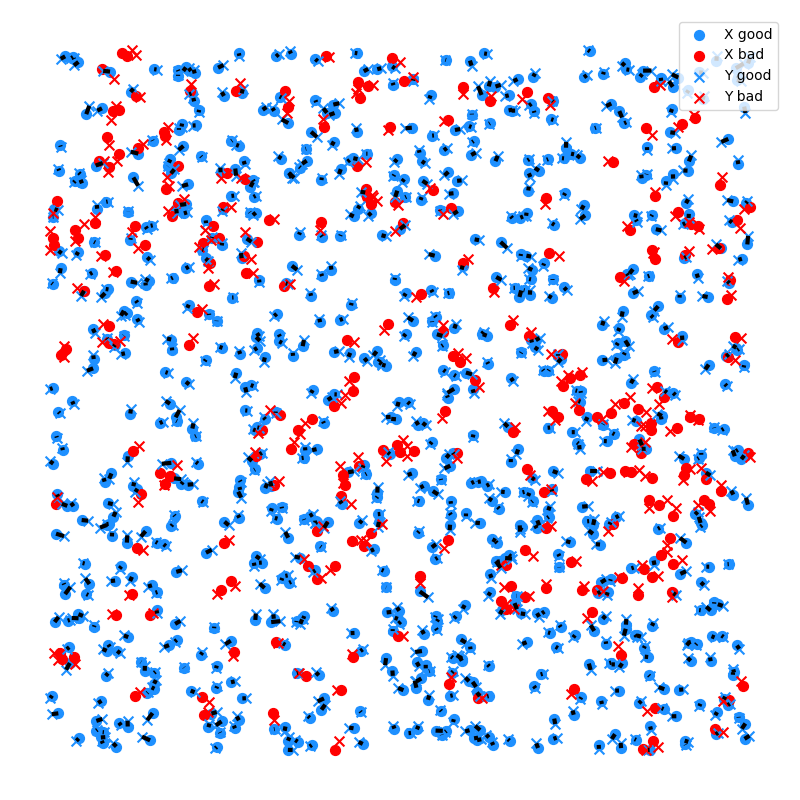}
  		
  	\end{subfigure}\hfill
  	\begin{subfigure}[t]{0.16\textwidth}
  		\centering
  		\includegraphics[width=\linewidth]{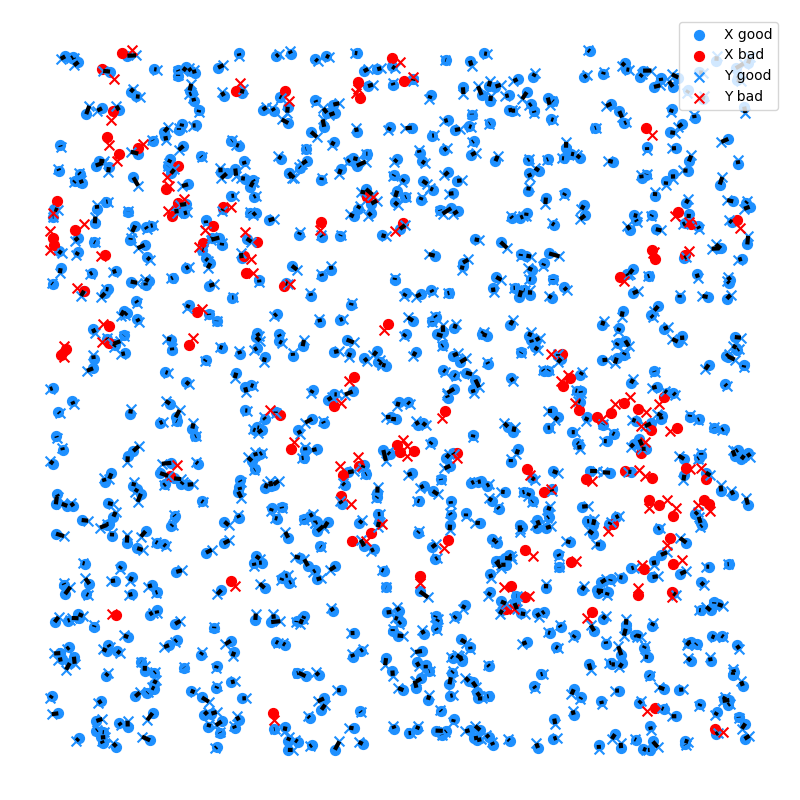}
  		
  	\end{subfigure}\hfill
  	\begin{subfigure}[t]{0.16\textwidth}
  		\centering
  		\includegraphics[width=\linewidth]{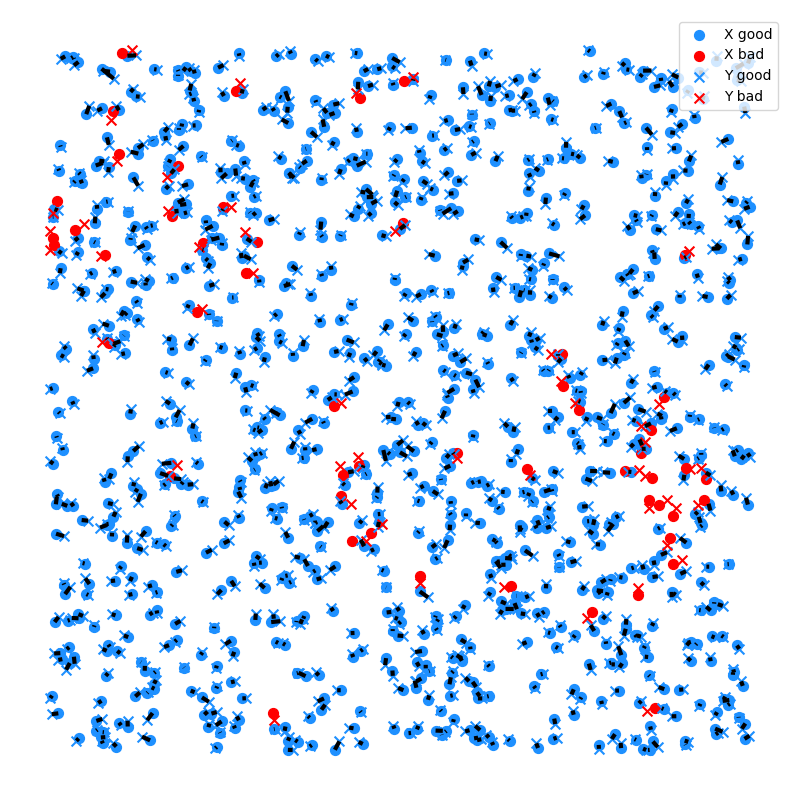}
  		
  	\end{subfigure}\hfill
  	\begin{subfigure}[t]{0.16\textwidth}
  		\centering
  		\includegraphics[width=\linewidth]{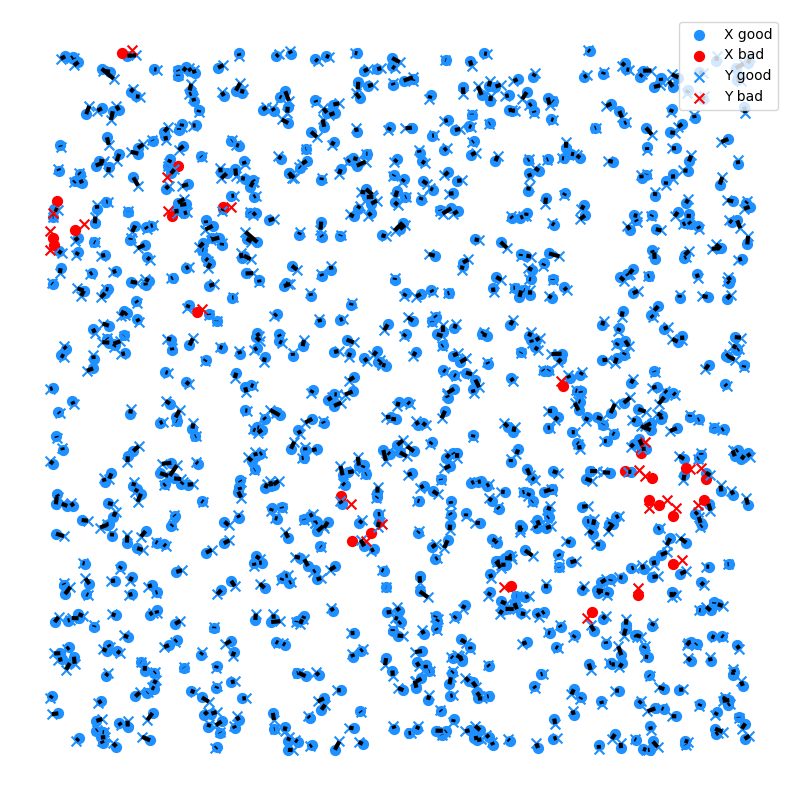}
  		
  	\end{subfigure}
  	\begin{subfigure}[t]{0.16\textwidth}
  		\centering
  		\includegraphics[width=\linewidth]{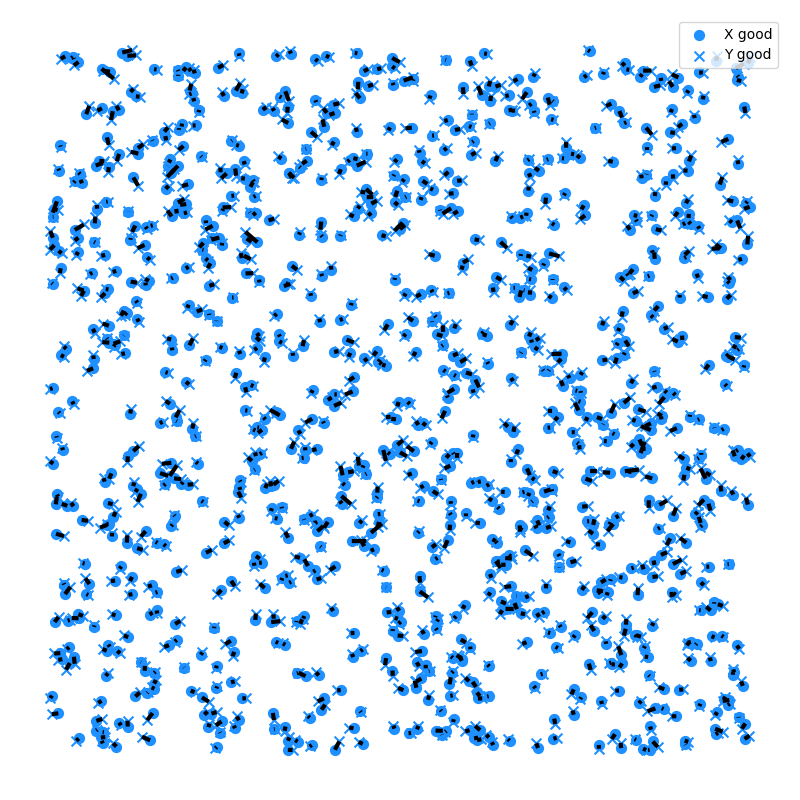}
  		
  	\end{subfigure}

  	\caption{Top row: results for a single coupling.
  		Bottom row: results after merging ten couplings.}
  	\label{fig:ten-two-rows}
  \end{figure*}

  \begin{algorithm}[t]
  	\caption{Selective Recursive Rank Matching(SRRM)}
  	\label{alg:psm}
  	\begin{algorithmic}[1]
  		\Require $X,Y\in\mathbb{R}^{n\times d}$, rounds $R$, anchors per point $k$,merge $K$ 
  		\Ensure bijection $\pi:\{1,\dots,n\}\to\{1,\dots,n\}$
  		
  		\State $\pi \gets -\mathbf{1}$,\quad $\mathcal{C}_X\gets\{1,\dots,n\}$,\quad $\mathcal{C}_Y\gets\{1,\dots,n\}$
  		\For{$r=1$ to $R$}
  		\State $m\gets|\mathcal{C}_X|$; \If{$m=0$} \State \textbf{break} \EndIf
  		\State $X_s\gets X[\mathcal{C}_X]$, $Y_s\gets Y[\mathcal{C}_Y]$
  		\State $Z \gets \textsc{SampleNear}(X_s,k)\ \cup\ \textsc{SampleNear}(Y_s,k)$
  		\State $T \gets \textsc{MergedRRM}([X_s;Z],[Y_s;Z],K)$ \Comment{multi-run RRM + plan merging \cite{ref12}}
  		\State $(\text{good},\ \mathcal{C}_X,\ \mathcal{C}_Y)\gets \textsc{Select}(T,m)$ 
  		\State Write $\pi$ using \text{good} pairs
  		\EndFor
  		\State $\pi \gets \textsc{HungarianFinalize}(X,Y,\pi)$ \Comment{fill remaining to a permutation}
  		\State \Return $\pi$
  	\end{algorithmic}
  \end{algorithm}

  \noindent\textbf{Complexity.}
  The overall complexity of our method is $O\!\left(R\,K\,n\log n\right) + O(u^3).$ Here, $n$ is the number of points in each measure, $R$ is the number of screening rounds in the point-selection procedure, and $K$ denotes the number of candidate plans merged at each round to produce an improved matching plan. Finally, $u$ denotes the number of remaining unmatched points after the iterative screening, and $O(u^3)$ corresponds to the Hungarian completion used to enforce a strict bijection.

\section{Simulations}
\label{sec:Simulation}

In this section, we present numerical simulations to illustrate and validate the theoretical results and complexity analysis developed in Section~\ref{sec:The last-mile phenomenon and Numerical Implementation of SRRM}. Section~\ref{sec:Empirical validation of the last-mile plateau mechanism}  presents a constructed example that reveals how the two identified factors jointly affect the plateau level. Section~\ref{sec:Factors Affecting Algorithmic Efficiency}  uses a concrete example to examine the factors influencing the computational complexity of the SRRM algorithm.

\subsection{Empirical validation of the last-mile plateau mechanism}
\label{sec:Empirical validation of the last-mile plateau mechanism}
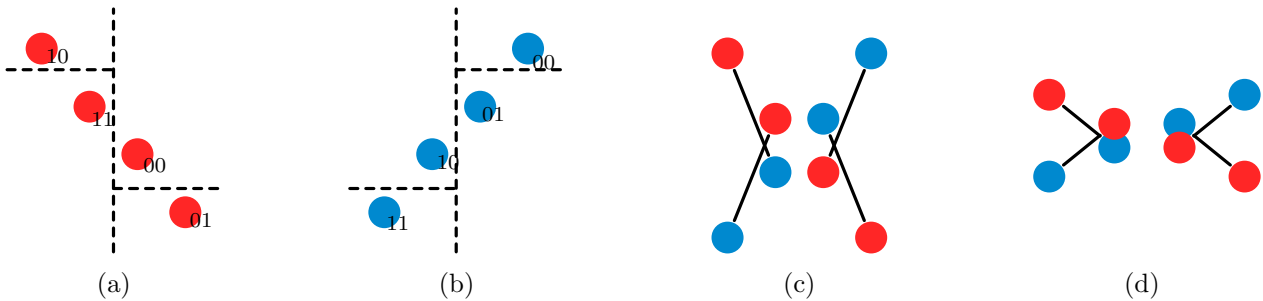
\begin{figure}[htbp]
	\centering
	
	\begin{subfigure}{0.24\linewidth}
		\centering
		\begin{tikzpicture}[scale=0.7,
			redc/.style ={circle, fill=red!85,  line width=1.8pt, minimum size=12pt, inner sep=0pt},
			bluec/.style={circle, fill=cyan!70!blue, line width=1.8pt, minimum size=12pt, inner sep=0pt},
			wall/.style={draw=black, line width=1.2pt, line cap=round, dashed},
			stub/.style={draw=black, line width=1.2pt, line cap=round, dashed},
			code/.style={font=\scriptsize}
			]
			\draw[wall] (0,-2.3) -- (0, 2.3);
			\draw[stub] (-2.0, 1.15) -- (0, 1.15);
			\draw[stub] (0,-1.10) -- (2.0,-1.10);
			
			\node[redc] at (-1.35,  1.55) {};
			\node[redc] at (-0.45,  0.45) {};
			\node[redc] at ( 0.45, -0.45) {};
			\node[redc] at ( 1.35, -1.55) {};
			
			\node[code] at (-1.05, 1.40) {10};
			\node[code] at (-0.20, 0.22) {11};
			\node[code] at ( 0.78,-0.65) {00};
			\node[code] at ( 1.65,-1.70) {01};
		\end{tikzpicture}
		\caption{}
	\end{subfigure}\hfill
	\begin{subfigure}{0.24\linewidth}
		\centering
		\begin{tikzpicture}[scale=0.7,
			redc/.style ={circle, fill=red!85,  line width=1.8pt, minimum size=12pt, inner sep=0pt},
			bluec/.style={circle, fill=cyan!70!blue, line width=1.8pt, minimum size=12pt, inner sep=0pt},
			wall/.style={draw=black, line width=1.2pt, line cap=round, dashed},
			stub/.style={draw=black, line width=1.2pt, line cap=round, dashed},
			code/.style={font=\scriptsize}
			]
			\draw[wall] (0,-2.3) -- (0, 2.3);
			\draw[stub] (0, 1.15) -- (2.0, 1.15);
			\draw[stub] (-2.0,-1.10) -- (0,-1.10);
			
			\node[bluec] at ( 1.35,  1.55) {};
			\node[bluec] at ( 0.45,  0.45) {};
			\node[bluec] at (-0.45, -0.45) {};
			\node[bluec] at (-1.35, -1.55) {};
			
			\node[code] at ( 1.65, 1.30) {00};
			\node[code] at ( 0.70, 0.30) {01};
			\node[code] at (-0.15,-0.60) {10};
			\node[code] at (-1.10,-1.75) {11};
		\end{tikzpicture}
		\caption{}
	\end{subfigure}\hfill
	\begin{subfigure}{0.24\linewidth}
		\centering
		\begin{tikzpicture}[scale=0.7,
			yscale=0.75,
			redc/.style ={circle, fill=red!85,  line width=1.8pt, minimum size=12pt, inner sep=0pt},
			bluec/.style={circle, fill=cyan!70!blue, line width=1.8pt, minimum size=12pt, inner sep=0pt},
			match/.style={line width=1.2pt, draw=black, line cap=round}
			]
			\node[bluec] (bq) at ( 1.35,  2.325) {};
			\node[bluec] (bw) at ( 0.45,  0.675) {};
			\node[bluec] (be) at (-0.45, -0.675) {};
			\node[bluec] (br) at (-1.35, -2.325) {};
			
			\node[redc] (rq) at (-1.35,  2.325) {};
			\node[redc] (rw) at (-0.45,  0.675) {};
			\node[redc] (re) at ( 0.45, -0.675) {};
			\node[redc] (rr) at ( 1.35, -2.325) {};
			
			\draw[match] (bq) -- (re);
			\draw[match] (bw) -- (rr);
			\draw[match] (be) -- (rq);
			\draw[match] (br) -- (rw);
		\end{tikzpicture}
		\caption{}
	\end{subfigure}\hfill
	\begin{subfigure}{0.24\linewidth}
		\centering
		\raisebox{23pt}{
			\begin{tikzpicture}[scale=0.7,
				xscale=0.85,
				redc/.style ={circle, fill=red!85,  line width=1.8pt, minimum size=12pt, inner sep=0pt},
				bluec/.style={circle, fill=cyan!70!blue, line width=1.8pt, minimum size=12pt, inner sep=0pt},
				match/.style={line width=1.2pt, draw=black, line cap=round}
				]
				\node[bluec] (bq) at ( 2.16,  0.775) {};
				\node[bluec] (bw) at ( 0.72,  0.225) {};
				\node[bluec] (be) at (-0.72, -0.225) {};
				\node[bluec] (br) at (-2.16, -0.775) {};
				
				\node[redc] (rq) at (-2.16,  0.775) {};
				\node[redc] (rw) at (-0.72,  0.225) {};
				\node[redc] (re) at ( 0.72, -0.225) {};
				\node[redc] (rr) at ( 2.16, -0.775) {};
				
				\draw[match] (bq) -- (re);
				\draw[match] (bw) -- (rr);
				\draw[match] (be) -- (rq);
				\draw[match] (br) -- (rw);
		\end{tikzpicture}}
		\caption{}
	\end{subfigure}
	
	\caption{When the absolute slope $|m|$ of the red and blue lines tends to zero or infinity, 
		almost all sampled points become bad indices. However, the severity of the resulting mismatches differs between the two regimes.}
	\label{fig:two-panels-codes}
\end{figure}

We construct synthetic distributions supported on $[0,1]^2$ by sampling points on line segments through $(0.5,0.5)$ as shown in Fig.~\ref{fig:two-panels-codes} to empirically validate the last-mile plateau mechanism
revealed by Theorem~\ref{thm:nn_plateau}: the limiting plateau height of recursive partitioning methods is jointly governed by (i) the fraction of prematurely separated indices
$\alpha_H := |I_+|/n$, and (ii) the average mismatch severity among them, quantified by the
NN-excess $\bar{\Gamma}_H^{\mathrm{NN}}$.

Throughout, we use the axis-recursive mass-median realization of HCP. In these experiments, we adopt a simplified BSP-style baseline based on a single split along an arbitrary direction and quantile, as a coarse proxy for general recursive partitioning effects. For SRRM, we use 10 recursive partition steps with merging, followed by 20 refinement iterations with \(k=5\), as a simplified but representative setting.

\begin{figure}[htbp]
	\centering
	\begin{subfigure}{0.25\textwidth}
		\centering
		\includegraphics[width=\linewidth]{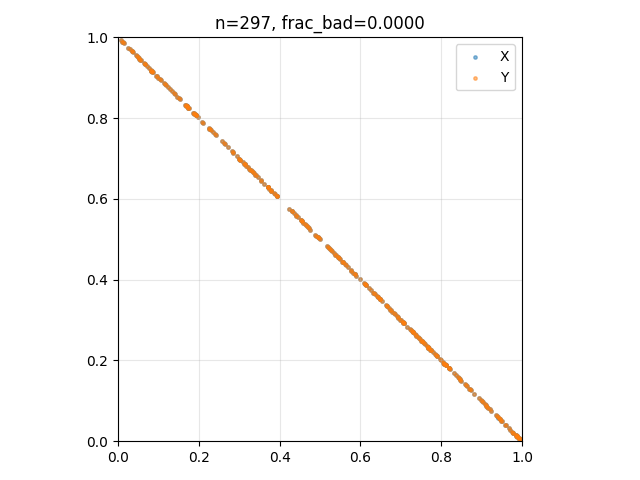}
		
	\end{subfigure}
	\begin{subfigure}{0.25\textwidth}
		\centering
		\includegraphics[width=\linewidth]{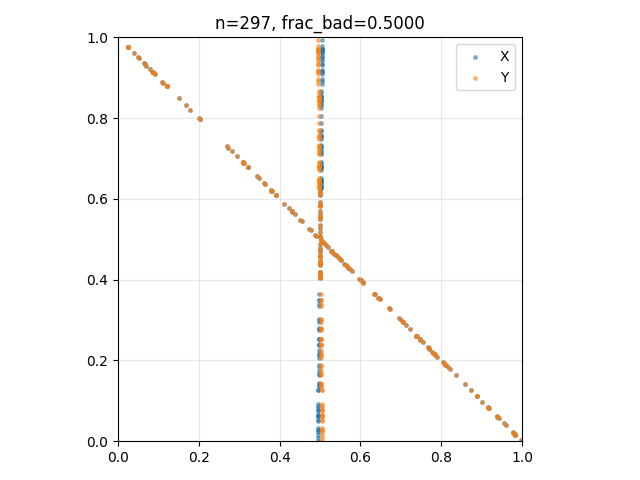}
		
	\end{subfigure}
	\begin{subfigure}{0.25\textwidth}
		\centering
		\includegraphics[width=\linewidth]{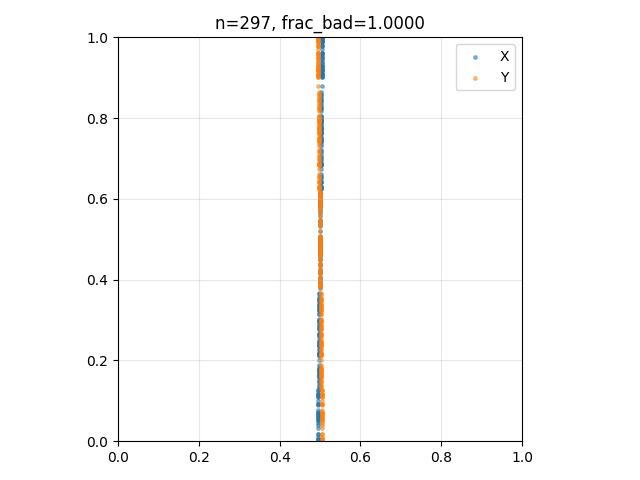}
		
	\end{subfigure}
	\begin{subfigure}{0.25\textwidth}
		\centering
		\includegraphics[width=\linewidth]{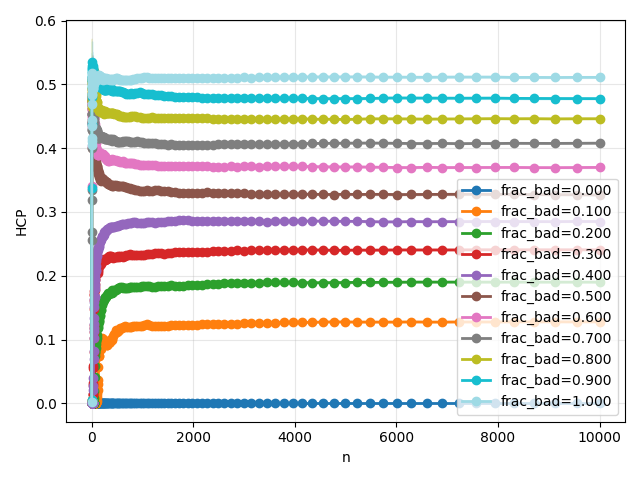}
		
	\end{subfigure}
	\begin{subfigure}{0.25\textwidth}
		\centering
		\includegraphics[width=\linewidth]{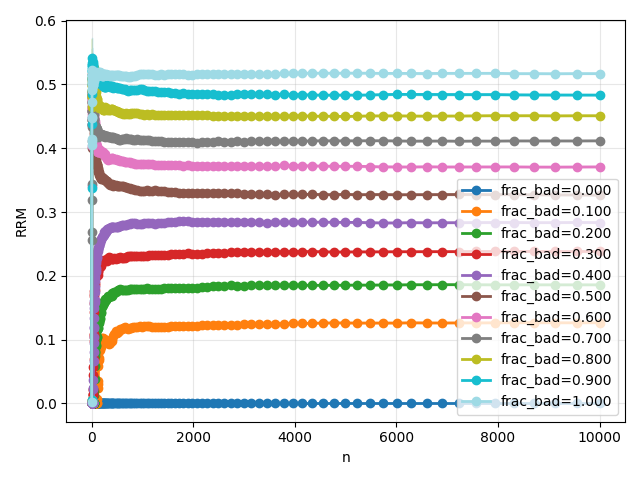}
		
	\end{subfigure}
	
	\caption{Experiment 1  (Effect of bad-index proportion on the bias floor).
	}
	\label{fig:five-in-a-row11}
\end{figure}

\paragraph{\textbf{Experiment 1}}
The good component consists of two lines with identical slope $m=-1$.
Under address-restricted RRM matching, samples from this component exhibit highly consistent tree addresses,
and are therefore aligned without premature separation.
Empirically, this part contributes essentially zero RRM distance and can be regarded as generating good indices.

The bad component consists of two lines whose slopes are opposite in sign,
with large magnitude $|m|=100$.
Although geometrically close cross-distribution neighbors exist,
the recursive partition often separates them at shallow depths.
Since address restriction forbids cross-leaf matching,
these near neighbors cannot be aligned, and samples are forced to match with farther within-leaf points,
producing substantial mismatch penalties.
Empirically, most samples from this component correspond to bad indices.

We control the mixture weight of the bad component $\texttt{frac\_bads}$, defined as the proportion of samples drawn from
the opposite-slope ($|m|=100$) line pair.
Thus, larger $\texttt{frac\_bads}$ corresponds to a higher fraction of bad indices.
The behavior shown in Fig.~\ref{fig:five-in-a-row11} directly corroborates Theorem~\ref{thm:nn_plateau}: as the fraction of prematurely separated indices $\alpha_H$ increases, the limiting plateau height of the RRM distance increases
accordingly.

\begin{figure}[htbp]
	\centering
	\begin{subfigure}{0.25\textwidth}
		\centering
		\includegraphics[width=\linewidth]{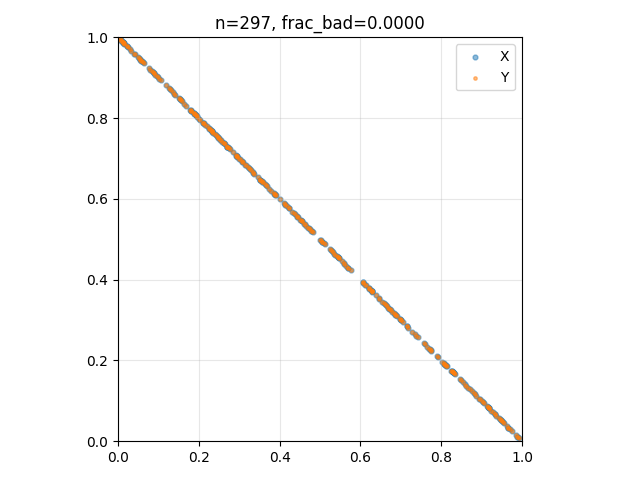}
		
	\end{subfigure}
	\begin{subfigure}{0.25\textwidth}
		\centering
		\includegraphics[width=\linewidth]{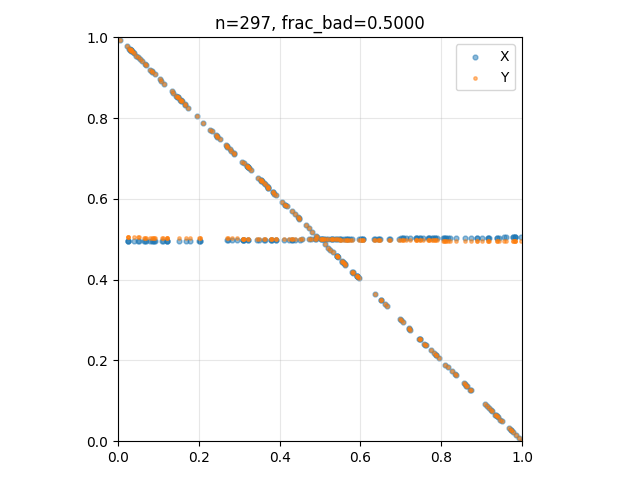}
		
	\end{subfigure}
	\begin{subfigure}{0.25\textwidth}
		\centering
		\includegraphics[width=\linewidth]{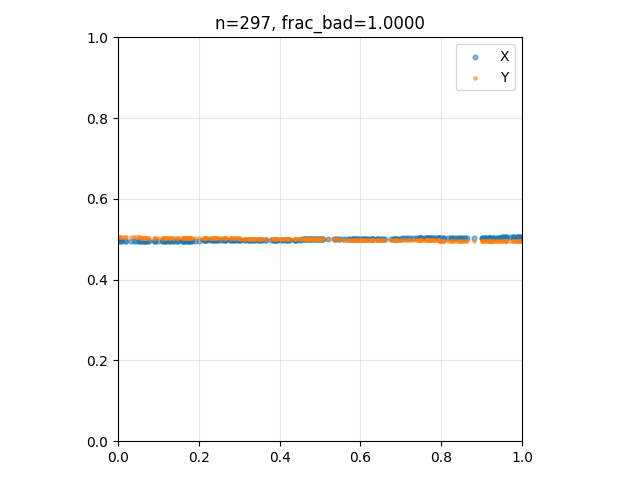}
		
	\end{subfigure}
	\begin{subfigure}{0.25\textwidth}
		\centering
		\includegraphics[width=\linewidth]{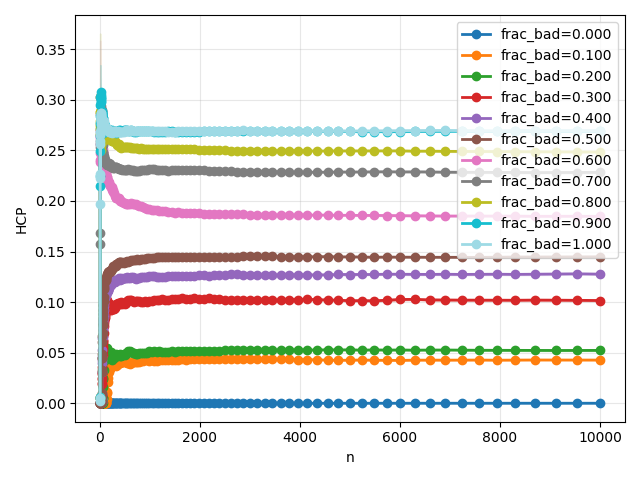}
		
	\end{subfigure}
	\begin{subfigure}{0.25\textwidth}
		\centering
		\includegraphics[width=\linewidth]{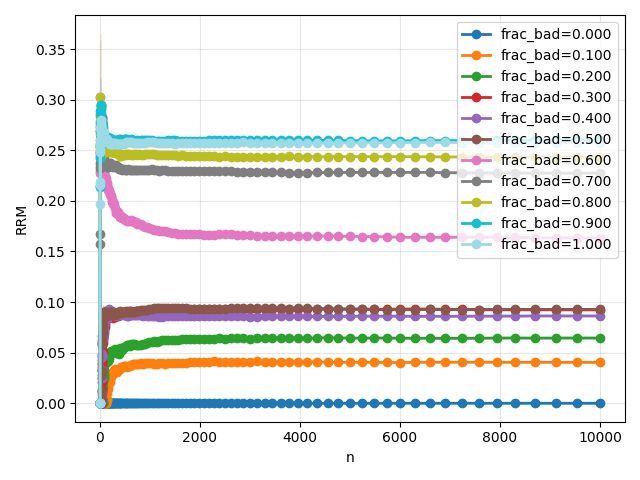}
		
	\end{subfigure}
	\caption{Experiment 2 (Reducing the severity of premature separation).
	}
	\label{fig:five-in-a-row111}
\end{figure}

\paragraph{\textbf{Experiment 2}}
This experiment uses the same mixture construction as Experiment~1 and still treats the two lines with identical slope $m=-1$ as the good component.
The key change is in the bad component: instead of using opposite slopes with large magnitude, we set the opposite-slope pair to have a much smaller magnitude,
$|m|=0.01$.
Relative to the large-slope setting, this modification does not alter the
qualitative mechanism, but it substantially reduces the
severity of premature separation, i.e., it yields smaller $\bar{\Gamma}_H^{\mathrm{NN}}$ for the bad indices.
Consequently, as shown in Fig.~\ref{fig:five-in-a-row111}, while increasing $\texttt{frac\_bads}$ still increases the fraction of bad indices, the resulting limiting plateau is markedly lower than in the large-slope case, consistent with Theorem~\ref{thm:nn_plateau}.

\begin{figure}[htbp]
	\centering
	\begin{subfigure}{0.24\textwidth}
		\centering
		\includegraphics[width=\linewidth]{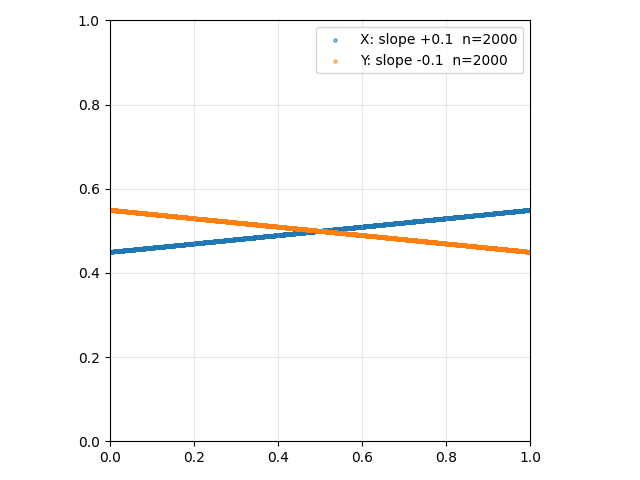}
		
	\end{subfigure}
	\begin{subfigure}{0.24\textwidth}
		\centering
		\includegraphics[width=\linewidth]{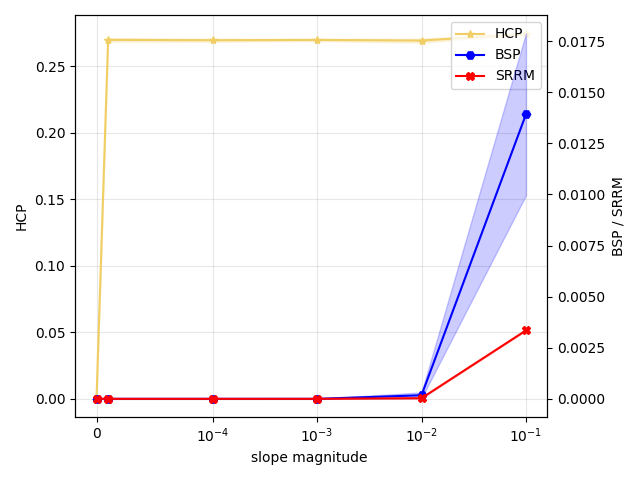}
		
	\end{subfigure}
	\begin{subfigure}{0.24\textwidth}
		\centering
		\includegraphics[width=\linewidth]{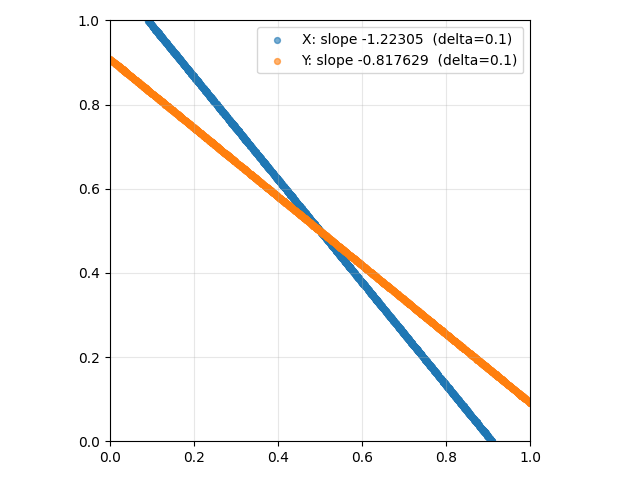}
		
	\end{subfigure}
	\begin{subfigure}{0.24\textwidth}
		\centering
		\includegraphics[width=\linewidth]{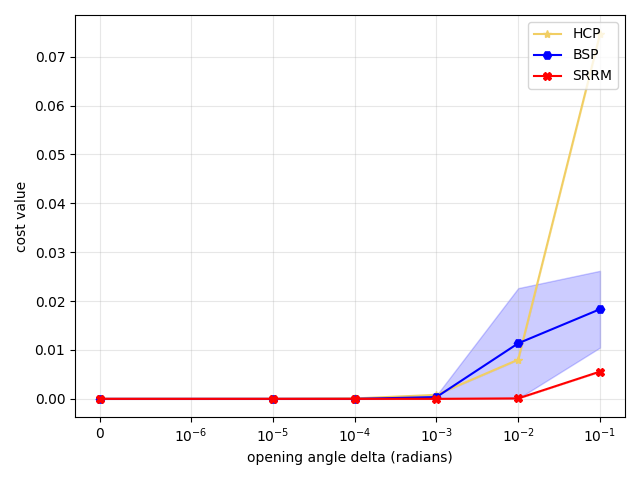}
		
	\end{subfigure}
	
	\caption{Experiment 3 (Separating ``fraction'' from ``severity'': two complementary regimes).
	}
	\label{fig:five-in-a-row1111}
\end{figure}

\paragraph{\textbf{Experiment 3}}(Fig.~\ref{fig:five-in-a-row1111})
Experiment~3 consists of two parts.

\emph{Part I: same-sign perturbations around \(m=0\).}
We consider two line-supported distributions whose slopes have the same sign and differ by a small amount.
We vary the slope magnitude over
$
|m|\in\{0,\,10^{-5},\,10^{-4},\,10^{-3},\,10^{-2},\,10^{-1}\}.
$
When $m=0$ the two supporting lines coincide, and no premature separation is observed.
However, as soon as the lines are not exactly identical (even at $|m|=10^{-5}$ or $10^{-4}$),
prematurely separated indices $I_+$ appear due to the address restriction, and the empirical RRM distance
exhibits a clear nonzero bias floor.
In particular, despite the fact that the geometric discrepancy is extremely small when $|m|$ increases from
$0$ to $10^{-4}$, the emergence of bad indices $I_+$ alone already induces a pronounced plateau.

\emph{Part II:opening-angle perturbations around \(m=-1\).}
We next start from two coincident lines with slope $-1$ and ``open'' them outward by a small angle.

Let the opening angle increment be $\delta\in\{0,\,10^{-5},\,10^{-4},\,10^{-3},\,10^{-2},\,10^{-1}\}.$ As $\delta$ increases, the geometric separation between the two supports increases, and the measured distance increases monotonically as well.
While premature separation may still contribute to the overall behavior, the dominant trend in this regime is
the expected growth of the discrepancy with the opening angle.

Together, the two parts highlight that (i) a noticeable bias floor can emerge as soon as the supports are not exactly coincident, as bad indices begin to appear, and (ii) beyond this effect, the distance still increases with the true geometric separation $\frac1n\sum_{i=1}^n \delta_i^2$.

\subsection{Factors affecting algorithmic efficiency}
\label{sec:Factors Affecting Algorithmic Efficiency}
We generate two independent point clouds 
$X=\{x_i\}_{i=1}^n$ and $Y=\{y_j\}_{j=1}^n$
on the normalized domain $[0,1]^2$. 
The samples are drawn from two truncated isotropic Gaussian distributions:
\[ 
x_i \sim \mathrm{clip}\big(\mathcal N(m_1(t),\,\sigma^2 I_2),\, [0,1]^2\big), 
\qquad
y_j \sim \mathrm{clip}\big(\mathcal N(m_2(t),\,\sigma^2 I_2),\, [0,1]^2\big).
\]

The distribution centers (means) move linearly toward the midpoint $(0.5,0.5)$:
$
m_1(t) = (1-t)(0.8,0.8) + t(0.5,0.5), 
m_2(t) = (1-t)(0.1,0.1) + t(0.5,0.5),
q t \in [0,1].
$

We measure the mean separation
$
\mathrm{sep}(t) = \|m_1(t) - m_2(t)\|_2
$
to characterize the geometric distance between the two distributions.
We set the SRRM parameters to $R=10$, $K=5$, and $k=1$. For each value of $t$, we independently repeat the sampling and matching procedure 10 times, recording at every iteration of the point-screening step the number of indices identified as bad and hence belonging to $I_+$ (see Table~\ref{tab:Iplus_by_sep}), as well as the running time $T(t)$. We report the mean and standard deviation (mean $\pm$ std) of the running time in Fig.~\ref{fig:five}.

The results show that as the two point clouds gradually move closer, the number of remaining unmatched points after iterative screening, denoted by $u$, decreases steadily; in particular, when the two distributions share the same centroid, $u$ attains its minimum. Since the Hungarian completion has computational complexity $O(u^3)$, this indicates that the completion stage is effectively reduced in size, thereby significantly controlling the overall runtime.

\begin{figure}[t]
	\centering
	\begin{subfigure}{0.24\textwidth}
		\centering
		\includegraphics[width=\linewidth]{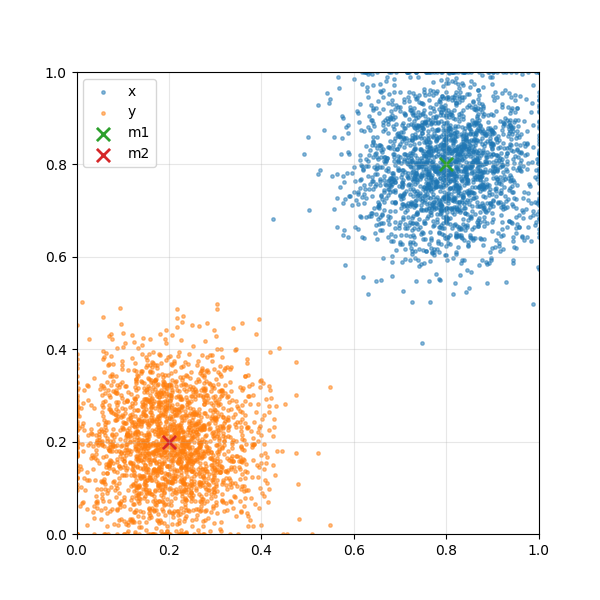}
		\caption{}
	\end{subfigure}
	\begin{subfigure}{0.24\textwidth}
		\centering
		\includegraphics[width=\linewidth]{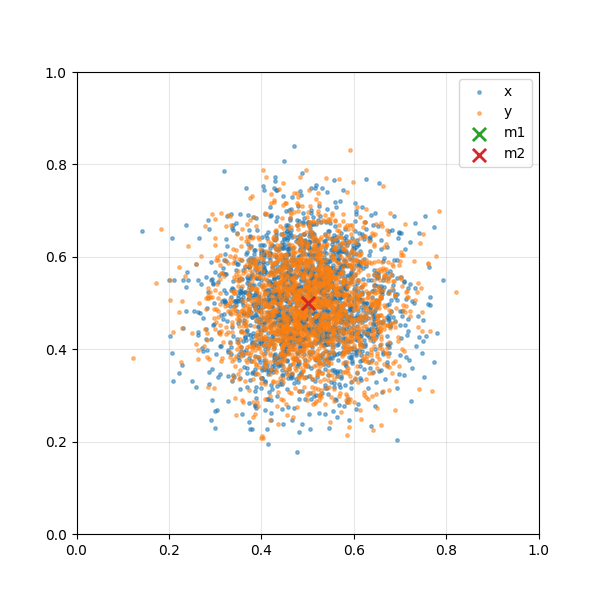}
		\caption{}
	\end{subfigure}
	\begin{subfigure}{0.24\textwidth}
		\centering
		\includegraphics[width=\linewidth]{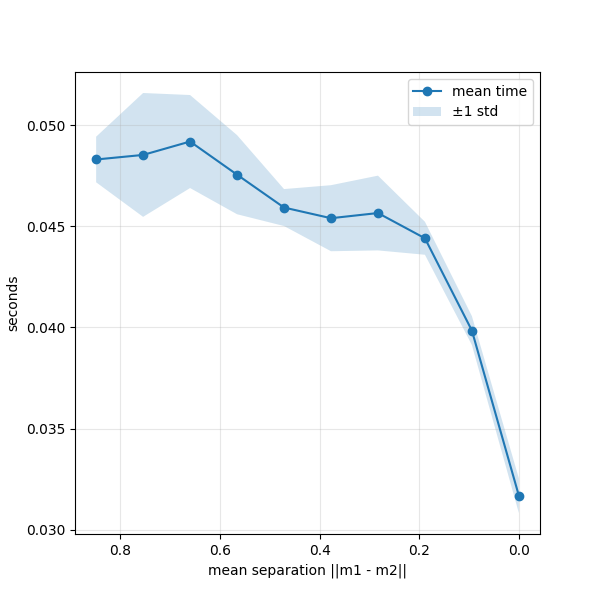}
		\caption{}
	\end{subfigure}
	\begin{subfigure}{0.24\textwidth}
		\centering
		\includegraphics[width=\linewidth]{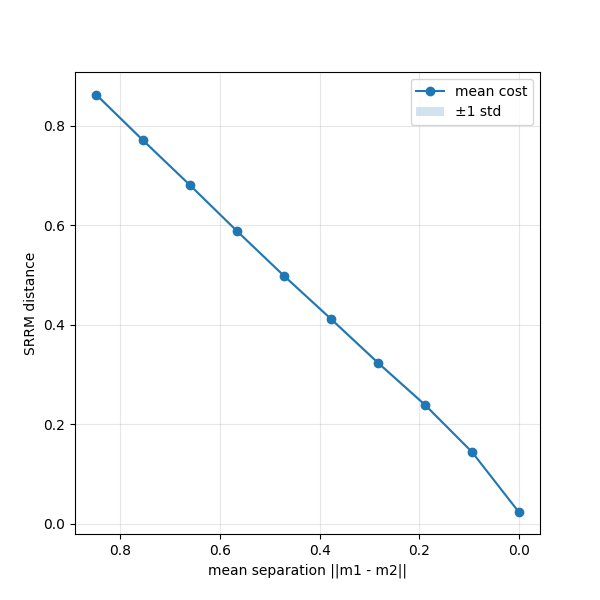}
		\caption{}
	\end{subfigure}
	
	\caption{(a)--(b) The two distributions, each with 2000 sampled points,
		whose centers gradually move toward each other.
		(c) The running time of SRRM decreases as the distribution centers approach.
		(d) The SRRM decreases accordingly as the separation diminishes.
	}
	\label{fig:five}
\end{figure}

\begin{table}[htbp]
	\centering
	\caption{Statistics of $|I_+|$ over iterations.}
	\label{tab:Iplus_by_sep}
	\setlength{\tabcolsep}{3.2pt}
	\renewcommand{\arraystretch}{1.35}
	\resizebox{\textwidth}{!}{%
		\begin{tabular}{ccc*{11}{c}}
			\toprule
			$t$ & $\|m_1-m_2\|$ & Metric
			& iter00 & iter01 & iter02 & iter03 & iter04 & iter05 & iter06 & iter07 & iter08 & iter09 & iter10 \\
			\midrule
			0.000 & 0.8485 & $|I_+|$
			& $1498.50\pm12.28$ & $1123.80\pm8.15$  & $847.80\pm9.02$  & $637.60\pm13.53$ & $477.10\pm10.52$
			& $358.40\pm9.32$  & $268.50\pm8.40$  & $204.80\pm7.24$  & $156.20\pm3.16$  & $119.00\pm4.32$  & $119.00\pm4.32$ \\
			0.111 & 0.7542 & $|I_+|$
			& $1498.10\pm9.33$  & $1118.10\pm6.14$  & $844.10\pm9.77$  & $632.00\pm10.26$ & $472.00\pm11.38$
			& $354.20\pm14.61$ & $268.10\pm11.24$ & $203.40\pm9.80$  & $154.00\pm8.73$  & $117.80\pm7.60$  & $117.80\pm7.60$ \\
			0.222 & 0.6600 & $|I_+|$
			& $1495.80\pm12.42$ & $1118.10\pm13.28$ & $843.60\pm17.55$ & $631.40\pm15.01$ & $476.00\pm6.85$
			& $359.00\pm6.60$  & $266.70\pm6.95$  & $198.10\pm5.59$  & $149.00\pm4.90$  & $112.20\pm4.94$  & $112.20\pm4.94$ \\
			0.333 & 0.5657 & $|I_+|$
			& $1498.50\pm16.95$ & $1119.40\pm14.86$ & $842.20\pm11.43$ & $630.00\pm10.30$ & $469.20\pm6.89$
			& $351.30\pm6.45$  & $263.70\pm4.88$  & $194.60\pm6.42$  & $147.00\pm7.85$  & $109.80\pm8.39$  & $109.80\pm8.39$ \\
			0.444 & 0.4714 & $|I_+|$
			& $1494.20\pm11.56$ & $1119.30\pm13.95$ & $850.00\pm15.38$ & $634.10\pm10.79$ & $477.40\pm10.67$
			& $358.40\pm8.40$  & $268.30\pm9.07$  & $202.60\pm9.73$  & $151.90\pm7.37$  & $115.80\pm5.65$  & $115.80\pm5.65$ \\
			0.556 & 0.3771 & $|I_+|$
			& $1498.50\pm12.58$ & $1123.30\pm14.86$ & $843.30\pm16.85$ & $630.10\pm12.36$ & $467.70\pm11.44$
			& $352.10\pm10.68$ & $264.10\pm8.40$  & $197.30\pm8.62$  & $145.40\pm7.03$  & $108.40\pm7.34$  & $108.40\pm7.34$ \\
			0.667 & 0.2828 & $|I_+|$
			& $1490.20\pm10.46$ & $1113.40\pm10.69$ & $837.60\pm13.96$ & $628.00\pm12.53$ & $472.80\pm11.07$
			& $355.70\pm13.12$ & $268.40\pm12.16$ & $201.30\pm10.18$ & $151.90\pm9.17$  & $113.80\pm7.51$  & $113.80\pm7.51$ \\
			0.778 & 0.1886 & $|I_+|$
			& $1449.20\pm10.79$ & $1072.80\pm16.72$ & $799.40\pm12.00$ & $598.30\pm7.59$  & $450.20\pm6.58$
			& $340.20\pm8.42$  & $257.80\pm5.61$  & $194.20\pm4.92$  & $146.20\pm2.90$  & $110.00\pm5.08$  & $110.00\pm5.08$ \\
			0.889 & 0.0943 & $|I_+|$
			& $1367.50\pm18.36$ & $963.20\pm16.96$  & $692.40\pm14.18$ & $509.80\pm12.97$ & $376.00\pm12.88$
			& $282.50\pm12.82$ & $212.80\pm6.97$  & $156.80\pm6.16$  & $118.20\pm6.78$  & $89.80\pm8.11$   & $89.80\pm8.11$ \\
			1.000 & 0.0000 & $|I_+|$
			& $1214.10\pm21.25$ & $756.20\pm29.91$  & $486.50\pm27.22$ & $315.10\pm17.48$ & $211.90\pm15.44$
			& $145.80\pm14.22$ & $101.10\pm9.24$  & $71.90\pm7.49$   & $49.10\pm7.69$   & $35.30\pm6.91$   & $35.30\pm6.91$ \\
			\bottomrule
		\end{tabular}%
	}
\end{table}

The overall computational complexity of SRRM is $\mathcal O(RK\, n \log n) + \mathcal O(u^3)$.
The first term comes from the recursive partitioning and sorting-based operations, whereas the second is due to solving local matching subproblems of size $u$ via the Hungarian algorithm.
When the two distributions are well separated (i.e., $\mathrm{sep}(t)$ is large), the induced local subproblem size $u$ is typically large, and the cubic term $\mathcal O(u^3)$ dominates the total cost.
Accordingly, the overall running time is relatively high in this regime.
As the distribution centers move closer together, the effective subproblem size $u$ decreases substantially.
As a result, the cubic term $\mathcal O(u^3)$ drops rapidly, leading to a corresponding reduction in the total running time.
Once $u$ becomes sufficiently small, the first term $\mathcal O(RK\, n \log n)$ becomes the leading term.

\section{Experiments}
\label{sec:EXPERIMENTS}

In all experiments, we apply an axis-aligned affine normalization to rescale the sampled points
to the unit box $[0,1]^d$ before computing the distance.
This preprocessing improves numerical stability and makes distances comparable across different
problem instances and scales. 

To demonstrate the feasibility and efficiency of our proposed SRRM, we conducted extensive numerical experiments and compared them with mainstream competitors, including maximum mean discrepancy (MMD) \cite{gretton2012mmd}, the Wasserstein distance, the Sinkhorn distance\cite{cuturi2013sinkhorn}, the SW distance\cite{bonneel2015sliced}, the max-SW distance \cite{deshpande2019maxsliced}, the GSW distance \cite{kolouri2019gsw}, the HCP distance\cite{li2024hcp},and the BSP distance\cite{ref12}.

For all distances, we adopt the Euclidean cost. All experiments are implemented on a 13th Gen Intel Core i7-13700HX CPU and an NVIDIA GeForce RTX 4070 Laptop GPU. For each experiment, we repeat it 20 times and report the average performance.

\subsection{Robustness and efficiency analysis}
The computational cost of our SRRM is mainly governed by two factors: the number of screening rounds $R$ in the point-selection procedure, and the number of anchors $z$ generated in the neighborhood of each unmatched point. To demonstrate the robustness and efficiency of the SRRM distance with respect to these two parameters, we conduct experiments on synthetic data and systematically analyze the influence of $R$ and $z$. Specifically, we draw two sample sets of size $n$ from the uniform distribution on the unit hypercube $[0,1]^3$, and compute the SRRM distance between them. 

Fig.~\ref{fig:example41} shows how the average SRRM and the average CPU time vary with the sample size $n$ under different numbers of anchors $z$, while keeping the merging budget $K$ and the number of screening rounds $R$ fixed.

Fig.~\ref{fig:example42} shows how the average SRRM and the average CPU time vary with the sample size $n$ under different numbers of screening rounds $R$, while keeping the merging budget $K$ and the number of anchors $z$ fixed.

The results show that, with the dimension and the merging budget $K$ fixed, the runtime scales approximately linearly with both $R$ and $z$.

\begin{figure}[htbp]
	\centering
	\begin{subfigure}{0.48\textwidth}
		\centering
		\includegraphics[width=\linewidth]{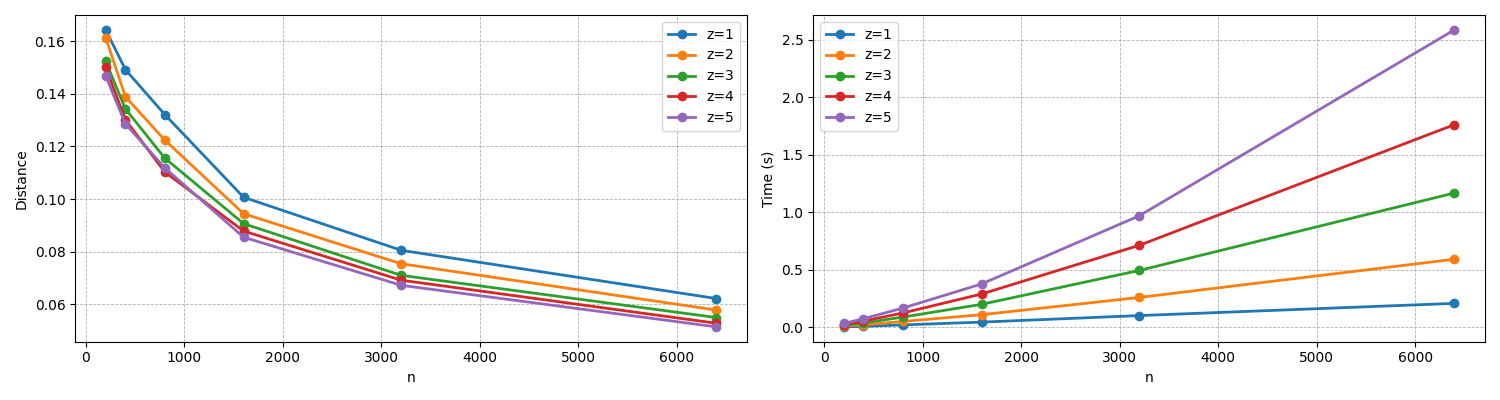}
		\caption{SRRM and its CPU time ($R=5$)}
		\label{fig:example41}
	\end{subfigure}
	\hfill
	\begin{subfigure}{0.48\textwidth}
		\centering
		\includegraphics[width=\linewidth]{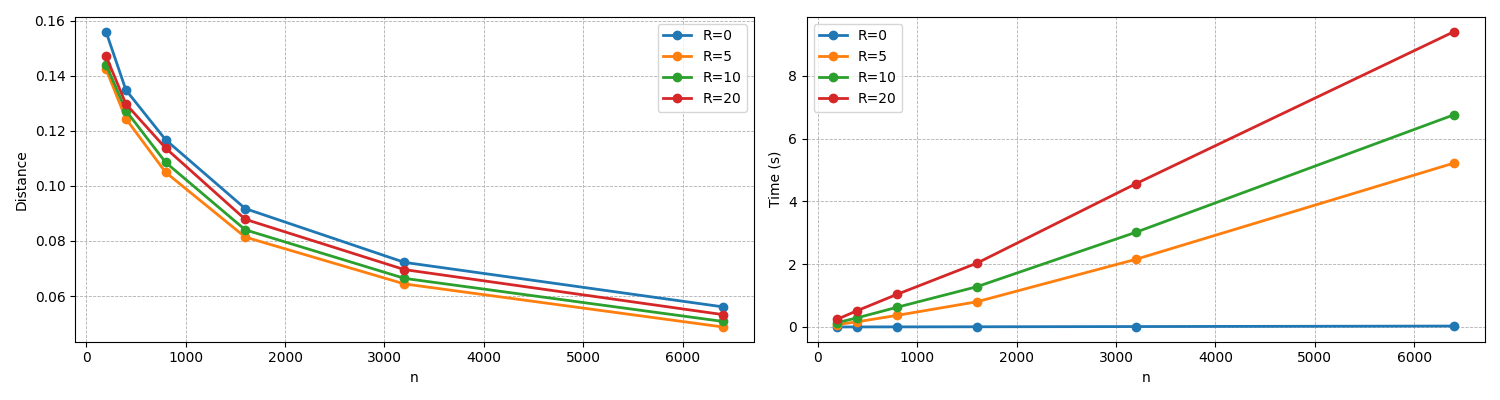}
		\caption{SRRM and its CPU time ($z=10$)}
		\label{fig:example42}
	\end{subfigure}
	
	\caption{Robustness analysis of SRRM under different parameter settings.}
	\label{fig:robustness}
\end{figure}

\subsection{Shape interpolation}
In the bijective setting, we first compute a bijective map $\pi$ between the source point set $X$ and the target point set $Y$, so that each source point $x_i$ is paired with a target point $y_{\pi(i)}$. Based on this map, we can construct the classical displacement interpolation\cite{bonneel2011,mccann1997}:
$
x_i(t) = (1-t)\,x_i + t\,y_{\pi(i)}, t \in [0,1],
$
that is, each point is advected along the straight segment from $x_i$ to its match $y_{\pi(i)}$. When $t=0$ we recover the source set, and when $t=1$ we reach the matched target set.

In practice, instead of varying $t$ continuously, we update the points via a discrete iterative scheme. The key idea is to move the point cloud toward the target \emph{while keeping the point-to-point correspondences as stable as possible across iterations}, thereby producing a more coherent and stable time-evolving matching. Concretely, at iteration $k$, we take the current point set $X^{(k)}$ as the starting point and compute a matching $\pi^{(k)}$. Denote the matched target set by
$
Y^{(k)}_{\mathrm{match}}=\{\,y_{\pi^{(k)}(i)}\,\}.
$
We then perform one convex-combination update with a step size $\alpha = 0.15$:
$
X^{(k+1)} = (1-\alpha)\,X^{(k)} + \alpha\,Y^{(k)}_{\mathrm{match}}.
$
With this update, each point moves by a small step along the line connecting its current matched pair. This avoids the abrupt ``jump'' that would occur if one directly switched to the matched target set as $t \to 1$, and it also helps maintain consistent correspondences throughout the iterations, making the interpolation process visually and geometrically smoother and more interpretable.The results are reported in Fig.~\ref{fig:three_layout_7_3}.

\begin{figure}[htbp]
	\centering
	
	\newlength{\figH}
	\setlength{\figH}{0.35\textheight}
	
	\newlength{\vGap}
	\setlength{\vGap}{2pt}
	
	\resizebox{0.9\textwidth}{!}{%
		\begin{minipage}[t]{0.18\linewidth}
			\centering
			\hspace*{-0.1\linewidth}
			\vbox to \figH{%
				\vfil
				\includegraphics[
				width=1.2\linewidth,
				height=\dimexpr(\figH-\vGap)/2\relax,
				keepaspectratio
				]{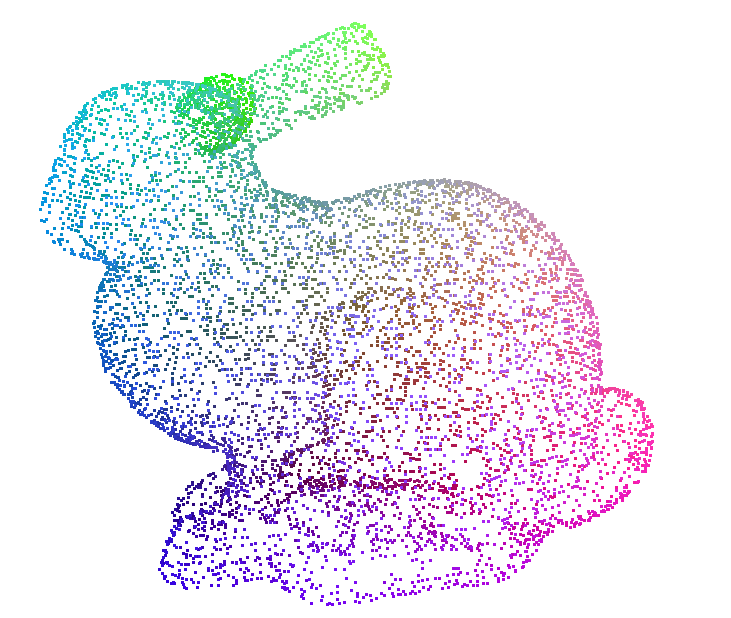}\vspace{\vGap}
				
				\includegraphics[
				width=1.2\linewidth,
				height=\dimexpr(\figH-\vGap)/2\relax,
				keepaspectratio
				]{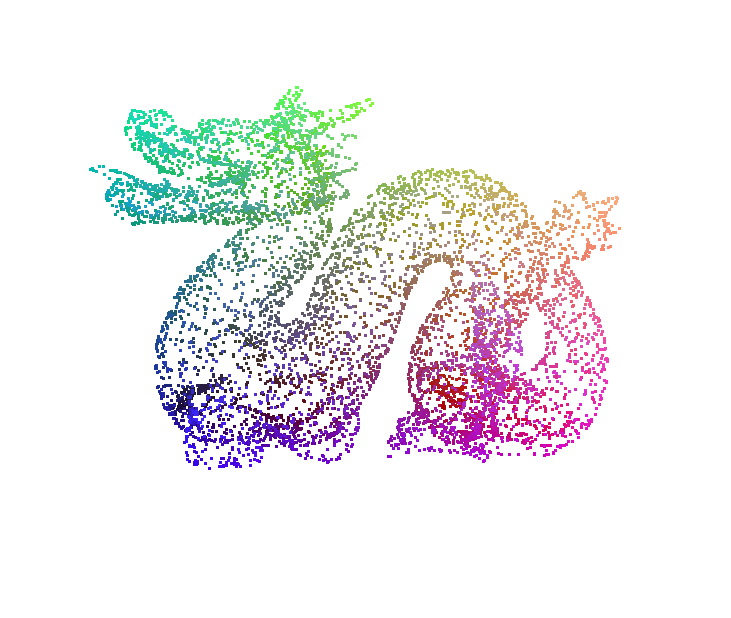}
				\vfil
			}
		\end{minipage}\hfill
		\begin{minipage}[t]{0.9\linewidth}
			\centering
			\hspace*{-0.01\linewidth}
			\vbox to \figH{%
				\vfil
				\begin{overpic}[height=\figH,keepaspectratio]{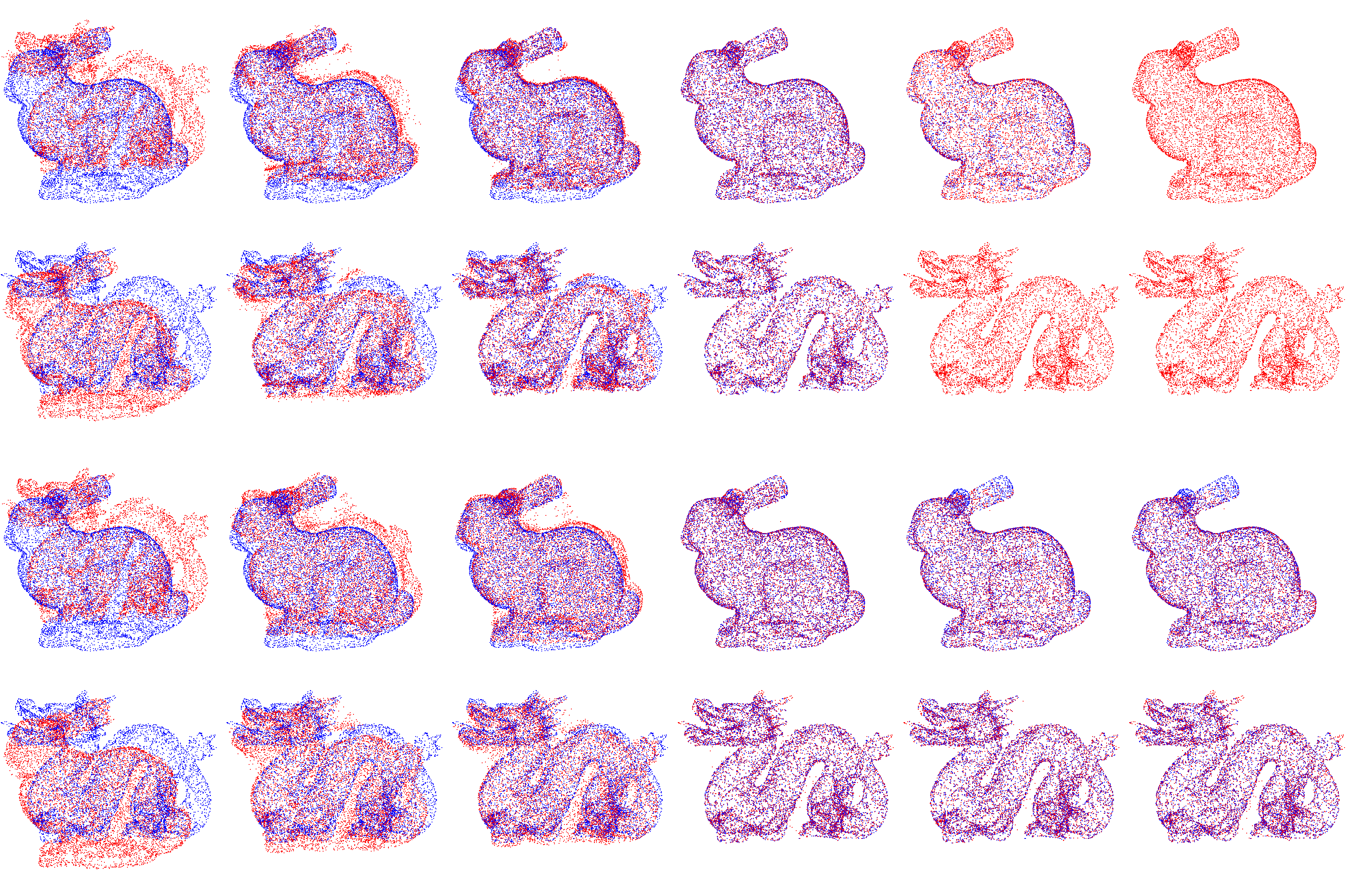}
					\put(0,33){SRRM}
					\put(0,0){BSP}				
				\end{overpic}
				\vfil
			}
		\end{minipage}
	}
	\caption{Comparison with BSP: Results of shape interpolation between two discrete uniform measures,
		a rabbit (top-left) and a dragon (bottom-left), each containing 8k points. Both methods use 16-plan
		merging and 300 iterations; for SRRM, the number of anchors is set to $k=2$ and the screening rounds
		are set to $R=10$. The evolution over iterations is shown in the right panel.}
	\label{fig:three_layout_7_3}
\end{figure}

\subsection{3D point cloud classification}
ModelNet10 \cite{wu2015shapenets} is a dataset for 3D shape classification, which contains 3,991 CAD objects from 10 categories for training and 908 CAD objects for testing. For each category, we randomly select 60 objects for training and 20 objects for testing. For each object, we randomly sample $n \in \{100, 200, 500, 1000, 2000\}$ points to obtain 3D point clouds. We then compute pairwise distances between point clouds under different distance metrics and evaluate the classification accuracy on the test set using the K-NN algorithm (neighbors$=5$).

For MMD, we use the RBF kernel; for SW, we set the number of slices to 10.For the BSP method, we merge the matching results from five independent runs; for SRRM, we similarly merge five runs to improve the stability and overall performance. We note that we do not apply the point-selection heuristic in this experiment. The main reason is that the point-selection heuristic is designed to mitigate the ``last-mile'' issue, i.e., to avoid prematurely separating very close point pairs during recursive partitioning. In the present task, however, the two point clouds are often far apart, in which case the final-stage Hungarian refinement in the iterative procedure becomes the dominant computational bottleneck. As a result, although the point-selection heuristic may further improve matching accuracy, it typically incurs a substantial runtime overhead.
Table~\ref{fig:vertical-four111} summarizes the averaged performance of each metric over 20 runs, showing that SRRM significantly outperforms other methods in both accuracy and efficiency.

\begin{table}[t]
	\centering
	\caption{Comparisons on 3D Point Cloud Classification} 
	\resizebox{\textwidth}{!}{ 
		\begin{tabular}{lccccc|ccccc}
			\hline\hline 
			\textbf{Method} & \textbf{n=100} & \textbf{n=200} & \textbf{n=500} & \textbf{n=1000} & \textbf{n=2000} & \textbf{n=100} & \textbf{n=200} & \textbf{n=500} & \textbf{n=1000} & \textbf{n=2000} \\
			\hline
			& \multicolumn{5}{c|}{\textbf{Accuracy (\%)}} & \multicolumn{5}{c}{\textbf{CPU time(s)}} \\
			\hline
			SRRM & 80.4$\pm$3.7 & 82.6$\pm$2.9 & 84.7$\pm$2.7 & \textbf{85.7}$\pm$2.6 & \textbf{85.5}$\pm$2.6 & 41.6 & 92.9 &250.9 & 545.8 & 1181.5 \\
			BSP & 78.9$\pm$3.6 & 82.9$\pm$2.6 & \textbf{84.8}$\pm$2.6 & 84.6$\pm$3.2 & 85.2$\pm$2.8 & 83.8 & 184.8 & 512.7 & 1102.7 & 2427.5 \\
			HCP & 73.6$\pm$2.9 & 78.8$\pm$2.9 & 81.8$\pm$3.3 & 82.0$\pm$2.9 & 82.6$\pm$2.8 & \textbf{10.2} & \textbf{17.9} & \textbf{51.9} & \textbf{120.6} & \textbf{240.3} \\
			SW & 70.3$\pm$4.1 & 73.1$\pm$3.6 & 76.2$\pm$4.4 & 76.7$\pm$4.1 & 77.4$\pm$3.9 & 94.9 & 159.7 & 365.0 & 764.5 & 1653.4 \\
			GSW(Poly5) & 70.1$\pm$3.3 & 72.9$\pm$3.9 & 75.9$\pm$4.4 & 77.0$\pm$3.5 & / & 782.8 & 986.2 & 1541.9 & 2334.6 & / \\
			MMD & 68.8$\pm$3.9 & 76.9$\pm$3.0 & / & / & / & 126.9 & 655.5 & / & / & / \\
			Sinkhorn & \textbf{81.4}$\pm$3.2 & \textbf{83.0}$\pm$3.2 & 84.5$\pm$2.7 & / & / & 308.4 & 639.6 & 2255.5 & / & / \\
			\hline\hline 
		\end{tabular}
	}
	\caption*{\textit{Note:"/"means that we fail to get result in 5,000 seconds.}} 
	\label{fig:vertical-four111}
\end{table}

\begin{figure}[H]
	\centering
	\includegraphics[width=0.8\linewidth]{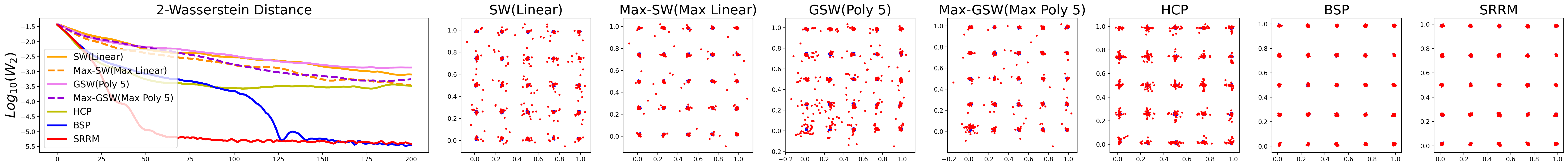}\\[3pt]
	\includegraphics[width=0.8\linewidth]{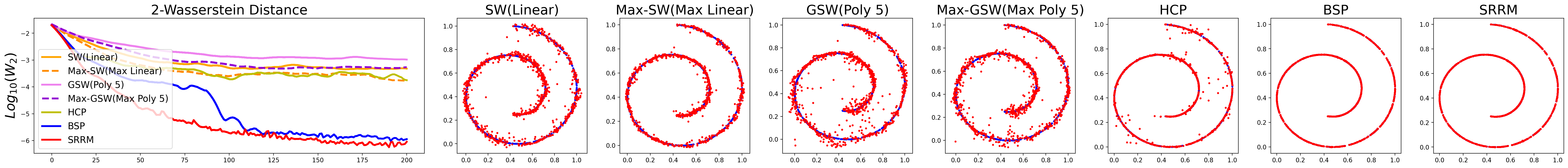}\\[3pt]
	\includegraphics[width=0.8\linewidth]{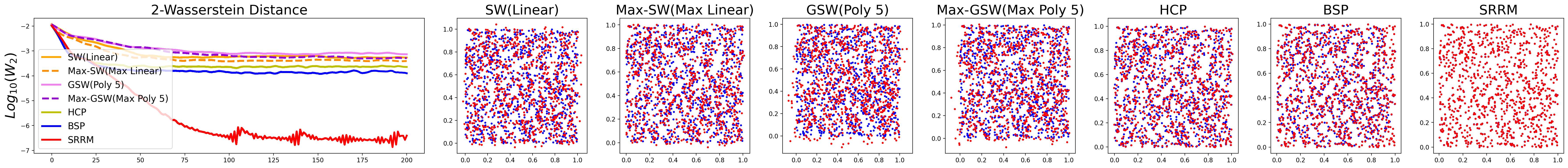}\\[3pt]
	\includegraphics[width=0.8\linewidth]{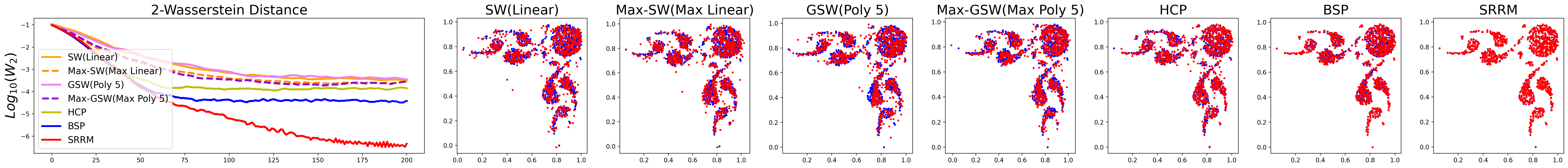}\\[3pt]
	\includegraphics[width=0.8\linewidth]{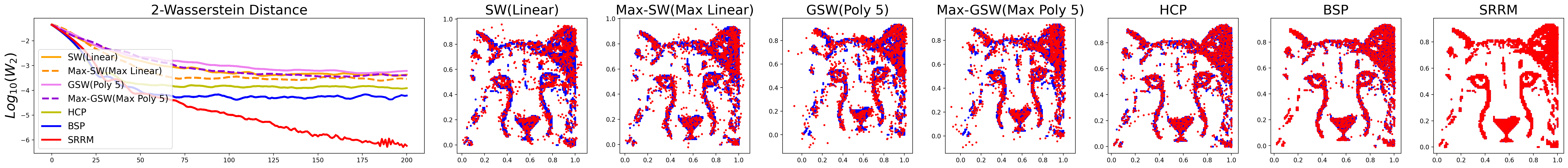}
	\caption{Log 2-Wasserstein distance between the source and target distributions versus the number of iterations t,both BSP and SRRM merge $K=10$ plans. For SRRM, we set the number of screening rounds to $R=10$ and the number of anchors to $k=5$.}
	\label{fig:vertical-four}
\end{figure}

\subsection{ Approximation of Wasserstein flow}

We consider the following problem \cite{kolouri2019gsw}: minimizing $\min_{\mu} W_{2}(\mu,\nu)$, where $\nu$ is a synthetically constructed target distribution and $\mu$ is the source distribution initialized as $\mu_{0}=\mathcal{N}(0,1)$ and updated iteratively via
$
\partial_{t}\mu_{t}=-\nabla W_{2}(\mu_{t},\nu).
$
We consider different choices for the target distribution $\nu$: 25-Gaussian, Swiss Roll, uniform, Decorative Border and puma, and approximate the Wasserstein distance $W_{2}$ using SW, max-SW, GSW, max-GSW, HCP, BSP, and SRRM. Each method applies one projection per iteration and sets the learning rate to $0.01$. The convergence curves of these methods and snapshots of the learned results at $t=200$ are shown in Fig.~\ref{fig:vertical-four}. We observe that SRRM yields better results.

Specifically, SW and its variants can typically recover only the coarse shape of the diffusing red points, while failing to fully align fine-grained structures and individual modes. In contrast, recursive methods derived from point-to-point couplings possess a natural advantage in this regard, including single-recursion coupling (HCP), multi-recursion merging (BSP), and multi-recursion merging with point selection (SRRM). In particular, SRRM further mitigates the \emph{last-mile} issue of BSP, enabling a much more thorough alignment between the red and blue point sets and achieving an almost complete fit. To the best of our knowledge, SRRM yields the best performance under this experimental setting among the methods considered.

For the $8\mathrm{K}$-point setting, we further design a corresponding experiment (see Fig.~\ref{fig:vertical-four1}.). In this larger-scale regime, the proposed SRRM method still demonstrates excellent performance. As a light-hearted nod to the pioneers of optimal transport, L.~Kantorovich and G.~Monge, we include their portraits in the figure.

\begin{figure}[htbp]
	\centering
	\includegraphics[width=0.6\linewidth]{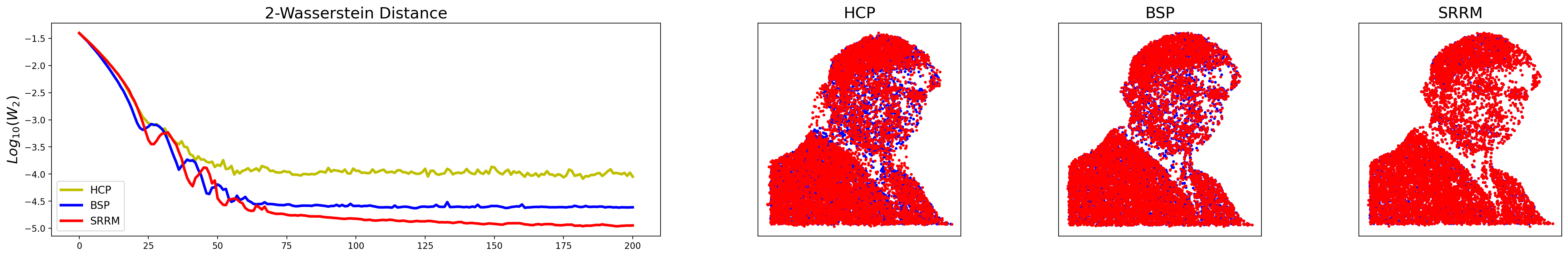}
	\includegraphics[width=0.6\linewidth]{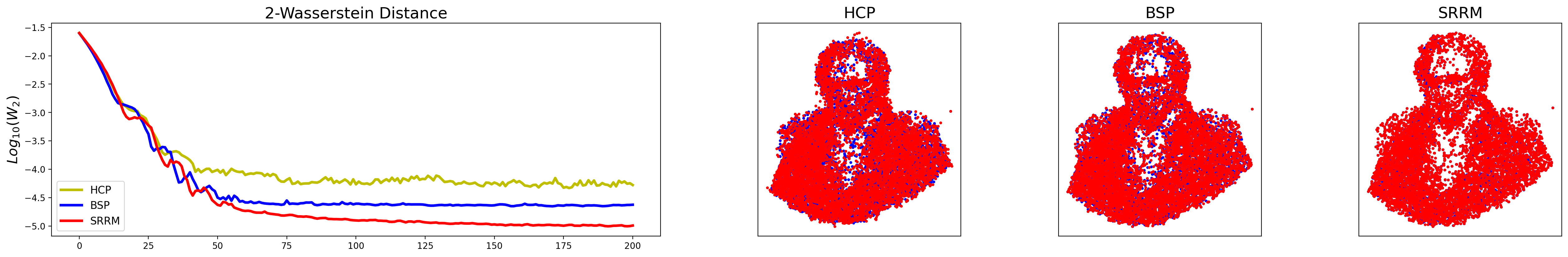}
	\caption{$8\mathrm{K}$-point setting}
	\label{fig:vertical-four1}
\end{figure}

\subsection{  Color transfer for images}
We further consider a real-world color transfer task.We perform bidirectional color transfer between a daytime seascape image and a sunset seascape image. Each image is represented as nearly one million pixels in the RGB space ($d=3$). Given the large sample size, we use the SW distance, HCP distance, BSP distance, and SRRM distance to approximate the Wasserstein flow, while adopting the same learning hyperparameters for all methods. The comparisons of the color transfer results and the optimization trajectories are presented in Fig.~\ref{fig:vertical-four11} and Fig.~\ref{fig:vertical-four22}, from which we observe that the SRRM distance achieves the best performance.

%

\begin{figure}[t]
	\centering
	
	\begin{subfigure}[b]{0.7\linewidth}
		\centering
		\begin{overpic}[width=\linewidth,percent]{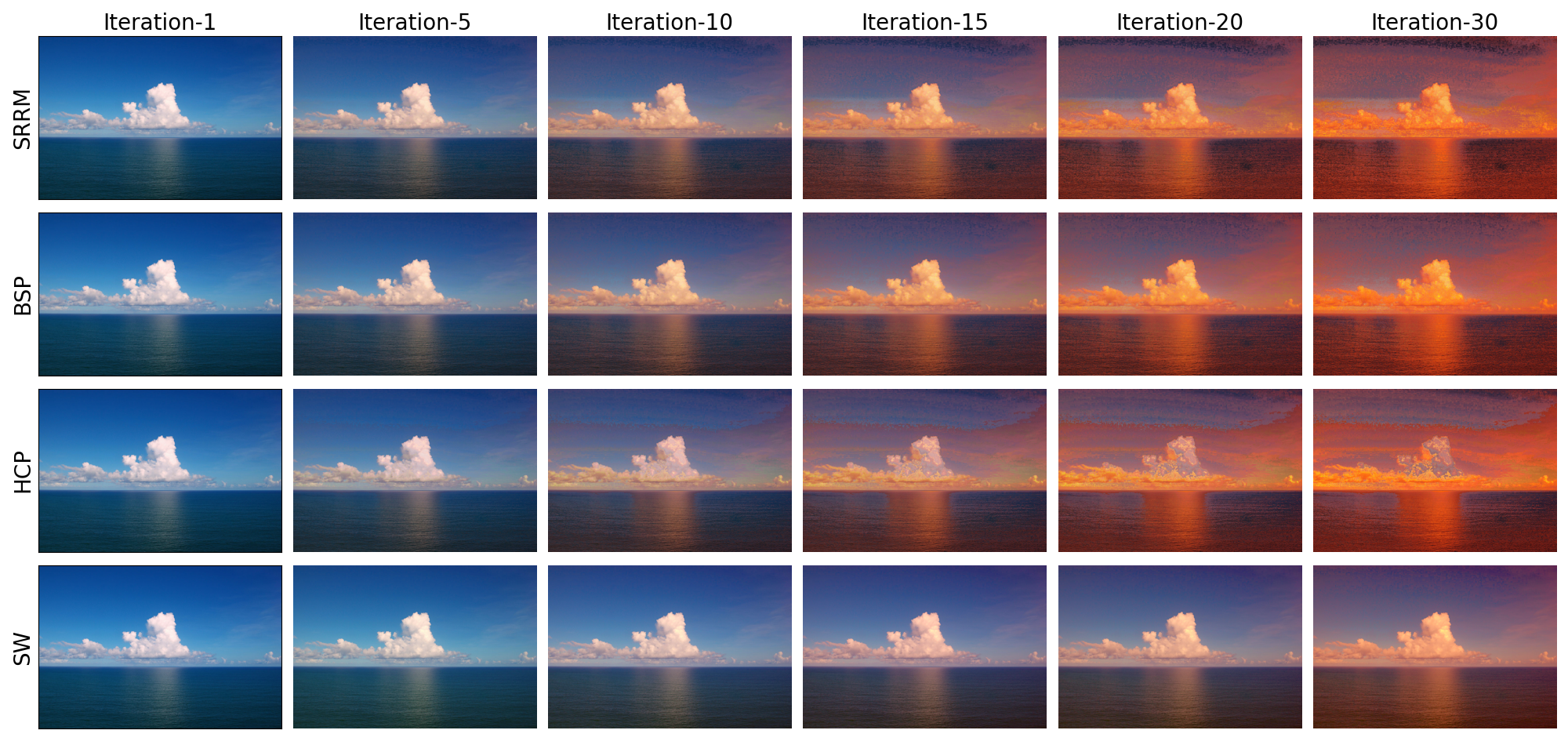}
		\end{overpic}
		\caption{The iteration process for the daytime-to-sunset color transfer}
		\label{fig:vertical-four11}
	\end{subfigure}
	
	\vspace{0.8em}
	
	\begin{subfigure}[b]{0.7\linewidth}
		\centering
		\begin{overpic}[width=\linewidth,percent]{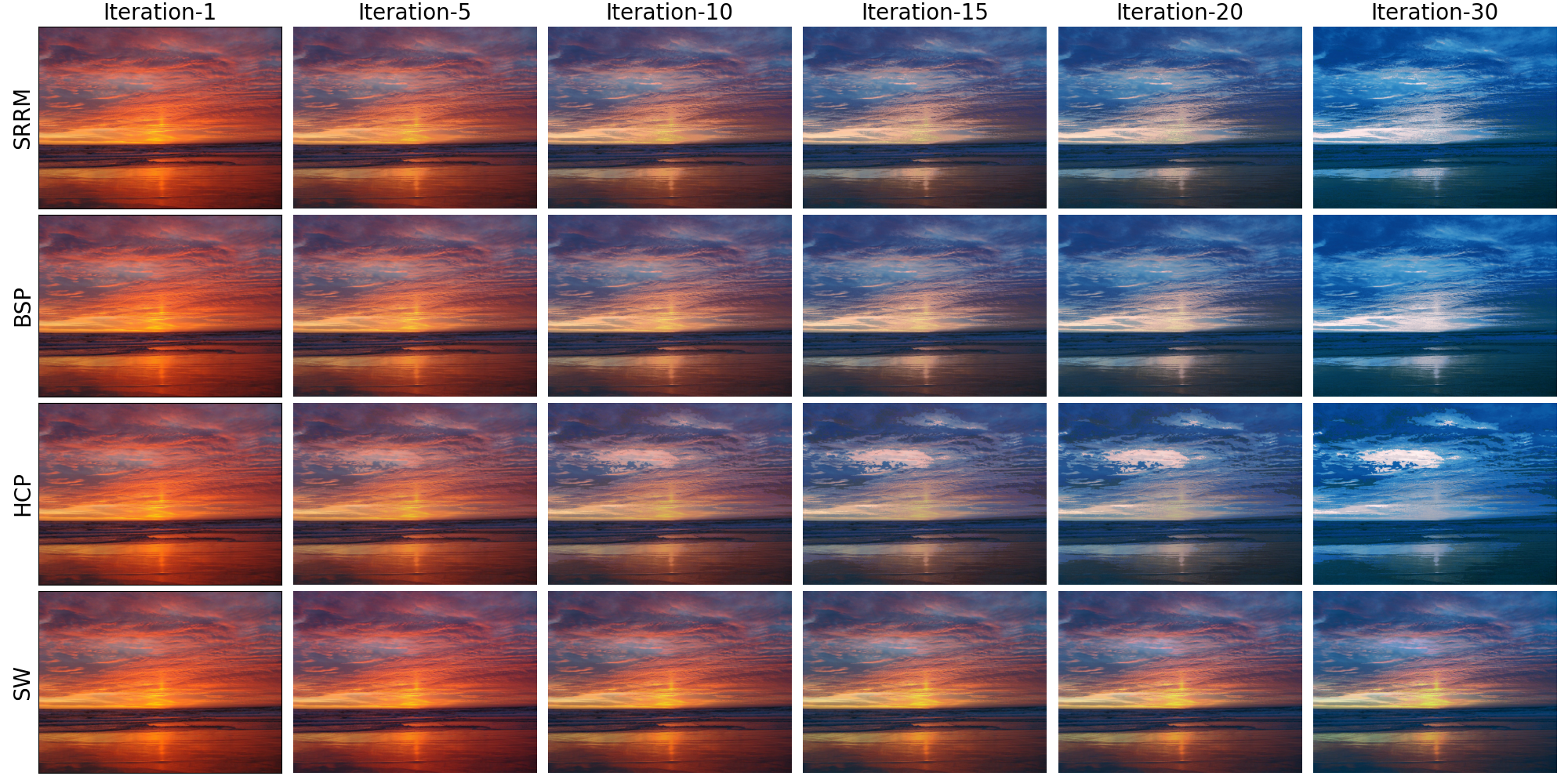}
		\end{overpic}
		\caption{The iteration process for the sunset-to-daytime color transfer}
		\label{fig:vertical-four22}
	\end{subfigure}
	\caption{Both BSP and SRRM merge $K=5$ plans. For SRRM, the number of screening rounds is set to $R=20$ after 27 iterations and $R=0$ for all other iterations, while the number of anchors is fixed at $k=2$.}

\end{figure}

\subsection{Generative modeling}

We can design new variants of the Wasserstein autoencoder (WAE) \cite{refTolstikhin2018} based on the proposed method. In particular, we use SRRM to penalize the discrepancy between the latent prior distribution and the aggregated posterior distribution. In our experiments, we compare SRRM with the HCP-based autoencoder and the BSP-based autoencoder. 

For visualization purposes, we encode the MNIST dataset \cite{refLeCun1998} into a two dimensional latent space. The encoder adopts a symmetric, classical deep convolutional neural network architecture with 2D average pooling and Leaky-ReLU activations, while the decoder includes upsampling layers. The batch size is set to 500. The reconstruction loss combines binary cross-entropy (BCE) and $\ell_1$ losses, while the latent-space regularization term uses the above three methods to measure the discrepancy between the aggregated posterior and the target prior. All loss terms are initialized with weights of 5. We train the model for 40 epochs, and starting from epoch 20, we progressively increase the weight of the latent regularizer by a factor of $1.1$ at each subsequent epoch. To evaluate performance, we randomly select 5{,}000 samples from the encoded test data points and draw random samples from the target prior distribution in the latent space. We observe that, in the latent space, our SRRM converges  faster than the other methods as shown in Fig.~\ref{fig:example21}. We find that although the three methods perform similarly in the image space, our SRRM enforces a substantially stronger regularization in the latent space, yielding latent codes that match the target prior more faithfully.

\begin{figure}[htbp]
	\centering
	
	\begin{subfigure}[b]{0.6\linewidth}
		\centering
		\begin{overpic}[width=\linewidth,percent]{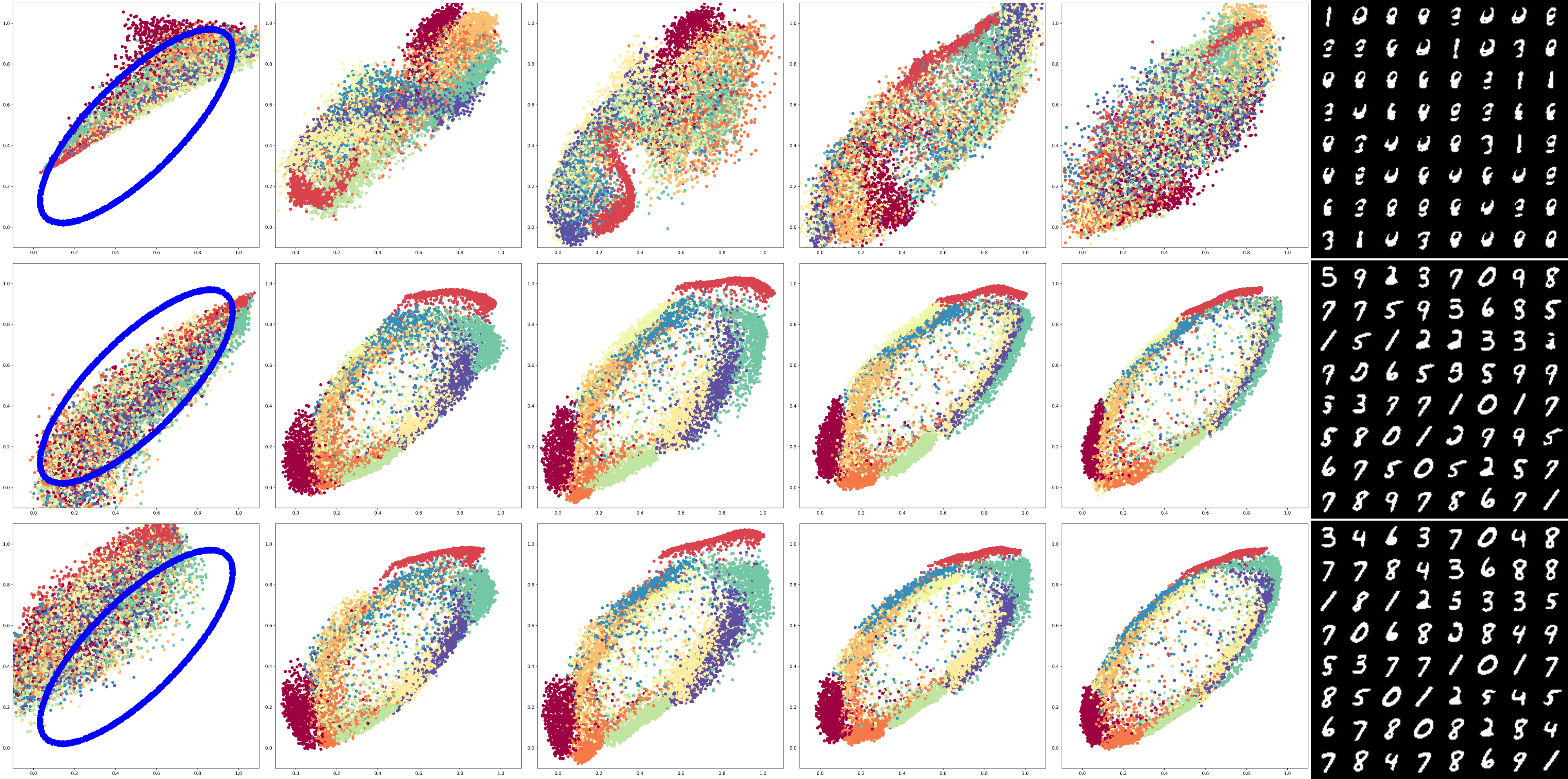}
			\put(0,36){\small HCP}
			\put(0,19){\small BSP}
			\put(0,1){\small SRRM}
		\end{overpic}
	\end{subfigure}
	
	\vspace{0.8em} 
	
	\begin{subfigure}[b]{0.6\linewidth}
		\centering
		\begin{overpic}[width=\linewidth,percent]{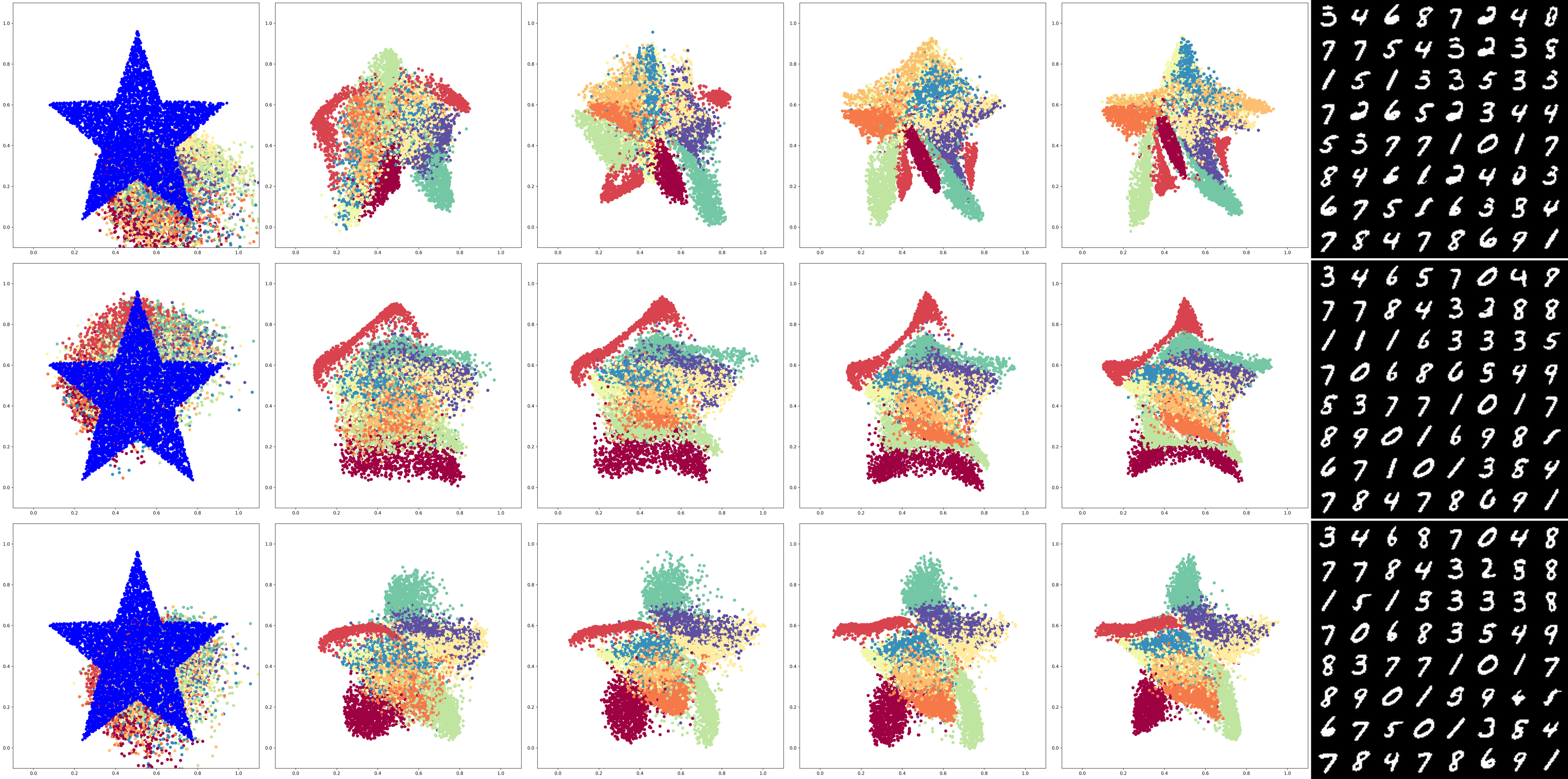}
			\put(0,36){\small HCP}
			\put(0,19){\small BSP}
			\put(0,1){\small SRRM}
		\end{overpic}
	\end{subfigure}
	
	\caption{The first five columns show the latent evolution every 10 epochs; the last column shows the corresponding reconstructions. Both BSP and SRRM merge $K=5$ plans. For SRRM, we set the number of screening rounds to $R=10$ and the number of anchors to $k=5$.}
	\label{fig:example21}  
\end{figure}

\section{Conclusion}
\label{sec:CONCLUSION}

In this work, we establish the convergence of the Anchored Empirical RRM and provide a proof of its convergence rate, thereby strengthening the theoretical foundations of this fast and reliable approach. Moreover, we give a theoretical explanation for the last-mile issue arising in a class of recursive partitioning methods and, guided by this insight, propose a heuristic algorithm to address this problem.

\textit{Limitations and Future Work:} The proposed SRRM algorithm mainly focuses on discrete measures with equal weights and the same number of support points. In addition, we do not yet address the curse of dimensionality that may arise in high-dimensional settings. We will continue to investigate more comprehensive theoretical properties of the RRM distance and its broader applicability.

\appendix
\section*{Appendix}
\label{sec:Appendix}

	\begin{definition}[Mass-median axis-recursive]Explicit recursive definition \label{definition1}
		
	\paragraph{(1) Axis recursion.}
	We adopt the axis-cycling strategy of \cite{ref10,ref11}. Specifically, at recursion depth $h$, the coordinate axis used for splitting is determined by
	\[
	j(h)=1+(h \bmod d), \qquad h=0,1,2,\ldots
	\]
	where $j(h)\in\{1,\ldots,d\}$ denotes the index of the coordinate axis selected for the split at depth $h$.

	\paragraph{(2) Mass-median bisection.}
	Fix $Q_{0,1}\subset\mathbb R^d$ with $\mathrm{supp}(\mu)\subset Q_{0,1}$ and let $j(h)=1+(h\bmod d)$.
	Given a depth-$h$ cell $Q_{h,k}$, define the conditional mass median along coordinate $j(h)$ by
	\[
	m_{h,k}:=\inf\Big\{m\in\mathbb{R}:\ \mu\big(Q_{h,k}\cap\{x_{j(h)}\le m\}\big)\ge\tfrac12\,\mu(Q_{h,k})\Big\}.
	\]
	By construction, this yields an equal-mass split.
	\[
	\mu\big(Q_{h,k}\cap\{x_{j(h)}\le m_{h,k}\}\big)
	=
	\mu\big(Q_{h,k}\cap\{x_{j(h)}> m_{h,k}\}\big)
	=\tfrac12 \mu(Q_{h,k}).
	\]
	Define the two children by
	\[
	Q_{h+1,2k-1}:=Q_{h,k}\cap\{x_{j(h)}\le m_{h,k}\},
	\qquad
	Q_{h+1,2k}:=Q_{h,k}\cap\{x_{j(h)}> m_{h,k}\}.
	\]
	In particular, for all $h\ge0$ and $k\in\{1,\ldots,2^h\}$, we have $\mu(Q_{h,k})=2^{-h}$.
	
	\paragraph{(3) $0/1$ path coding.}
	For $x\in\mathbb{R}^d$, define the $0/1$ digit at depth $h+1$ by
	\[
	s_{h+1}(x)=
	\begin{cases}
		0, & x\in Q_{h+1,2k-1} ( \text{left child of $Q_{h,k}$ }),\\
		1, & x\in Q_{h+1,2k}( \text{right child of $Q_{h,k}$ }),
	\end{cases}
	\]
	where $Q_{h,k}$ is the unique depth-$h$ cell containing $x$.
	Define the associated dyadic index
	\[
	\hat s_\mu(x):=\sum_{h=1}^\infty s_h(x)\,2^{-h}\in[0,1].
	\]
	
    \end{definition}

\begin{proof}[\textbf{Proof of Lemma~\ref{lem:pushforward_treecurve}}]
	\leavevmode\par\smallskip
	Fix $h\ge 0$. By the dyadic parametrization in Definition~\ref{lem:conditional-holder1},
	each depth-$h$ cell $Q_{h,k}$ is assigned a unique binary prefix
	$(s_1,\dots,s_h)\in\{0,1\}^h$ and hence a unique dyadic interval
	$
	I_{h,k}:=I(s_1,\dots,s_h)
	=\Big[\sum_{i=1}^h s_i2^{-i},\ \sum_{i=1}^h s_i2^{-i}+2^{-h}\Big),
	$
	with Lebesgue length $\lambda(I_{h,k})=2^{-h}$. Moreover, up to the dyadic endpoints,
	$
	T_\mu^{(h)^{-1}}(Q_{h,k}) = I_{h,k}.
	$
	Therefore,
	\[
	\lambda\!\big(T_\mu^{(h)^{-1}}(Q_{h,k})\big)=\lambda(I_{h,k})=2^{-h}.
	\]
	On the other hand, by the mass-median bisection in Definition~\ref{definition1}(2),
	each depth-$h$ cell has equal $\mu$-mass,
	$
	\mu(Q_{h,k})=2^{-h},\qquad k=1,\dots,2^h.
	$
	Hence for every $k$,
	\[
	(T_\mu^{(h)})_\#\lambda(Q_{h,k})=\mu(Q_{h,k}),
	\]
	and by additivity the same holds for every set $A$ in the finite algebra
	$\mathcal A_h$ generated by $\{Q_{h,k}\}_{k=1}^{2^h}$:
	\[
	(T_\mu^{(h)})_\#\lambda(A)=\mu(A),\qquad \forall A\in\mathcal A_h.
	\]
	
	Now define $Q_h(t):=T_\mu^{(h)}(t)$. For every $t\in[0,1]$ outside dyadic endpoints,
	$\{Q_h(t)\}_{h\ge 0}$ is nested and its diameters shrink to $0$ under repeated
	coordinate bisections; hence $\bigcap_{h\ge 0}Q_h(t)$ is a singleton.
	Define $T_\mu(t)$ as the unique point in $\bigcap_{h\ge 0}Q_h(t)$ and assign an arbitrary
	value on dyadic endpoints (a countable $\lambda$-null set). This yields a Borel measurable
	map $T_\mu$.
	
	Let $\mathcal E$ be the set of dyadic endpoints. For each $h,k$, if $t\notin\mathcal E$
	then $t\in I_{h,k}$ implies $T_\mu(t)\in Q_{h,k}$, hence
	$
	T_\mu^{-1}(Q_{h,k})\setminus T_\mu^{(h)^{-1}}(Q_{h,k})\subseteq \mathcal E,
	\qquad
	T_\mu^{(h)^{-1}}(Q_{h,k})\setminus T_\mu^{-1}(Q_{h,k})\subseteq \mathcal E.
	$
	Since $\lambda(\mathcal E)=0$, it follows that
	$
	\lambda\!\big(T_\mu^{-1}(Q_{h,k})\big)=\lambda\!\big(T_\mu^{(h)^{-1}}(Q_{h,k})\big)=\mu(Q_{h,k}).
	$
	and by additivity,
	\[
	\lambda\!\big(T_\mu^{-1}(A)\big)=\mu(A),\qquad \forall A\in\mathcal A_h.
	\]
	
	Finally, as $h$ varies, the algebras $\mathcal A_h$ generate the Borel $\sigma$-algebra
	on $\mathbb{R}^d$ (the partition refines and separates points up to boundaries). Therefore,
	by a standard monotone class argument, the identity
	$\lambda(T_\mu^{-1}(\cdot))=\mu(\cdot)$ extends from $\bigcup_h\mathcal A_h$ to all Borel
	sets $B\subset\mathbb{R}^d$, i.e.,
	\[
	\lambda\!\big(T_\mu^{-1}(B)\big)=\mu(B).
	\]
	Equivalently, $(T_\mu)_\#\lambda=\mu$.
\end{proof}

\begin{proof}[\textbf{Proof of Theorem~\ref{lem:conditional-holder4}}]
	\leavevmode\par\smallskip
	Let $\lambda$ denote the uniform probability measure on $[0,1]$.
	By Lemma~\ref{lem:pushforward_treecurve}, $(T_\mu)_\#\lambda=\mu$ and $(T_\nu)_\#\lambda=\nu$.
	Define a coupling $\pi$ between $\mu$ and $\nu$ by pushing forward $\lambda$
	through the map $t\mapsto (T_\mu(t),T_\nu(t))$:
	\[
	\pi := (T_\mu,T_\nu)_\#\lambda \in \Pi(\mu,\nu).
	\]
	Then, by definition of $W_2$ as the infimum transport cost over couplings,
	\[
	W_2^2(\mu,\nu)
	\le \int_{\mathbb R^d\times\mathbb R^d} \|x-y\|_2^2\,d\pi(x,y)
	= \int_0^1 \|T_\mu(t)-T_\nu(t)\|_2^2\,dt
	= \mathrm{RRM}^2(\mu,\nu).
	\]
	Taking square roots gives $W_2(\mu,\nu)\le \mathrm{RRM}(\mu,\nu)$.
	
	Nonnegativity and symmetry are immediate from the definition.
	For the triangle inequality, let $\rho$ be another measure in the class considered here, with associated $T_\rho$. Then by Minkowski's inequality in $L^2([0,1])$,
	\[
	\mathrm{RRM}(\mu,\nu)
	= \|T_\mu-T_\nu\|_{L^2}
	\le \|T_\mu-T_\rho\|_{L^2} + \|T_\rho-T_\nu\|_{L^2}
	= \mathrm{RRM}(\mu,\rho)+\mathrm{RRM}(\rho,\nu).
	\]
	
	It remains to prove identity of indiscernibles.
	If $\mathrm{RRM}(\mu,\nu)=0$, then $\|T_\mu(t)-T_\nu(t)\|_2=0$ for $\lambda$-a.e.\ $t$,
	so $T_\mu=T_\nu$ $\lambda$-a.e. Hence for any Borel set $B\subset\mathbb R^d$,
	\[
	\mu(B)=(T_\mu)_\#\lambda(B)=\lambda(T_\mu^{-1}(B))
	=\lambda(T_\nu^{-1}(B))=(T_\nu)_\#\lambda(B)=\nu(B),
	\]
	where we used Lemma~\ref{lem:pushforward_treecurve} and the fact that changing a map on a $\lambda$-null set
	does not change its pushforward measure. Thus $\mu=\nu$.
	
\end{proof}

\begin{proof}[\textbf{Proof of Lemma~\ref{lem:conditional-holder}}]
	\leavevmode\par\smallskip
	Fix $t<t'$ in $[0,1)$ and define the largest common-prefix depth 
	\[
	r(t,t'):=\max\{h\ge 0:\ \lfloor 2^h t\rfloor=\lfloor 2^h t'\rfloor\}.
	\]
	By definition of $r(t,t')$, for every $h\le r(t,t')$ we have
	$\lfloor 2^h t\rfloor=\lfloor 2^h t'\rfloor$, hence $t$ and $t'$ belong to the same dyadic prefix interval
	$I_{h,k}$ at depth $h$ . By the construction of the RRM tree-curve order,
	this implies that $t$ and $t'$ are assigned to the same depth-$h$ cell, so that
	\[
	Q_h(t)=Q_h(t')\qquad\text{for all }h\le r(t,t').
	\]
	In particular, both $T_\mu(t)$ and $T_\mu(t')$ belong to the common ancestor cell
	\[
	Q:=Q_{r(t,t')}(t)=Q_{r(t,t')}(t'),
	\]
	and therefore
	\begin{equation}
		\label{eq:Tu_diam_bound2}
		\|T_\mu(t)-T_\mu(t')\|_2 \le \operatorname{diam}(Q).
	\end{equation}
	
	We next control $\operatorname{diam}(Q)$ under the mass-median partition.
	Consider an arbitrary cell $\widetilde Q$ that is split along some coordinate axis.
	Let $[a,b]$ be the projection of $\widetilde Q$ onto that axis, with length $L=b-a$.
	After rearranging coordinates, write $\widetilde Q=[a,b]\times S$, where
	$S\subset\mathbb R^{d-1}$ is the projection onto the remaining coordinates.
	For any sub-interval $I\subset[a,b]$ of length $\delta$, using $f\le M$ we have
	\[
	\mu(I\times S)\le \int_{I\times S} M\,dx = M\,\delta\,|S|,
	\]
	while using $f\ge m$ we have
	\[
	\mu(\widetilde Q)=\mu([a,b]\times S)\ge \int_{[a,b]\times S} m\,dx = m\,L\,|S|.
	\]
	Since the cut is performed at a conditional mass median, each child has mass $\frac12\mu(\widetilde Q)$.
	Hence, if a child has axis-length $\delta$ along the split coordinate, then
	\[
	\frac12\mu(\widetilde Q)=\mu(\text{child})\le M\,\delta\,|S|
	\quad\Longrightarrow\quad
	\delta \ge \frac{m}{2M}L.
	\]
	Therefore both children have axis-length at most
	\[
	L-\frac{m}{2M}L = \Big(1-\frac{m}{2M}\Big)L.
	\]
	Set
	\[
	\rho:=1-\frac{m}{2M}\in(0,1).
	\]
	
	Now consider the depth-$h$ cell $Q_h(t)$ containing $T_\mu(t)$.
	Under the axis-cycling schedule $j(h)=1+(h\bmod d)$, each coordinate is selected either
	$\lfloor h/d\rfloor$ or $\lceil h/d\rceil$ times among the first $h$ refinements.
	Each time a coordinate is selected, the side-length in that coordinate is multiplied by at most $\rho$.
	
	Let $L_i$ be the side-length of the root box $Q_{0,1}$ in coordinate $i$, and set
	$L_{\max}:=\max_{1\le i\le d} L_i$.
	Hence every side-length of $Q_h(t)$ is bounded by $L_{\max}\rho^{\lfloor h/d\rfloor}$, and thus
	\[
	\operatorname{diam}(Q_h(t))
	\le \sqrt d\,\max_{1\le i\le d}\text{(side length in coord.\ $i$)}
	\le \sqrt d\,L_{\max}\rho^{\lfloor h/d\rfloor}
	\le (\sqrt d\,L_{\max}\rho^{-1})\,\rho^{h/d}.
	\]
	Absorbing the factor $\sqrt d\,L_{\max}\rho^{-1}$ into a constant, we obtain
	\[
	\operatorname{diam}(Q_h(t)) \le C\,\rho^{h/d},
	\]
	for some constant $C>0$ depending only on $d,m,M$ and $Q_{0,1}$ (through $L_{\max}$).
	Applying this bound with $h=r(t,t')$ and combining with \eqref{eq:Tu_diam_bound2} yields
	\[
	\|T_\mu(t)-T_\mu(t')\|_2 \le C\,\rho^{\,r(t,t')/d}.
	\]
	
	Let $r:=r(t,t')$. Since $t$ and $t'$ share the same dyadic prefix of length $r$ but not of length $r+1$,
	they belong to the same dyadic interval of length $2^{-r}$ but different dyadic intervals of length $2^{-(r+1)}$.
	Therefore
	\[
	2^{-(r+1)}\le t'-t < 2^{-r},
	\qquad\text{hence}\qquad
	2^{-r}\le 2(t'-t).
	\]
	Define
	\[
	\alpha:=\frac1d\log_2\!\Big(\frac1\rho\Big)>0,
	\]
	so that $\rho^{r/d}=(2^{-r})^\alpha$. Using $2^{-r}\le 2(t'-t)$, we obtain
	\[
	\rho^{r/d}=(2^{-r})^\alpha \le \big(2(t'-t)\big)^\alpha = 2^\alpha (t'-t)^\alpha.
	\]
	Substituting gives
	\[
	\|T_\mu(t)-T_\mu(t')\|_2 \le C\,2^\alpha\,(t'-t)^\alpha.
	\]
	Since the left-hand side is symmetric in $(t,t')$, the same bound holds for all $t,t'\in[0,1)$.
	Setting $C_d:=C\,2^\alpha$ completes the proof.
\end{proof}

\begin{proof}[\textbf{Proof of Theorem~\ref{thm:rrm_pure_good}}]
	\leavevmode\par\smallskip
	Throughout, write $y_t:=g_{\mu_n}^{-1}(t)$. By Lemma~\ref{lem:pushforward_treecurve},
	we have $g_\mu(t)=t$ for all $t\in[0,1]$, hence $g_\mu^{-1}(t)=t$ and therefore
	\[
	|t-y_t|
	=
	\big|g_\mu^{-1}(t)-g_{\mu_n}^{-1}(t)\big|.
	\]
	By the global H\"older control in Lemma~\ref{lem:conditional-holder}, for all $t\in[0,1]$,
	\[
	\big\|T_\mu(t)-T_\mu(y_t)\big\|_2
	\le
	C_d\,|t-y_t|^\alpha.
	\]
	Squaring and integrating yields
	\begin{align*}
		\mathrm{RRM}(\mu,\mu_n)^2
		&=
		\int_0^1\big\|T_\mu(t)-T_\mu(y_t)\big\|_2^2\,dt
		\le
		C_d^2\int_0^1 |t-y_t|^{2\alpha}\,dt,
	\end{align*}
	and hence
	\begin{equation}
		\label{eq:rrm_global_reduction}
		\mathrm{RRM}(\mu,\mu_n)
		\le
		C_d\Bigg(\int_0^1 |t-y_t|^{2\alpha}\,dt\Bigg)^{1/2}.
	\end{equation}
	
	Moreover, by the one-dimensional quantile representation of $W_1$,
	\begin{equation}
		\label{eq:w1_quantile_repr_global}
		\int_0^1 |t-y_t|\,dt
		=
		\int_0^1\big|g_{\mu_n}^{-1}(t)-g_\mu^{-1}(t)\big|\,dt
		=
		W_1(\mu_n,\mu).
	\end{equation}
	Since $t,y_t\in[0,1]$, we have $|t-y_t|\le 1$. Let $h(t):=|t-y_t|\in[0,1]$.
	
	If $2\alpha\le 1$, then $x\mapsto x^{2\alpha}$ is concave on $\mathbb{R}_+$, and Jensen's inequality gives
	\[
	\int_0^1 h(t)^{2\alpha}\,dt
	\le
	\Big(\int_0^1 h(t)\,dt\Big)^{2\alpha}.
	\]
	Combining with \eqref{eq:rrm_global_reduction}--\eqref{eq:w1_quantile_repr_global} yields
	\[
	\mathrm{RRM}(\mu,\mu_n)
	\le
	C\,W_1(\mu_n,\mu)^\alpha.
	\]
	If $2\alpha>1$, using the interpolation inequality
	\[
	\int_0^1 h^{2\alpha}\,dt
	\le
	\|h\|_\infty^{2\alpha-1}\int_0^1 h\,dt,
	\]
	and $\|h\|_\infty\le 1$, we obtain $\int_0^1 h^{2\alpha}\,dt\le \int_0^1 h\,dt$, hence
	\[
	\mathrm{RRM}(\mu,\mu_n)
	\le
	C\,W_1(\mu_n,\mu)^{1/2}.
	\]
	Under our setting $\mu$ is supported on $\mathbb{R}^d$, so $\mu$ has finite first moment and it is classical that
	$W_1(\mu_n,\mu)\to 0$ almost surely. Therefore, in either case,
	$\mathrm{RRM}(\mu,\mu_n)\to 0$ almost surely.
	
	For the expectation bound, if $2\alpha\le 1$, taking expectations and applying Jensen to the concave map
	$x\mapsto x^\alpha$ gives
	\[
	\mathbb{E}\,\mathrm{RRM}(\mu,\mu_n)
	\le
	C\,\mathbb{E}\big[W_1(\mu_n,\mu)^\alpha\big]
	\le
	C\,\big(\mathbb{E}W_1(\mu_n,\mu)\big)^\alpha.
	\]
	If $2\alpha>1$, taking expectations and applying Jensen to $x\mapsto x^{1/2}$ yields
	\[
	\mathbb{E}\,\mathrm{RRM}(\mu,\mu_n)
	\le
	C\,\big(\mathbb{E}W_1(\mu_n,\mu)\big)^{1/2}.
	\]
	Finally, by the standard one-dimensional bound \cite{panaretos2019statistical},
	there exists $C_1>0$ such that
	\[
	\mathbb{E}W_1(\mu_n,\mu)\le C_1 n^{-1/2}.
	\]
	Substituting yields
	\[
	\mathbb{E}\,\mathrm{RRM}(\mu,\mu_n)
	=
	O\!\left(n^{-\alpha/2}\right)
	\quad\text{when }2\alpha\le 1,
	\qquad
	\mathbb{E}\,\mathrm{RRM}(\mu,\mu_n)
	=
	O\!\left(n^{-1/4}\right)
	\quad\text{when }2\alpha>1,
	\]
	or equivalently,
	\[
	\mathbb{E}\,\mathrm{RRM}(\mu,\mu_n)
	=
	O\!\left(n^{-\min(\alpha/2,\,1/4)}\right).
	\]
	This completes the proof.
\end{proof}

\begin{proof}[\textbf{Proof of Corollary~\ref{cor:rrm_two_sample}}]
	\leavevmode\par\smallskip
	Write
	\[
	\mathrm{RRM}(\mu,\nu)=\|X-Y\|_{L_2([0,1])},\qquad
	\mathrm{RRM}(\mu_n,\nu_n)=\|\widehat X-\widehat Y\|_{L_2([0,1])},
	\]
	where $X,Y$ denote the population curve representations associated with $\mu,\nu$ and
	$\widehat X,\widehat Y$ denote the corresponding empirical curve representations associated with
	$\mu_n,\nu_n$. By the reverse triangle inequality in $L_2$,
	\[
	\Big|\mathrm{RRM}(\mu_n,\nu_n)-\mathrm{RRM}(\mu,\nu)\Big|
	=
	\Big|\|\widehat X-\widehat Y\|_{L_2}-\|X-Y\|_{L_2}\Big|
	\le
	\|(\widehat X-\widehat Y)-(X-Y)\|_{L_2}.
	\]
	Applying the triangle inequality again yields
	\[
	\|(\widehat X-\widehat Y)-(X-Y)\|_{L_2}
	\le
	\|\widehat X-X\|_{L_2}+\|\widehat Y-Y\|_{L_2}
	=
	\mathrm{RRM}(\mu,\mu_n)+\mathrm{RRM}(\nu,\nu_n).
	\]
	By Theorem~\ref{thm:rrm_pure_good}, we have almost surely
	$\mathrm{RRM}(\mu,\mu_n)\to 0$ and $\mathrm{RRM}(\nu,\nu_n)\to 0$,
	and therefore the right-hand side converges to $0$ almost surely. This proves
	\[
	\mathrm{RRM}(\mu_n,\nu_n)\to \mathrm{RRM}(\mu,\nu)\qquad\text{almost surely.}
	\]
	
	For the expectation bound, take expectations in the stability inequality to obtain
	\[
	\mathbb{E}\Big|\mathrm{RRM}(\mu_n,\nu_n)-\mathrm{RRM}(\mu,\nu)\Big|
	\le
	\mathbb{E}\,\mathrm{RRM}(\mu,\mu_n)+\mathbb{E}\,\mathrm{RRM}(\nu,\nu_n).
	\]
	Applying again Theorem~\ref{thm:rrm_pure_good} (to $\mu$ and to $\nu$) gives
	\[
	\mathbb{E}\,\mathrm{RRM}(\mu,\mu_n)
	=
	O\!\left(n^{-\min(\alpha_\mu/2,\,1/4)}\right),
	\qquad
	\mathbb{E}\,\mathrm{RRM}(\nu,\nu_n)
	=
	O\!\left(n^{-\min(\alpha_\nu/2,\,1/4)}\right).
	\]
	Hence, with $\alpha_*=\min\{\alpha_\mu,\alpha_\nu\}$,
	\[
	\mathbb{E}\Big|\mathrm{RRM}(\mu_n,\nu_n)-\mathrm{RRM}(\mu,\nu)\Big|
	=
	O\!\left(n^{-\min(\alpha_*/2,\,1/4)}\right),
	\]
	which completes the proof.
\end{proof}

\begin{proof}[\textbf{Proof of Theorem~\ref{thm:finite-depth-tree-consistency}}]
	\leavevmode\par\smallskip
	We divide the proof into three steps.
	\medskip
	\noindent
	\textbf{Step 1: uniform empirical-process control on axis-aligned boxes.}
	Let \(\mathcal R_d\) denote the class of all axis-aligned boxes contained in \(Q_{0,1}\).
	Since \(\mathcal R_d\) is a VC class, the uniform Glivenko--Cantelli theorem yields\cite{ref70}
	\begin{equation}\label{eq:UGC-boxes-final}
		\Delta_n
		:=
		\sup_{R\in\mathcal R_d}
		|\mu_n(R)-\mu(R)|
		\xrightarrow[]{a.s.}0.
	\end{equation}
	In particular,
	$
	\Delta_n\xrightarrow[]{P}0.
	$
	
	\medskip
	\noindent
	\textbf{Step 2: induction on the depth.}
	We prove by induction on \(h=0,1,\dots,H-1\) that
	\begin{equation}\label{eq:induction-threshold-final}
		\max_{1\le k\le 2^h}|m_{h,k}^{(n)}-m_{h,k}|
		\xrightarrow[]{P}0,
	\end{equation}
	and simultaneously
	\begin{equation}\label{eq:induction-cell-final}
		\max_{1\le k\le 2^h}
		\mu\bigl(Q_{h,k}^{(n)}\triangle Q_{h,k}\bigr)
		\xrightarrow[]{P}0.
	\end{equation}
	
	\smallskip
	\noindent
	\emph{Base case \(h=0\).}
	The root cell is deterministic:
	$
	Q_{0,1}^{(n)}=Q_{0,1}.
	$
	Hence \eqref{eq:induction-cell-final} is immediate for \(h=0\).
	
	We now prove \eqref{eq:induction-threshold-final} for \(h=0\).
	Let \(j=j(0)\), and define
	\[
	F_{0,1}(t)
	:=
	\frac{\mu(Q_{0,1}\cap\{x_j\le t\})}{\mu(Q_{0,1})},
	\qquad
	F_{0,1}^{(n)}(t)
	:=
	\frac{\mu_n(Q_{0,1}\cap\{x_j\le t\})}{\mu_n(Q_{0,1})}.
	\]
	Since \(Q_{0,1}\) is fixed and \(\mu_n(Q_{0,1})=\mu(Q_{0,1})=1\), \eqref{eq:UGC-boxes-final} implies
	\[
	\sup_{t\in\mathbb R}|F_{0,1}^{(n)}(t)-F_{0,1}(t)|
	\le \Delta_n
	\xrightarrow[]{P}0.
	\]
	
	By Definition~\ref{definition1}, \(m_{0,1}\) is the lower conditional mass median on \(Q_{0,1}\), so
	$
	F_{0,1}(m_{0,1})=\frac12.
	$
	
	We next verify a quantitative invertibility estimate for \(F_{0,1}\).
	After reordering coordinates so that the first coordinate is \(x_j\), write
	$
	Q_{0,1}=[a,b]\times S,
	\qquad
	S\subset\mathbb R^{d-1}.
	$
	Then for any \(t_1<t_2\) in \([a,b]\),
	\[
	F_{0,1}(t_2)-F_{0,1}(t_1)
	=
	\frac{\int_{t_1}^{t_2}\int_S f(u,y)\,dy\,du}{\mu(Q_{0,1})}
	\ge
	\frac{m|S|}{\mu(Q_{0,1})}(t_2-t_1).
	\]
	Since
	$
	\mu(Q_{0,1})
	\le
	M(b-a)|S|
	\le
	M L_{\max}|S|,
	$
	where \(L_{\max}\) denotes the maximal side length of the root box \(Q_{0,1}\), we obtain
	\[
	F_{0,1}(t_2)-F_{0,1}(t_1)
	\ge
	\frac{m}{M L_{\max}}(t_2-t_1).
	\]
	Therefore \(F_{0,1}\) is strictly increasing on the projection interval of \(Q_{0,1}\), and with
	$
	c_0:=\frac{m}{M L_{\max}},
	$
	we have
	\begin{equation}\label{eq:invertibility-root-final}
		|F_{0,1}(t)-F_{0,1}(m_{0,1})|
		\ge
		c_0|t-m_{0,1}|
		\qquad
		\text{for all }t\text{ in the projection interval of }Q_{0,1}.
	\end{equation}
	
	Since \(m_{0,1}^{(n)}\) is the lower empirical median on \(Q_{0,1}\), the jump size of
	\(F_{0,1}^{(n)}\) is \(1/n\), and therefore
	\[
	\left|F_{0,1}^{(n)}(m_{0,1}^{(n)})-\frac12\right|
	\le \frac1n.
	\]
	Hence
	\[
	\left|F_{0,1}(m_{0,1}^{(n)})-\frac12\right|
	\le
	\left|F_{0,1}(m_{0,1}^{(n)})-F_{0,1}^{(n)}(m_{0,1}^{(n)})\right|
	+
	\left|F_{0,1}^{(n)}(m_{0,1}^{(n)})-\frac12\right|
	\]
	and thus
	\[
	\left|F_{0,1}(m_{0,1}^{(n)})-\frac12\right|
	\le
	\sup_{t\in\mathbb R}|F_{0,1}^{(n)}(t)-F_{0,1}(t)|
	+
	\frac1n
	\xrightarrow[]{P}0.
	\]
	Applying \eqref{eq:invertibility-root-final}, we conclude that
	$
	|m_{0,1}^{(n)}-m_{0,1}|\xrightarrow[]{P}0.
	$
	Hence both \eqref{eq:induction-threshold-final} and \eqref{eq:induction-cell-final} hold at depth \(0\).
	
	\smallskip
	\noindent
	\emph{Induction step.}
	Fix \(h\in\{1,\dots,H-1\}\), and assume that \eqref{eq:induction-threshold-final} and \eqref{eq:induction-cell-final} hold at depth \(h-1\).
	We prove them at depth \(h\).
	
	\medskip
	\noindent
	\emph{Step 2a: convergence of cells at depth \(h\).}
	Fix \(1\le k\le 2^h\), and let \(a=\lceil k/2\rceil\) be the parent index at depth \(h-1\).
	Write \(j=j(h-1)\), and denote the population and empirical parent cells by
	\[
	P:=Q_{h-1,a},
	\qquad
	P_n:=Q_{h-1,a}^{(n)}.
	\]
	By the induction hypothesis,
	\[
	\mu(P_n\triangle P)\xrightarrow[]{P}0,
	\qquad
	m_{h-1,a}^{(n)}-m_{h-1,a}\xrightarrow[]{P}0.
	\]
	
	We first treat the case where \(Q_{h,k}\) is the left child of \(P\), namely
	\[
	Q_{h,k}=P\cap\{x_j\le m_{h-1,a}\},
	\qquad
	Q_{h,k}^{(n)}=P_n\cap\{x_j\le m_{h-1,a}^{(n)}\}.
	\]
	Then
	\[
	Q_{h,k}^{(n)}\triangle Q_{h,k}
	\subset
	(P_n\triangle P)
	\cup
	\Bigl(
	P\cap
	\{\min(m_{h-1,a},m_{h-1,a}^{(n)})<x_j\le \max(m_{h-1,a},m_{h-1,a}^{(n)})\}
	\Bigr).
	\]
	Taking \(\mu\)-mass gives
	\begin{equation}\label{eq:cell-left-bound-final}
		\mu(Q_{h,k}^{(n)}\triangle Q_{h,k})
		\le
		\mu(P_n\triangle P)
		+
		\mu\Bigl(
		P\cap
		\{|x_j-m_{h-1,a}|\le |m_{h-1,a}^{(n)}-m_{h-1,a}|\}
		\Bigr).
	\end{equation}
	
	The first term tends to zero in probability by the induction hypothesis.
	To bound the second term, write \(P\), after reordering coordinates so that the first coordinate is \(x_j\), in the form
	$
	P=[\alpha,\beta]\times S_P,
	\qquad
	S_P\subset\mathbb R^{d-1}.
	$
	Since \(P\subset Q_{0,1}\) and \(P\) is axis-aligned, each side length of \(S_P\) is bounded by
	the corresponding side length of \(Q_{0,1}\). Hence \(|S_P|\) is bounded by a constant depending
	only on \(d\) and \(Q_{0,1}\).
	Using \(f\le M\), we obtain
	\[
	\mu\Bigl(
	P\cap
	\{|x_j-m_{h-1,a}|\le \varepsilon\}
	\Bigr)
	\le
	2M\,|S_P|\,\varepsilon
	\le
	C\,\varepsilon
	\]
	for all \(\varepsilon>0\), where \(C>0\) depends only on \(d\), \(M\), and \(Q_{0,1}\).
	Taking
	$
	\varepsilon=|m_{h-1,a}^{(n)}-m_{h-1,a}|,
	$
	the second term in \eqref{eq:cell-left-bound-final} tends to zero in probability.
	Therefore
	$
	\mu(Q_{h,k}^{(n)}\triangle Q_{h,k})\xrightarrow[]{P}0.
	$
	
	The right-child case,
	\[
	Q_{h,k}=P\cap\{x_j>m_{h-1,a}\},
	\qquad
	Q_{h,k}^{(n)}=P_n\cap\{x_j>m_{h-1,a}^{(n)}\},
	\]
	is entirely analogous.
	Since there are only finitely many \(k\) at depth \(h\), this proves \eqref{eq:induction-cell-final} at depth \(h\).
	
	\medskip
	\noindent
	\emph{Step 2b: convergence of split thresholds at depth \(h\).}
	Fix \(1\le k\le 2^h\), and let \(j=j(h)\) be the splitting coordinate at depth \(h\).
	Define the population conditional distribution function on the fixed cell \(Q_{h,k}\) by
	$
	F_{h,k}(t)
	:=
	\frac{\mu(Q_{h,k}\cap\{x_j\le t\})}{\mu(Q_{h,k})},
	$
	and the empirical conditional distribution function on the random cell \(Q_{h,k}^{(n)}\) by
	$
	F_{h,k}^{(n)}(t)
	:=
	\frac{\mu_n(Q_{h,k}^{(n)}\cap\{x_j\le t\})}{\mu_n(Q_{h,k}^{(n)})}.
	$
	
	We first prove
	\begin{equation}\label{eq:cdf-convergence-depth-h-final}
		\sup_{t\in\mathbb R}|F_{h,k}^{(n)}(t)-F_{h,k}(t)|
		\xrightarrow[]{P}0.
	\end{equation}
	
	For each \(t\in\mathbb R\),
	\[
	\bigl|
	\mu_n(Q_{h,k}^{(n)}\cap\{x_j\le t\})
	-
	\mu(Q_{h,k}\cap\{x_j\le t\})
	\bigr|
	\]
	is bounded by
	\[
	\bigl|
	\mu_n(Q_{h,k}^{(n)}\cap\{x_j\le t\})
	-
	\mu(Q_{h,k}^{(n)}\cap\{x_j\le t\})
	\bigr|
	+
	\bigl|
	\mu(Q_{h,k}^{(n)}\cap\{x_j\le t\})
	-
	\mu(Q_{h,k}\cap\{x_j\le t\})
	\bigr|.
	\]
	The first term is bounded by \(\Delta_n\), because for each fixed \(t\), the set \(Q_{h,k}^{(n)}\cap\{x_j\le t\}\) is an axis-aligned box contained in \(Q_{0,1}\), and \(\Delta_n\to 0\) by \eqref{eq:UGC-boxes-final}.
	The second term is bounded by
	$
	\mu(Q_{h,k}^{(n)}\triangle Q_{h,k}),
	$
	which tends to zero in probability by \eqref{eq:induction-cell-final}.
	Therefore
	\begin{equation}\label{eq:numerator-conv-final}
		\sup_{t\in\mathbb R}
		\bigl|
		\mu_n(Q_{h,k}^{(n)}\cap\{x_j\le t\})
		-
		\mu(Q_{h,k}\cap\{x_j\le t\})
		\bigr|
		\xrightarrow[]{P}0.
	\end{equation}
	
	Likewise, taking \(t=+\infty\) in the same argument yields
	\begin{equation}\label{eq:denominator-conv-final}
		\mu_n(Q_{h,k}^{(n)})-\mu(Q_{h,k})\xrightarrow[]{P}0.
	\end{equation}
	Since \(\mu(Q_{h,k})=2^{-h}>0\), it follows from \eqref{eq:denominator-conv-final} that
	$
	\mu_n(Q_{h,k}^{(n)})\xrightarrow[]{P}2^{-h},
	$
	and hence the denominators are bounded away from zero in probability.
	
	We now pass from \eqref{eq:numerator-conv-final} and \eqref{eq:denominator-conv-final} to the conditional CDFs.
	For every \(t\in\mathbb R\),
	\[
	F_{h,k}^{(n)}(t)-F_{h,k}(t)
	=
	\frac{
		\mu_n(Q_{h,k}^{(n)}\cap\{x_j\le t\})-\mu(Q_{h,k}\cap\{x_j\le t\})
	}{
		\mu_n(Q_{h,k}^{(n)})
	}
	+
	\mu(Q_{h,k}\cap\{x_j\le t\})
	\left(
	\frac{1}{\mu_n(Q_{h,k}^{(n)})}
	-
	\frac{1}{\mu(Q_{h,k})}
	\right).
	\]
	Therefore,
	\[
	|F_{h,k}^{(n)}(t)-F_{h,k}(t)|
	\le
	\frac{
		\bigl|
		\mu_n(Q_{h,k}^{(n)}\cap\{x_j\le t\})-\mu(Q_{h,k}\cap\{x_j\le t\})
		\bigr|
	}{
		\mu_n(Q_{h,k}^{(n)})
	}
	+
	\mu(Q_{h,k})
	\left|
	\frac{1}{\mu_n(Q_{h,k}^{(n)})}
	-
	\frac{1}{\mu(Q_{h,k})}
	\right|.
	\]
	Taking the supremum over \(t\in\mathbb R\), we obtain
	\[
	\sup_{t\in\mathbb R}|F_{h,k}^{(n)}(t)-F_{h,k}(t)|
	\le
	\frac{
		\sup_{t\in\mathbb R}
		\bigl|
		\mu_n(Q_{h,k}^{(n)}\cap\{x_j\le t\})
		-
		\mu(Q_{h,k}\cap\{x_j\le t\})
		\bigr|
	}{
		\mu_n(Q_{h,k}^{(n)})
	}
	+
	\mu(Q_{h,k})
	\left|
	\frac{1}{\mu_n(Q_{h,k}^{(n)})}
	-
	\frac{1}{\mu(Q_{h,k})}
	\right|.
	\]
	The first term converges to zero in probability by \eqref{eq:numerator-conv-final} together with the fact that
	\(\mu_n(Q_{h,k}^{(n)})\) is bounded away from zero in probability.
	For the second term, \eqref{eq:denominator-conv-final} implies
	\[
	\left|
	\frac{1}{\mu_n(Q_{h,k}^{(n)})}
	-
	\frac{1}{\mu(Q_{h,k})}
	\right|
	=
	\frac{
		\bigl|\mu(Q_{h,k})-\mu_n(Q_{h,k}^{(n)})\bigr|
	}{
		\mu_n(Q_{h,k}^{(n)})\,\mu(Q_{h,k})
	}
	\xrightarrow[]{P}0.
	\]
	Hence \eqref{eq:cdf-convergence-depth-h-final} holds.
	
	Now, by Definition~8.1(2), \(m_{h,k}\) is the lower conditional mass median on \(Q_{h,k}\), so
	$
	F_{h,k}(m_{h,k})=\frac12.
	$
	
	We next verify a quantitative invertibility estimate for \(F_{h,k}\).
	Write \(Q_{h,k}\), after reordering coordinates so that the first coordinate is \(x_j\), in the form
	\[
	Q_{h,k}=[a_{h,k},b_{h,k}]\times S_{h,k},
	\qquad
	S_{h,k}\subset\mathbb R^{d-1}.
	\]
	Then for any \(t_1<t_2\) in \([a_{h,k},b_{h,k}]\),
	\[
	F_{h,k}(t_2)-F_{h,k}(t_1)
	=
	\frac{\int_{t_1}^{t_2}\int_{S_{h,k}} f(u,y)\,dy\,du}{\mu(Q_{h,k})}
	\ge
	\frac{m|S_{h,k}|}{\mu(Q_{h,k})}(t_2-t_1).
	\]
	Also,
	\[
	\mu(Q_{h,k})
	\le
	M(b_{h,k}-a_{h,k})|S_{h,k}|
	\le
	M L_{\max}|S_{h,k}|,
	\]
	where \(L_{\max}\) is the maximal side length of the root box \(Q_{0,1}\).
	Hence
	\[
	F_{h,k}(t_2)-F_{h,k}(t_1)
	\ge
	\frac{m}{M L_{\max}}(t_2-t_1).
	\]
	Thus, with the same constant
	$
	c_0:=\frac{m}{M L_{\max}},
	$
	we obtain
	\begin{equation}\label{eq:invertibility-depth-h-final}
		|F_{h,k}(t)-F_{h,k}(m_{h,k})|
		\ge
		c_0|t-m_{h,k}|
		\qquad
		\text{for all }t\text{ in the projection interval of }Q_{h,k}.
	\end{equation}
	
	Let
	$
	N_{h,k}^{(n)}:=n\,\mu_n(Q_{h,k}^{(n)})
	$
	be the number of sample points in \(Q_{h,k}^{(n)}\).
	Since \(m_{h,k}^{(n)}\) is the lower empirical median on \(Q_{h,k}^{(n)}\), the jump size of
	\(F_{h,k}^{(n)}\) is \(1/N_{h,k}^{(n)}\), and therefore
	\[
	\left|F_{h,k}^{(n)}(m_{h,k}^{(n)})-\frac12\right|
	\le
	\frac{1}{N_{h,k}^{(n)}}
	=
	\frac{1}{n\,\mu_n(Q_{h,k}^{(n)})}.
	\]
	Since \(\mu_n(Q_{h,k}^{(n)})\xrightarrow[]{P}2^{-h}>0\), it follows that
	$
	\left|F_{h,k}^{(n)}(m_{h,k}^{(n)})-\frac12\right|
	\xrightarrow[]{P}0.
	$
	Combining this with \eqref{eq:cdf-convergence-depth-h-final}, we obtain
	\[
	\left|F_{h,k}(m_{h,k}^{(n)})-\frac12\right|
	\le
	\left|F_{h,k}(m_{h,k}^{(n)})-F_{h,k}^{(n)}(m_{h,k}^{(n)})\right|
	+
	\left|F_{h,k}^{(n)}(m_{h,k}^{(n)})-\frac12\right|
	\xrightarrow[]{P}0.
	\]
	Applying \eqref{eq:invertibility-depth-h-final}, we conclude that
	$
	|m_{h,k}^{(n)}-m_{h,k}|\xrightarrow[]{P}0.
	$
	Since there are only finitely many \(k\) at depth \(h\), this proves \eqref{eq:induction-threshold-final} at depth \(h\).
	
	By induction, both \eqref{eq:induction-threshold-final} and \eqref{eq:induction-cell-final} hold for all \(0\le h\le H-1\). In particular,
	\[
	\max_{\substack{0\le h\le H-1\\1\le k\le 2^h}}
	|m_{h,k}^{(n)}-m_{h,k}|
	\xrightarrow[]{P}0.
	\]
	
	\medskip
	\noindent
	\textbf{Step 3: convergence of address prefixes.}
	Define
	$
	\delta_n
	:=
	\max_{\substack{0\le h\le H-1\\1\le k\le 2^h}}
	|m_{h,k}^{(n)}-m_{h,k}|.
	$
	By Step 2, \(\delta_n\xrightarrow[]{P}0\).
	
	Suppose that \(x\in\mathcal U_n(H)\). Then there exists a first level
	\[
	h_0\in\{0,\dots,H-1\}
	\]
	at which the empirical and population routing decisions differ.
	By definition of \(h_0\), the first \(h_0\) digits agree, so both trees place \(x\) in the same parent cell at depth \(h_0\), say \(Q_{h_0,k_0}\).
	At level \(h_0\), the population split threshold is \(m_{h_0,k_0}\), while the empirical split threshold is \(m_{h_0,k_0}^{(n)}\).
	Since
	$
	|m_{h_0,k_0}^{(n)}-m_{h_0,k_0}|\le \delta_n,
	$
	a change of routing decision can occur only if
	$
	|x_{j(h_0)}-m_{h_0,k_0}|\le \delta_n.
	$
	Therefore,
	\[
	\mathcal U_n(H)
	\subset
	\bigcup_{h=0}^{H-1}\ \bigcup_{k=1}^{2^h}
	\Bigl\{
	x\in Q_{h,k}:\ |x_{j(h)}-m_{h,k}|\le \delta_n
	\Bigr\}.
	\]
	
	Since \(H\) is fixed, the number of sets in the union is finite. For each such set, because \(Q_{h,k}\) is an axis-aligned box and \(f\le M\), its \(\mu\)-mass is bounded by
	$
	C_{h,k}\,\delta_n
	$
	for some finite constant \(C_{h,k}\) depending only on \(d,M\), and \(Q_{0,1}\).
	Summing over the finitely many indices \((h,k)\) with \(0\le h\le H-1\), \(1\le k\le 2^h\), we obtain
	$
	\mu(\mathcal U_n(H))
	\le
	C_H\,\delta_n
	$
	for some constant \(C_H>0\) depending only on \(H,d,M\), and \(Q_{0,1}\).
	Hence
	$
	\mu(\mathcal U_n(H))\xrightarrow[]{P}0.
	$
	This completes the proof.
\end{proof}

\begin{proof}[\textbf{Proof of Theorem~\ref{thm:nn_plateau}}]
	\leavevmode\par\smallskip
	Fix any depth-$H$ address-restricted permutation $\sigma$.
	For each $i\in[n]$, since $\delta_i=\min_{j\in[n]}\|x_i-y_j\|_2$, we have
	$\|x_i-y_{\sigma(i)}\|_2\ge \delta_i$, hence
	$\Gamma_i^{\mathrm{NN}}=(\|x_i-y_{\sigma(i)}\|_2^2-\delta_i^2)_+
	=\|x_i-y_{\sigma(i)}\|_2^2-\delta_i^2$.
	Therefore,
	\[
	\mathrm{RRM}_n^2
	=\frac1n\sum_{i=1}^n\|x_i-y_{\sigma(i)}\|_2^2
	=\frac1n\sum_{i=1}^n\delta_i^2+\frac1n\sum_{i=1}^n\Gamma_i^{\mathrm{NN}}
	\ge \frac1n\sum_{i=1}^n\delta_i^2+\frac1n\sum_{i\in I_+}\Gamma_i^{\mathrm{NN}}
	=\frac1n\sum_{i=1}^n\delta_i^2+\alpha_H\overline{\Gamma}_H^{\mathrm{NN}},
	\]
	with $\overline{\Gamma}_H^{\mathrm{NN}}=0$ when $I_+=\emptyset$.
\end{proof}

%

\bibliographystyle{siamplain}
\bibliography{references}

\end{document}